%% file: main.tex
\documentclass[10pt,twocolumn,letterpaper]{article}

\usepackage{cvpr}              


\definecolor{cvprblue}{rgb}{0.21,0.49,0.74}
\usepackage[pagebackref,breaklinks,colorlinks,citecolor=cvprblue]{hyperref}
\usepackage{amsmath,amssymb,amsthm, amsfonts}
\usepackage{algorithm}
\usepackage{algpseudocode}
\usepackage{graphicx}
\usepackage{textcomp}
\usepackage{multirow}
\usepackage{epsfig}
\usepackage{bm}
\usepackage{colortbl}
\usepackage{booktabs}
\usepackage{subcaption}
\usepackage{amsmath}
\usepackage{amssymb}
\usepackage{pifont}
\usepackage{url}
\usepackage[margin=0.7in]{geometry}
\usepackage{framed,multirow}
\usepackage{booktabs}
\usepackage{tabularray}
\UseTblrLibrary{booktabs}
\usepackage{boldline} 
\usepackage{amssymb,amsmath}
\usepackage{romannum}
\AtBeginDocument{\pagenumbering{arabic}}
\usepackage{pifont}

\usepackage{latexsym}
\usepackage{mathrsfs}
\usepackage{float}
\usepackage{soul}
\usepackage{url}
\usepackage{makecell}
\usepackage{subcaption}
\usepackage{adjustbox}
\usepackage{blindtext}
\usepackage{graphicx}
\usepackage{marvosym}
\usepackage{rotating}
\usepackage[listings,skins,breakable]{tcolorbox}
\usepackage[colorlinks]{hyperref}
\usepackage[printonlyused,withpage]{acronym}
\usepackage{caption}
\usepackage{subcaption}
\usepackage{xcolor}       

\newtheorem{theorem}{Theorem}
\definecolor{aliceblue}{RGB}{240,248,255}

\newtheorem{Theorem}{Theorem}
\definecolor{tabhighlight}{HTML}{e5e5e5}
\newcommand{\omark}{\ding{51}} 

\newcommand*{\affaddr}[1]{#1} 

\newcommand{\tablestyle}[2]{\setlength{\tabcolsep}{#1}\renewcommand{\arraystretch}{#2}\centering\footnotesize}
\def\BibTeX{{\rm B\kern-.05em{\sc i\kern-.025em b}\kern-.08em
    T\kern-.1667em\lower.7ex\hbox{E}\kern-.125emX}}
\def\paperID{13843} 
\def\confName{CVPR}
\def\confYear{2026}
\newcommand{\ours}{MedQwen}
\title{Sparse Spectral LoRA: Routed Experts for Medical VLMs}
\author{%
    \makebox[\linewidth][c]{%
    Omid Nejati Manzari\textsuperscript{\Letter}
    \hspace{1em} Hojat Asgariandehkordi
    \hspace{1em} Taha Koleilat
    \hspace{1em} Yiming Xiao
    \hspace{1em} Hassan Rivaz
    }
    \\[1ex]
    \affaddr{Concordia University, Montreal, Canada}
    \\[1ex]  
    \url{https://omid-nejati.github.io/MedQwen/}
    } 

\begin{document}
\maketitle
\input{sec/0_abstract}

\renewcommand{\thefootnote}{}
\footnote{\Letter\ Corresponding Author: \href{mailto:omid.nejatimanzari@mail.concordia.ca}{omid.nejatimanzari@mail.concordia.ca}}
\input{sec/1_intro}
\input{sec/2_method}
\input{sec/3_results}
\input{sec/4_conclusion}
{
    \small
    \bibliographystyle{ieeenat_fullname}
    \bibliography{main}
}

\input{sec/X_suppl}

\end{document}

%% file: sec/0_abstract.tex
\begin{abstract}
Large vision–language models (VLMs) excel on general benchmarks but often lack robustness in medical imaging, where heterogeneous supervision induces cross-dataset interference and sensitivity to data regime (i.e., how the supervisory signals are mixed).  In realistic clinical workflows, data and tasks arrive sequentially, so naive continual training further leads to catastrophic forgetting.
To address these challenges, we propose \ours, a parameter-efficient medical VLM that couples a spectrally routed Mixture-of-Experts (MoE) with a theoretically grounded scaling rule that aligns low-rank updates with a full-rank, fully fine-tuned MoE, without changing the base architecture. Concretely, we initialize each expert from non-overlapping singular value decomposition (SVD) segments of the pretrained weight and introduce a residual compensation and scaling scheme to enable stable expert specialization and consistent routing under distribution shift.
Across 23 medical datasets covering visual question answering, report generation, radiology classification, and hallucination mitigation, \ours\ achieves strong, reliable performance: it approaches full fine-tuning on zero-shot classification with 339$\times$ fewer trainable parameters, and reduces sequential forgetting to $\sim$5\% where strong baselines degrade by $>$20–50\%.

\end{abstract}

%% file: sec/1_intro.tex
\section{Introduction}

Vision-Language Models (VLMs) have demonstrated strong generalization across open-vocabulary tasks by jointly learning from paired images and text~\cite{lu2019vilbert, radford2021learning,li2023llava, moor2023med}.
Despite impressive general-domain results, off-the-shelf VLMs require adaptation to medical tasks \cite{tiu2022expert, bhayana2023performance,hayden2024performance, nisar2024d}. Specialized medical VLMs, including Med-Flamingo~\cite{moor2023med}, HealthGPT~\cite{lin2025healthgpt}, and LLaVA-Med~\cite{li2023llava}, have shown strong performance in medical visual question answering (VQA) and related tasks. However, these models often lack the versatility needed for both discriminative and generative clinical tasks. Moreover, their reliance on large backbones (e.g., the 7B-parameter LLaMA in LLaVA-Med) incurs heavy training and inference costs. Parameter-Efficient Fine-Tuning (PEFT) methods such as Low-Rank Adaptation (LoRA)~\cite{hu2022lora} mitigate these challenges by updating only lightweight low-rank matrices while keeping pre-trained weights frozen, enabling parameter-efficient domain adaptation of medical VLMs~\cite{liu2024improved, yin2023lamm}, but this alone has not resolved the reliability and cost constraints that are important in practice.

\begin{figure}[t] 
    \centering
    \includegraphics[width=.5\textwidth]{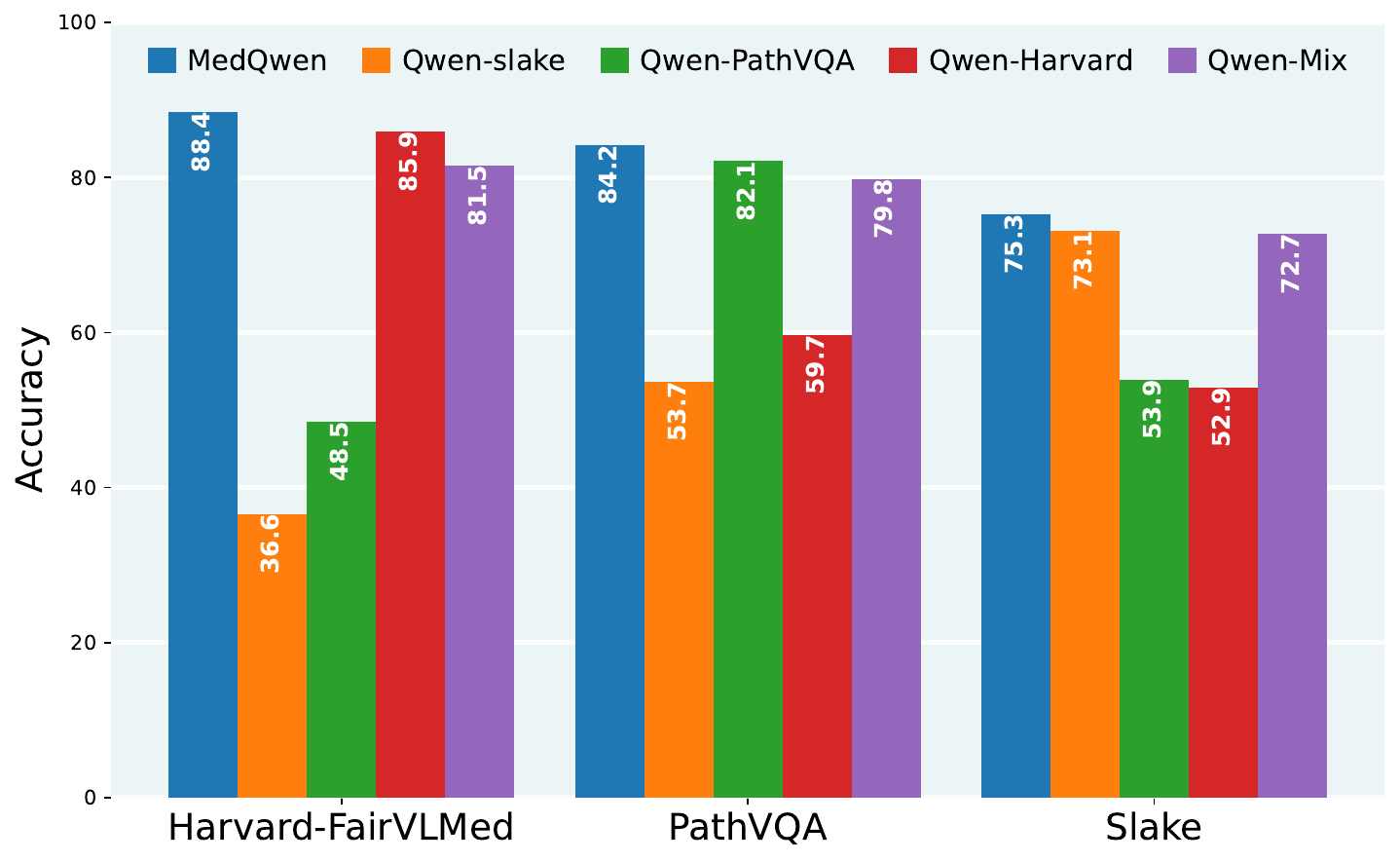}
    \caption{Model performances on three benchmarks when trained with different data configurations.}
    \label{fig:accuracy}
    \vspace{-10pt}
\end{figure}

Our investigations reveal that current VLMs fine-tuned using standard LoRA exhibit high sensitivity to the \emph{training data regime} \cite{li2024mixlora, chen2024llava}: models tuned on one medical dataset (e.g., Qwen-Slake in Fig.~\ref{fig:accuracy}) often regress on others, and naively mixing heterogeneous sources introduces cross-dataset interference (e.g., Qwen-Mix in Fig.~\ref{fig:accuracy}). This data-regime instability limits scalability across modalities, anatomies, and clinical tasks. Moreover, while tuning with broader data often reduces hallucinations~\cite{gunjal2024detecting, liu2024mitigating, wang2024vigc}, it could increase \emph{catastrophic forgetting} when tasks arrive sequentially, which is pervasive in clinical settings.


To address these limitations, we introduce \ours, a routed, \emph{SVD-structured LoRA Mixture of Experts (MoE)} that turns the singular-value spectrum of pre-trained weights to a set of non-overlapping spectral priors. Each expert is initialized from a distinct SVD segment, and a lightweight router activates only the few experts whose priors are most relevant for the current input. This design encourages specialization without discarding useful pre-training structure and directly targets cross-dataset interference. To stabilize optimization, we derive a theoretical scaling scheme that \emph{aligns low-rank updates with the gradient geometry of a fully fine-tuned MoE}, improving convergence without changing the base architecture or optimizer. 

Extensive experiments on 23 medical datasets spanning VQA, report generation, and classification validate the approach. MedQwen reaches SOTA accuracy among PEFT methods while preserving efficiency, approaches full fine-tuning quality, and exhibits robustness to sequential training: on a Harvard-FairVLMed to PathVQA protocol, a standard LoRA model loses over 50\% of its original accuracy, MoE-LoRA loses over 20\%, whereas MedQwen’s accuracy drops by only 5\%. We further analyze expert sparsity, rank, and routing, demonstrating favorable compute–performance trade-offs within a single-GPU budget. Our contributions are:
\begin{itemize}
\item \textbf{Sparse spectral LoRA}: We introduce an SVD-structured MoE that partitions pre-trained weights into non-overlapping spectral segments and routes inputs to the most relevant low-rank experts, reducing cross-dataset interference and preserving useful priors.

\item  \textbf{Optimization alignment with scaling}: We establish conditions under which LoRA-MoE matches full-rank MoE dynamics, provide a residual-matching initialization, and derive a theoretical scaling factor that preserves gradient geometry under low rank.

\item \textbf{Unified, efficient medical VLM}: Without architectural changes, \ours\ achieves SOTA results across a wide set of medical VQA, report generation, and classification tasks while remaining efficient.

\item \textbf{Comprehensive analysis}: We ablate expert count, activation sparsity, rank, and scaling, and study convergence and compute trade-offs.
\end{itemize}


%
%
\section{Related Work}
\label{sec:2}

\textbf{VLMs in Medicine.} Medical VLMs (Med-VLMs) have yielded excellent capabilities in image interpretation and VQA~\cite{nam2025multimodal, liu2025application}. XrayGPT~\cite{thawakar2024xraygpt} integrates a specialized visual encoder (MedClip)~\cite{wang2022medclip} with a fine-tuned VLM, employing a straightforward linear transformation layer to achieve alignment between visual and textual modalities. LLaVA-Med~\cite{li2023llava} further refines visual-textual alignment in medical contexts by curating high-quality image-text pairs from PubMed publications and synthetic VQA datasets. BiomedGPT~\cite{luo2024biomedgpt} utilizes a BERT-style encoder and a GPT-style decoder architecture, trained on multimodal datasets, outperforming much larger commercial models like Med-PaLM~\cite{singhal2025toward}.  HuatuoGPT-Vision~\cite{chen2024huatuogpt} scale supervision with PubMedVision dataset (1.3 million medical samples) to improve adaptability across tasks. Despite this progress, most Med-VLMs are optimized for conversational VQA/report generation and rely on PEFT, and broad versatility across discriminative and generative clinical tasks typically requires additional task-specific heads or training.

\noindent\textbf{Mixture of Experts.}
MoE models expand model capacity by activating only a subset of experts per token, enabling a sub-linear increase in computational cost in Transformers while retaining large representation power~\cite{jacobs1991adaptive, chen2023octavius, shazeer2017outrageously, zhang2025more}. Recent MoEs differ in their expert selection and routing strategies. LLaVA-MoLE \cite{chen2024llava} routes tokens to domain-specific experts within Transformer layers, effectively reducing data interference. Other MoE-based methods improve domain adaptation and lifelong learning, such as MoRAL \cite{yang2024moral}, LoRAMoE~\cite{dou2023loramoe}, and PESC~\cite{wu2024parameter}, while MoE-LoRA \cite{luo2024moelora} and MoCLE \cite{gou2023mixture} allocate or activate task-specific parameters based on layer or instruction clusters. Our work builds on this line by mixing LoRA experts across diverse medical datasets, highlighting MoE's potential to mitigate cross-dataset conflicts in multimodal medical settings.

\noindent\textbf{Parameter-Efficient Fine-Tuning.}
Given the large size of foundation models, recent research has focused on developing PEFT methods~\cite{hu2022lora, cheng2025revisiting, lester2021power, li2025ensembles}, which reduce fine-tuning costs by updating only a small subset of parameters.
PEFT approaches can be grouped into three main directions:
(1) Additive Rank/Scale: methods such as AdaLoRA~\cite{zhang2023adaloRA} and rsLoRA~\cite{wang2024roselora} dynamically allocate parameter budgets or adjust scaling factors;
(2) Architectural Improvements:  DoRA~\cite{liu2024dora} decomposes weights into magnitude and direction, while RoseLoRA~\cite{wang2024roselora} applies sparse low-rank adaptation;
and (3) Optimization and Initialization: including PiSSA \cite{meng2024pissa}, MiLoRA \cite{wang2025milora}, LoRA-Pro \cite{wang2024pro}, and LoRA-GA~\cite{wang2024ga}, which refine singular components or align low-rank gradients with full fine-tuning.
Despite these advances, a notable performance gap often remains between PEFT methods and full fine-tuning.

\noindent\textbf{SVD-based VLM Adaptation.} 
SVD is a classical tool for compression and latent semantic analysis \citep{horasan2019alternate, hsu2021language,tanwar2018dimensionality}, and has recently emerged as a promising approach for PEFT in VLMs~\citep{yuan2023asvd, wang2024svd, koleilat2025singular}. AdaLoRA \citep{zhang2022adaptive} and SARA \citep{gu2024sara} employ SVD to identify suitable ranks, improving parameter efficiency. PiSSA \citep{meng2024pissa} and MiLoRA \citep{wang2024milora} use SVD for LoRA initialization; PiSSA fine-tunes dominant components for faster convergence, while MiLoRA targets minor components to enhance task-specific adaptability.
In contrast, our approach (1) partitions spectral components into non-overlapping experts, (2) adaptively selects relevant priors via routing, and (3) corrects weight misalignment and gradient dynamics through theoretically grounded scaling.


\begin{figure*}
  \centering
        \includegraphics[width=\textwidth]{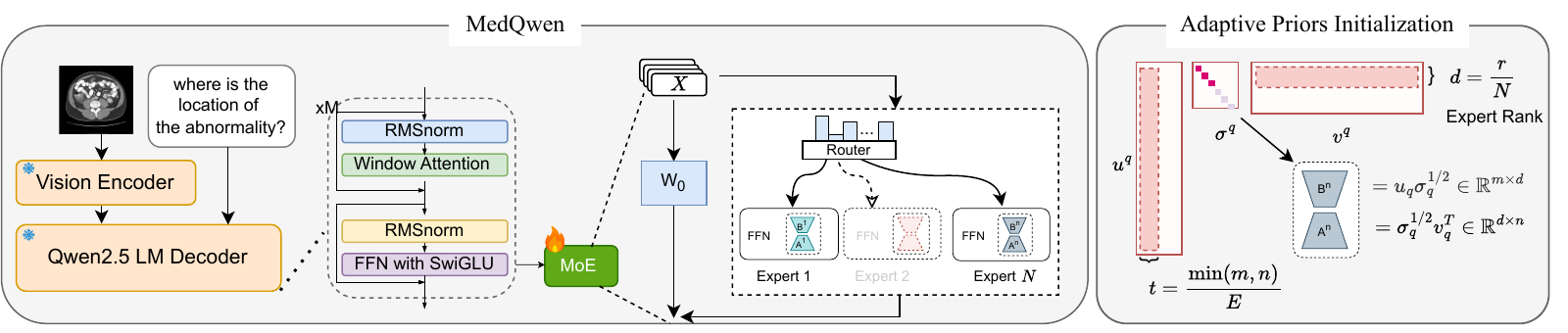}
        \captionof{figure}{An overview of the proposed \ours\ approach.}
    \label{fig:arch}
    \vspace{-1em}
\end{figure*}

%% file: sec/2_method.tex
\section{Background and Motivation}

\subsection{SVD Initialization Overview }   \label{sec:svd}

Initializing LoRA matrices with SVD is a common strategy to keep key structure from pre-trained weights~\cite{zhao2024galorememoryefficientllmtraining, meng2024pissa, wang2024kasaknowledge, lu2024twinmerging}. Existing methods differ mainly in which spectral components they update: PiSSA~\cite{meng2024pissa} fine-tunes the dominant singular directions, while MiLoRA~\cite{wang2024miloraharnessingminorsingular} targets the minor ones to improve task specificity.

To establish a unified theoretical framework that links SVD-based initialization methods with fine-tuning (FT), let the pre-trained weight matrix be $W^{(0)} \in \mathbb{R}^{m \times n}$, represented by its SVD $W^{(0)} = U S V^\top$. Assuming $h = \min(m, n)$ and a target LoRA rank $r$, we proceed to decompose $W^{(0)}$ into rank-$r$ blocks as follows:
\begin{align}
W^{(0)} = \sum_{i=0}^{l} U_i S_i V_i^\top,
\end{align}
Here, \( l = \frac{h}{r} - 1 \), and \( i \) refers to the segment \( [i \cdot r : (i+1) \cdot r] \).
For each segment, the submatrices are defined as
\( U_i = U_{[:,i \cdot r : (i+1) \cdot r]} \in \mathbb{R}^{m \times r} \),
\( S_i = S_{[i \cdot r : (i+1) \cdot r,\, i \cdot r : (i+1) \cdot r]} \in \mathbb{R}^{r \times r} \),
and \( V_i = V_{[:,i \cdot r : (i+1) \cdot r]} \in \mathbb{R}^{n \times r} \).
The subsequent FT procedures in different relevant techniques are expressed as:

\begin{equation}
\begin{aligned}
\text{Full FT}: &\quad U_0 S_0 V_0^\top + U_1 S_1 V_1^\top + \cdots + U_l S_l V_l^\top \\
\text{MiLoRA}: &\quad (U_0 S_0 V_0^\top + \cdots + U_{l-1} S_{l-1} V_{l-1}^\top)^* + U_l S_l V_l^\top \\
\text{PiSSA}: &\quad U_0 S_0 V_0^\top + (U_1 S_1 V_1^\top + \cdots + U_l S_l V_l^\top)^* \\
\text{KaSA}: &\quad (U_0 S_0 V_0^\top + \cdots + U_{l-1} S_{l-1} V_{l-1}^\top)^* + U^{\text{r}} S^{\text{r}} {V^{\text{r}}}^\top
\end{aligned}
\end{equation}
where \((\cdot)^*\) denotes frozen components. The trainable LoRA parameters are:
\begin{align}
B = U_i S_i^{1/2} \in \mathbb{R}^{m \times r}, \quad A = S_i^{1/2} V_i^\top \in \mathbb{R}^{r \times n}.\label{eq:ba}
\end{align}
We observe that PiSSA \cite{meng2024pissa} freezes the minor singular values and fine-tunes only the components \( U_0 S_0 V_0^\top \) with the largest norms, thereby achieving an optimal approximation to \( W^{(0)}\).
In contrast, MiLoRA \cite{wang2025milora} and KaSA \cite{wang2024kasaknowledge} retain segments \( 0 \sim (l-1) \) as preserved pre-trained knowledge, while KaSA regards the minor component \( U_l S_l V_l^\top \) as noise and replaces it with a new random component \( U^{\text{r}} S^{\text{r}} V^{\text{r}\top}\).
Empirically, PiSSA converges faster by emphasizing the principal singular values, whereas MiLoRA and KaSA maintain more pre-trained knowledge, leading to improved final performance. This observation highlights a key trade-off between focusing on principal components and emphasizing minor components.

According to \cite{goat}, the performance of FT varies across different singular value segments and depends on the dataset; for example, \( x = l \) yields superior performance on the Slake dataset, whereas \( x = 0 \) performs better on the PathVQA dataset.
Additionally, the intermediate segments contribute significantly to model performance.
When \( r = 128 \), the best results are typically achieved in the middle range of segments.
These results indicate that each singular value segment carries task-dependent information, thereby motivating an adaptive mechanism that enables the model to automatically select relevant segments during optimization while preserving the intrinsic structure of the pre-trained matrix.

\begin{figure}[t] 
    \centering
    \begin{subfigure}[t]{0.48\columnwidth}
        \centering
        \includegraphics[width=\linewidth]{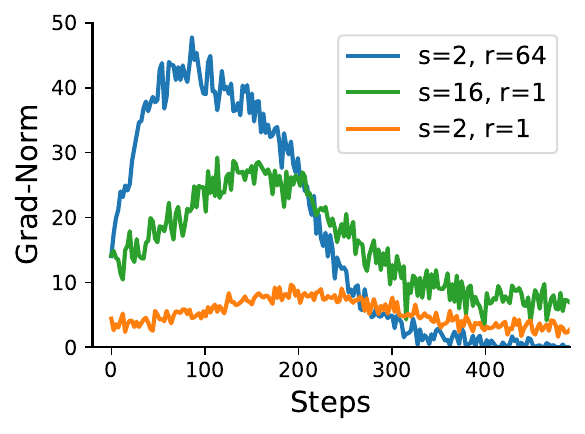}
        \caption{}
        \label{fig:grad_norm}
    \end{subfigure}
    \hfill
    \begin{subfigure}[t]{0.48\columnwidth}
        \centering
        \includegraphics[width=\linewidth]{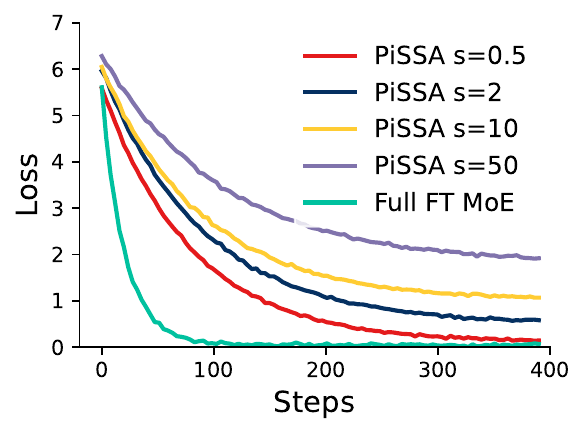}
        \caption{}
        \label{fig:pissa_loss}
    \end{subfigure}
    \vspace{-5pt}
    \caption{SVD initialization \vs scaling $s$ and rank $r$ \label{fig:pissa}.}
    \vspace{-10pt}
    \label{fig:training_comparison}
\end{figure}

\subsection{Scaling Factor Overview} \label{sec:scale}

In LoRA, the parameterization \( W = W^{(0)} + sBA \) is widely adopted, yet the role of the scaling factor \( s \) is often under-analyzed.
\citet{biderman2024lora} suggest setting \( s = 2 \),  the SVD-based approach of PiSSA \cite{meng2024pissa} empirically eliminates the dependency of \( sBA \) on \( s \) by dividing \( A \) and \( B \) by \( \sqrt{\frac{1}{s}} \), and \citet{tian2024hydraloraasymmetricloraarchitecture} report that employing larger scaling factors in LoRA-MoE architectures improves performance.

To analyze this effect, as depicted in Fig.~\ref{fig:pissa_loss}, we vary \( s \) in the SVD-based LoRA with a fixed rank and observe that \( s \) continues to influence the convergence rate.
To capture this effect quantitatively, we define:
\begin{equation}\label{def:eg}
    \tilde{W} = W + sBA, \qquad
    \tilde{g} = \frac{\partial L}{\partial \tilde{W}},
\end{equation}
where \( \tilde{W} \) and \( \tilde{g} \) denote the equivalent weight and gradient, respectively (see Fig. \ref{fig:optimization}). Also, $L$ denotes the loss function. At the optimization step \( t \), letting \( g_t \) be the full FT gradient and \( A \), \( B \) denote low-rank factors, the equivalent gradient is:
\begin{equation}
    \tilde{g}_t = s^2 \left( B_t {B_t}^\top g_t + g_t {A_t}^\top A_t \right).
\end{equation}
For SVD-based initialization, we obtain:
\begin{align}
\tilde{W} &\propto sBA = s \left( \frac{1}{s} U_{r} S_{r} V_{r}^\top \right) = U_{r} S_{r} V_{r}^\top, \\
\tilde{g} &= s^2 \left( \frac{1}{s} U_{r} S_{r} U_{r}^\top g + \frac{1}{s} g V_{r} S_{r} V_{r}^\top \right)  \notag\\
          &= s \left( U_{r} S_{r} U_{r}^\top g + g V_{r} S_{r} V_{r}^\top \right).
\end{align}

Thus, the equivalent weight is independent of \( s \), but the equivalent gradient scales linearly with it.
As illustrated in Fig.~\ref{fig:pissa}, a small scaling factor (\( s = 2 \)) results in slow convergence, whereas increasing \( s \) accelerates optimization. However, this trend does not extend indefinitely. Beyond a certain point, overly large $s$ values amplify the gradient too strongly, leading to unstable updates and degraded generalization. 
In practice, we observe that moderate scaling (e.g., $s \in [4, 16]$) achieves the best trade-off 
between convergence speed and stability.

When examining different ranks, we observe that with a small rank (e.g., \( r = 1 \)), the gradient norm diminishes, leading to a notable performance gap  relative to e.g., \( r = 64 \).
However, employing a higher scaling factor (\( s = 16 \)) restores the gradient norm and narrows the gap.
This effect is particularly advantageous in MoE configurations, where rank reduction across experts can be compensated by larger scaling factors, consistent with \citet{tian2024hydraloraasymmetricloraarchitecture}.


\section{Methodology}
\label{sec:methods}

\subsection{LoRA-MoE Architecture}
LoRA-MoE introduces \(N\) LoRA experts and employs a router to dynamically select and combine their outputs,  mitigating task interference by isolating task-specific adaptations while maintaining parameter efficiency. Let the parameters of the \(i\)-th LoRA expert be denoted by \(\{A_i, B_i\}\), and let \(R\) denote the router. Here, \(B_i \in \mathbb{R}^{m \times d}\) and \(A_i \in \mathbb{R}^{d \times n}\), where \(d = \frac{r}{N}\). The output of the MoE system can then be expressed as:
\begin{equation}\label{Eq:MOE}
    y = W^{(0)} x + \sum_{i=1}^N R(\mathbf{x})_i \left( s B_i A_i \mathbf{x} \right),
\end{equation}
where \(R(\mathbf{x})_i\) is the router score for the \(i\)-th expert, $W^{(0)}$ is the pre-trained weight matrix, and \(s\) is the LoRA scaling factor.

In dense gating, the router consists of a dense layer with trainable parameters \( W_g \).
The gating scores are computed using a softmax function:
\begin{equation}
     R(\mathbf{x})_i = \text{softmax}(W_g^\top \mathbf{x}).
\end{equation}
Dense gating is employed in the soft routing strategy, where the outputs of multiple experts are weighted and combined according to their gating scores.

We adopt the Mixtral top-\(k\) router~\cite{jiang2024mixtral}.
Specifically, an MoE layer comprises $N$ linear modules $\{W_{1}, \dots, W_{N}\}$ and a router $R_z \in \mathbb{R}^{m \times N}$ that assigns input $\mathbf{x}$ to experts based on routing scores:
\begin{equation}
    p^i(\mathbf{x}) = \frac{\exp(z^i(\mathbf{x}))}{\sum_{j=1}^{N} \exp(z^j(\mathbf{x}))},
\end{equation}
where $z(\mathbf{x}) = W_z \mathbf{x}$ and $p^i(\mathbf{x})$ is the score for expert $i$.
Let $S_k(x) \subset \{1,\ldots,N\}$ denote the indices of the top-$k$ experts
according to their gating logits $z(x)$, ensuring $|S_k(x)| = k$ and $z_i(x) > z_j(x)$ for all $i \in S_k(x)$ and $j \notin S_k(x)$.
The normalized top-$k$ gating weights are defined as:
\begin{equation}
R(\mathbf{x})_i =
\begin{cases}
\dfrac{\exp(z_i(x))}{
\sum_{j \in S_k(x)} \exp(z_j(x))}, & i \in S_k(x),\\[8pt]
0, & \text{otherwise.}
\end{cases}
\label{eq:mol_weights}
\end{equation}

The final LoRA MoE output combines the frozen base layer with the weighted contributions of the selected LoRA experts:
\begin{equation}
\mathrm{MoE}_{\text{LoRA}}(\mathbf{x}) =
W^{(0)} x +
\sum_{i \in S_k(x)}
R(\mathbf{x})_i \left( s B_i A_i \mathbf{x} \right).
\label{eq:mol_output}
\end{equation}

During training, only the experts indexed by $S_k(x)$ and their corresponding gating paths receive gradient updates. Since \( k \ll N \), LoRA-MoE activates significantly fewer parameters than a dense MoE.

\begin{figure}[t] 
    \centering
    \includegraphics[width=.47\textwidth]{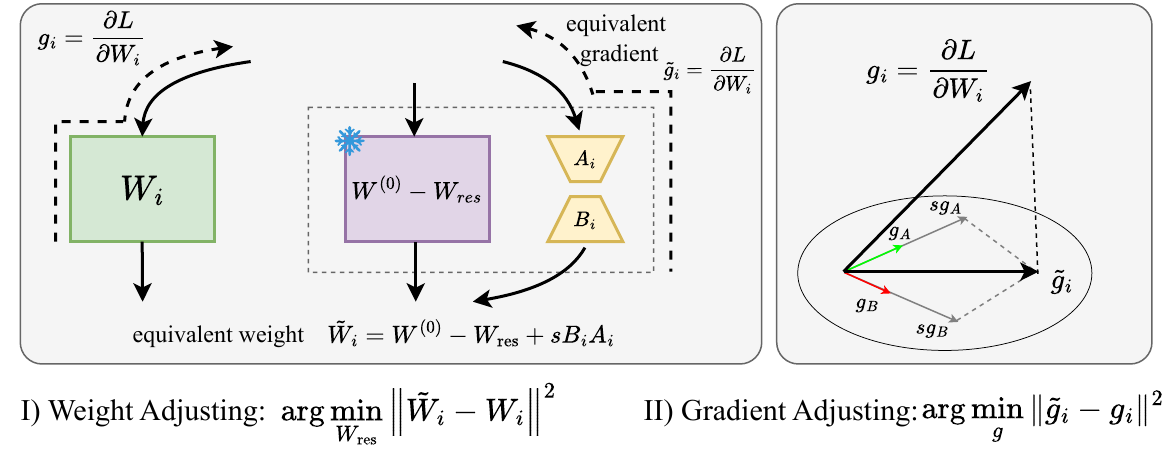}
    \caption{Optimization with SVD-structured MoE by separately aligning each expert. $W_{res}$ ensures the equivalent weight equals $W^{(0)}$ before optimization. Scaling aligns each expert’s equivalent gradients to those of full MoE FT.}
    \label{fig:optimization}
    \vspace{-15pt}
\end{figure}

\subsection{Adaptive Priors Initialization}

Building on the relevance of different SVD components on input data (Sec.~\ref{sec:svd}), we introduce a novel initialization methodology for a LoRA-MoE framework. This strategy assigns distinct, non-overlapping segments of a pre-trained model's SVD to individual experts. The core hypothesis is that \textit{the MoE's routing mechanism can learn to dynamically select the expert whose assigned singular value spectrum is most pertinent to a given input, creating a more adaptive low-rank approximation.}

Define a set of SVD component segments, $\mathcal{E}_r$, which are distributed evenly among the $N$ available experts:
\begin{equation}
    \mathcal{E}_r = \left\{(U_{[:,\, k:k + d]}, S_{[k:k+d,\,k:k+d]}, V_{[k:k+d,\,:]}^\top) \mid j = 1, \dots, N \right\},
\end{equation}
where $t = \frac{\min(m, n)}{N}$ and the starting index for the $j$-th expert is $k = (j - 1)t$. Each expert is allocated a uniform rank of $d = \frac{r}{N}$. Subsequently, each expert's constituent low-rank matrices, $B_i$ and $A_i$, are formed from a unique SVD segment $(U', S', V'^{\top}) \in \mathcal{E}_r$ as follows:
\begin{equation}
    B_i = \sqrt{\frac{1}{s}}\, U' S'^{1/2} \in \mathbb{R}^{m \times d}, \quad A_i = \sqrt{\frac{1}{s}}\, S'^{1/2} V'^{\top} \in \mathbb{R}^{d \times n}.
\end{equation}
The scaling factor $\sqrt{s}$ is applied to ensure that the resulting update matrix $sBA$ is invariant to $s$~\cite{meng2024pissa}. This structured initialization promotes expert specialization and enhances the model's flexibility for diverse FT tasks.

\subsection{Theoretical Optimization Alignment}
The direct integration of SVD spectral subspaces into MoE architectures introduces weight misalignment and complex gradient dynamics. These issues are not present in conventional LoRA that employs zero-initialization. This specific problem within MoE frameworks has been largely underexplored. We address this gap by first formally stating the necessary alignment requirements for stable training, and subsequently demonstrating that these requirements can be satisfied through a principled scaling mechanism.

\begin{theorem}[Single model alignment.]\label{thm:single-align}
Let \( \tilde{W}_t \) denote the effective LoRA weight and \( \tilde{g}_t \) its effective gradient, with \(W_t\) and \(g_t\) the corresponding quantities for full FT. Alignment follows from the pair of conditions:
\begin{equation}
\tilde{W}^{(0)} \approx W^{(0)},
\qquad
\tilde{g}_t \approx g_t,
\label{eq:single-align}
\end{equation}
\end{theorem}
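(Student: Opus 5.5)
We read "alignment" as: under (\ref{eq:single-align}) the LoRA trajectory $\{\tilde W_t\}$ tracks the full fine-tuning trajectory $\{W_t\}$ for every $t$. The plan is an induction on the optimization step $t$ with a perturbation bound, using only the optimizer update structure. We assume a first-order optimizer whose update is a map $\Phi_t$ of the current gradient (SGD: $\Phi_t(g)=-\eta g$; Adam-type: Lipschitz on bounded sets). Full FT then evolves as $W_{t+1}=W_t+\Phi_t(g_t)$. The LoRA updates on $A,B$ induce an equivalent update on $\tilde W_t=W^{(0)}_{\text{res}}+sB_tA_t$ as in (\ref{def:eg}). To first order in $\eta$, this equivalent update is $\Phi_t(\tilde g_t)$, with $\tilde g_t=s^2(B_tB_t^\top g_t+g_tA_t^\top A_t)$ from Sec.~\ref{sec:scale}.

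\textbf{Step 1 (linearize the LoRA step).} Expand $s(B_t+\Delta B)(A_t+\Delta A)-sB_tA_t = s(\Delta B\,A_t+B_t\Delta A)+s\,\Delta B\,\Delta A$. With $\Delta B=-\eta s\,g_tA_t^\top$ and $\Delta A=-\eta s\,B_t^\top g_t$, the first bracket equals $-\eta\tilde g_t$, and the remainder is $O(\eta^2 s^3\|g_t\|^2\|A_t\|\|B_t\|)$. This gives
\[
\tilde W_{t+1}=\tilde W_t-\eta\tilde g_t+O(\eta^2).
\]

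\textbf{Step 2 (error recursion).} Set $e_t=\|\tilde W_t-W_t\|_F$. Subtracting the two updates gives
\[
e_{t+1}\le e_t+\eta\|\tilde g_t-g_t\|_F+C\eta^2 .
\]
We split $\tilde g_t-g_t$ into two parts. The first is the discrepancy between the gradient map $G$ evaluated at the two weights, $G(\tilde W_t)-G(W_t)$, which is bounded by $L_G e_t$ under $L_G$-smoothness of the loss. The second is the intrinsic projection error $\tilde g_t-G(\tilde W_t)$, which is exactly what the second condition in (\ref{eq:single-align}) asks to be small, say at most $\epsilon_g$. A discrete Grönwall argument then yields
\[
e_t\le (1+\eta L_G)^t\bigl(e_0+t\eta\,\epsilon_g+Ct\eta^2\bigr).
\]
The first condition in (\ref{eq:single-align}) makes $e_0\approx 0$; this is what the residual $W_{\text{res}}$ of Fig.~\ref{fig:optimization} ensures. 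So alignment holds over any finite horizon $t\eta=O(1)$.

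\textbf{Step 3 (necessity / sharpness).} To show the two conditions are also the natural ones, we argue as follows. If $e_0\neq 0$, the trajectories start apart and the gap is not removed by the dynamics; at $t=0$ it is exactly $e_0$. If $\tilde g_t$ differs systematically from $g_t$, for instance by a factor $s$ as computed in Sec.~\ref{sec:scale}, then the effective learning rate is rescaled and the drift grows linearly in $t$.

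The main obstacle is the meaning of "$\approx$" for the gradients. Because $\tilde g_t$ is a projection of $g_t$ onto the spans of $B_t$ and $A_t^\top$, it cannot equal $g_t$ exactly at low rank. We would therefore state the theorem quantitatively, with tolerance $\epsilon_g$. We would also note that the scale mismatch, rather than the projection, is the correctable part. This sets up the choice of $s$ that the paper derives next. For adaptive optimizers, we would handle the nonlinearity of $\Phi_t$ by assuming Lipschitz continuity of $\Phi_t$ on the bounded region reached by the trajectories.
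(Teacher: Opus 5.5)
Your proposal is correct for SGD. It shares the paper's skeleton, induction over optimizer steps, but it is a quantitative sharpening rather than a restatement.

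The paper postulates the equivalent update $\tilde W_{t+1}=\tilde W_t-\eta_{\mathrm{LoRA}}\tilde g_t$. It treats $\tilde W_0=W_0$ and $\eta_{\mathrm{LoRA}}\tilde g_t=\eta_{\mathrm{FFT}}g_t$ as exact equalities and concludes $\tilde W_t=W_t$ for all $t$ in one line. The inductive hypothesis $\tilde W_t=W_t$ is what lets it identify the full-FT gradient with the gradient at the LoRA iterate.

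You do three things differently. (i) You derive the equivalent update from the factor updates and keep the cross term $s\,\Delta B\,\Delta A$, which the paper silently drops. (ii) You separate the evaluation-point error $G(\tilde W_t)-G(W_t)$ from the intrinsic low-rank error $\tilde g_t-G(\tilde W_t)$. (iii) You control accumulation with a discrete Gr\"onwall bound.

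This gives ``$\approx$'' an actual meaning. The gap stays below $(1+\eta L_G)^t(e_0+t\eta\epsilon_g+Ct\eta^2)$, so alignment is guaranteed only on horizons $t\eta=O(1)$, a limitation the exact induction hides. The cost is extra hypotheses: $L_G$-smoothness, and uniform bounds on $\|A_t\|$, $\|B_t\|$, $\|g_t\|$ along the trajectory, which you need for $C$ to be a constant and should state explicitly.

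The smoothness assumption and the exponential factor come from your reading of the second condition as a bound on $\tilde g_t-G(\tilde W_t)$. If you read it literally as $\|\tilde g_t-G(W_t)\|_F\le\epsilon_g$, the recursion closes directly to $e_t\le e_0+t(\eta\epsilon_g+C\eta^2)$. That condition is harder to verify, though, because it refers to the unknown full-FT iterate. The learning-rate ratio in the appendix version fits in by replacing $\eta\|\tilde g_t-g_t\|_F$ with $\|\eta_{\mathrm{LoRA}}\tilde g_t-\eta_{\mathrm{FFT}}g_t\|_F$ in the recursion.

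Two side claims should be dropped or repaired, though neither is needed for the statement.

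First, the Adam extension does not work as described. Adam preconditions $\partial L/\partial A$ and $\partial L/\partial B$ coordinatewise, each with its own moment state. The induced update $s(\Delta B\,A_t+B_t\,\Delta A)$ is therefore not of the form $\Phi_t(\tilde g_t)$ for any map $\Phi_t$, even to first order, and Lipschitz continuity of $\Phi_t$ does not fix that. Restrict the claim to SGD, as the paper does.

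Second, Step 3 goes beyond the statement, which asserts only sufficiency. Its claim that a nonzero $e_0$ is not removed by the dynamics is not justified in general: contractive dynamics, e.g.\ small-step gradient descent on a strongly convex loss, bring trajectories together. Present it as a heuristic at most.
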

(defined in Eq.~\ref{def:eg} as the ``equivalent weight'' and ``equivalent gradient''). These conditions mitigate the performance gap in single-adapter LoRA \cite{wanglora, wang2024loraprolowrankadaptersproperly}.

\begin{theorem}[MoE alignment.]\label{thm:MoE-ali}
For an MoE with experts \(i \in \{1,\dots,N\}\), router weights \(R(\mathbf{x})_i\) (top-\(k\) gating), and per-expert LoRA parameters, alignment reduces to enforcing
\begin{equation}
\tilde{W}^{(0)}_{\,i} \approx W^{(0)}_{\,i},
\qquad
\tilde{g}^t_{\,i} \approx g^t_{\,i},
\quad \forall i \in \{1,\dots,N\}.
\label{eq:moe-align}
\end{equation}
\end{theorem}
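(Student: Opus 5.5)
The plan is to reduce Theorem~\ref{thm:MoE-ali} to Theorem~\ref{thm:single-align}, applied expert by expert. The reduction rests on the fact that the MoE output in Eq.~\eqref{eq:mol_output} is linear in the per-expert updates once the router weights $R(\mathbf{x})_i$ are fixed. Fix a reference full-rank MoE in which expert $i$ carries a full weight $W_i$ and produces $y = \sum_{i\in S_k(\mathbf{x})} R(\mathbf{x})_i W_i \mathbf{x}$. Its LoRA counterpart has equivalent expert weight $\tilde W_i = W_{\mathrm{res}} + sB_iA_i$, where the residual $W_{\mathrm{res}}$ (cf.\ Fig.~\ref{fig:optimization}) is shared. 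Under this convention the LoRA-MoE output is $\sum_i R(\mathbf{x})_i \tilde W_i \mathbf{x}$; the normalization $\sum_{i\in S_k}R(\mathbf{x})_i=1$ lets $W^{(0)}\mathbf{x}$ be absorbed into each expert.

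\emph{Weight alignment at initialization.} If $\tilde W_i^{(0)}\approx W_i^{(0)}$ for every $i$, then for any routing the mixture satisfies $\sum_i R(\mathbf{x})_i\tilde W_i^{(0)}\mathbf{x}\approx\sum_i R(\mathbf{x})_i W_i^{(0)}\mathbf{x}$. The bound is uniform in $\mathbf{x}$ because it is a convex combination of per-expert errors. The routing logits $z(\mathbf{x})=W_z\mathbf{x}$ are identical in both models, since the router is unchanged, so $S_k(\mathbf{x})$ and the $R(\mathbf{x})_i$ agree. As a sanity check that the condition is attainable, choosing $W_{\mathrm{res}} = W^{(0)} - U'S'V'^{\top}$ for each expert's segment makes $\tilde W_i^{(0)} = W^{(0)}$ exactly, by the $s$-invariance of $sB_iA_i$ shown in Sec.~\ref{sec:scale}.

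\emph{Gradient alignment.} By the chain rule through Eq.~\eqref{eq:mol_output}, the full-MoE gradient for expert $i$ is $g_i^t = \sum_{\mathbf{x}} R(\mathbf{x})_i\,\delta(\mathbf{x})\mathbf{x}^\top$, where $\delta$ is the backpropagated output error. The LoRA equivalent gradient has the same form. Applying the single-adapter computation of Sec.~\ref{sec:scale} gives $\tilde g_i^t = s^2(B_iB_i^\top g_i^t + g_i^t A_i^\top A_i)$. If $\tilde g_i^t\approx g_i^t$ for all active $i$, then a gradient step moves each $\tilde W_i$ as $W_i$ moves. By induction on $t$, with both weights and gradients aligned at step $t$, the aligned state persists at step $t+1$ up to accumulated error. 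Applying Theorem~\ref{thm:single-align} to each expert separately then yields trajectory alignment of the whole MoE.

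\emph{Main obstacle: the router gradient.} The router gradient depends on the expert outputs $\tilde W_i\mathbf{x}$, and top-$k$ selection is discontinuous. I will handle this by noting that the weight alignment maintained by the induction makes the expert outputs agree. Hence $\partial L/\partial W_z$ agrees in both models, and the top-$k$ sets agree except on inputs near logit ties. I would treat those tie inputs as a measure-zero or small-margin exception, or state the result conditionally on a routing margin. Throughout, ``$\approx$'' is interpreted as first-order/heuristic, consistent with Theorem~\ref{thm:single-align}, so the result is a reduction of MoE alignment to the per-expert conditions of Eq.~\eqref{eq:moe-align} rather than a quantitative bound.
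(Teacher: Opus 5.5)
Your argument is essentially the paper's proof. It is an induction on $t$: the upcycled experts start aligned, each expert's weight is carried along by the (learning-rate-scaled) gradient-matching condition, and the routers stay identical because they see identical inputs and identical expert outputs. Your explicit chain-rule form of $g_i^t$ and your caveat about top-$k$ ties are refinements the paper simply absorbs into ``$\approx$''.

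One aside is wrong, though the reduction does not depend on it. You posit a single shared residual; your $W_{\mathrm{res}}$ plays the role of the paper's $W^{(0)}-W_{\mathrm{res}}$. That one matrix cannot equal $W^{(0)}-U_iS_iV_i^\top$ for every expert, because the non-overlapping segments $U_iS_iV_i^\top$ are distinct nonzero matrices. So exact per-expert alignment $\tilde W_i^{(0)}=W^{(0)}$ is not attainable in this architecture. This is exactly why the paper settles for the MSE-optimal $W_{\mathrm{res}}^{+}=\frac{s}{N}\sum_i B_i^{(0)}A_i^{(0)}$ together with the damping $\rho$. Those choices leave a per-expert mismatch $sB_i^{(0)}A_i^{(0)}-\frac{s}{N}\sum_j B_j^{(0)}A_j^{(0)}$ of order $1/\rho$. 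Drop the ``exactly'' claim, or note that it would require per-expert frozen residuals.
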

This makes our LoRA-MoE training dynamics equivalent to those of an upcycled MoE~\cite{he2024upcycling} with full-rank FT. This equivalence is highly desirable as it permits the optimization of each expert to proceed independently, simplifying the overall learning process and enhancing stability.

\noindent\textbf{Initialization alignment.}
We enforce this equivalence at initialization by adopting the principles of upcycled MoE \cite{he2024upcycling}. The procedure begins with the pre-trained base weight \(W^{(0)}\). For each expert, we initialize its low-rank factors \(B_i^{(0)} A_i^{(0)}\) using SVD priors. The resulting effective weight for the LoRA-MoE architecture is thus defined as:
\begin{equation}
\tilde{W}^{(0)}
= W^{(0)} - W_{\text{res}} + \sum_{i=1}^N R(\mathbf{x})_i\, s\, B_i^{(0)} A_i^{(0)}
\approx W^{(0)},
\label{eq:wres-approx}
\end{equation}
To ensure that the equivalent weight $\tilde{W}^{(0)}$ aligns with the pre-trained weight $W^{(0)}$ at initialization 
(see Eq.~\ref{eq:single-align}), we introduce a residual compensation term $W_{\text{res}}$ defined as the 
expected weighted contribution of all LoRA experts:
\begin{equation}
W_{\text{res}} = s \, \mathbb{E}_x \left[ \sum_{i=1}^{N} R(x)_i B_i^{(0)} A_i^{(0)} \right].
\label{eq:res-first}
\end{equation}
This term preserves the unbiased initialization of $\tilde{W}_0$. 

\begin{theorem}[Router moments.]\label{thm:router-mom}
For top-\(k\) gating over $N$ experts with input data denoted by $\mathbf{x}$, the router's behavior is simplified through the use of moment identities:
\begin{equation}
\mathbb{E}_{\mathbf{x}}\!\left[R(\mathbf{x})_i\right] = \frac{1}{N},
\qquad
\mathrm{Var}\!\left(R(\mathbf{x})_i\right) = \frac{N-k}{kN^{2}},
\label{eq:router-moments}
\end{equation}
\end{theorem}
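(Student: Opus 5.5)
The identity cannot hold for an arbitrary trained router, so the plan is to prove it under the regime in which the paper uses it: at initialization, where $W_{\text{res}}$ in Eq.~\eqref{eq:res-first} is computed. I would make two modelling assumptions explicit. First, \emph{exchangeability}: at initialization $W_z$ is drawn from a distribution invariant under permutations of its $N$ output rows (e.g.\ i.i.d.\ entries), and $\mathbf{x}$ is independent of $W_z$. Then the joint law of $(z_1(\mathbf{x}),\dots,z_N(\mathbf{x}))$, taken over both $\mathbf{x}$ and the router initialization, is symmetric in the expert index. Second, \emph{balanced selected weights}: among the selected experts the softmax in Eq.~\eqref{eq:mol_weights} is approximately uniform. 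This holds when the logits are small or concentrated at initialization, so that $\exp(z_i)\approx \exp(z_j)$ for $i,j\in S_k(\mathbf{x})$.

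\textbf{Mean.} This part needs only exchangeability. By Eq.~\eqref{eq:mol_weights}, $\sum_{i=1}^N R(\mathbf{x})_i = 1$ for every $\mathbf{x}$, since the top-$k$ weights are renormalized. Taking expectations gives $\sum_i \mathbb{E}[R(\mathbf{x})_i]=1$. Exchangeability makes each marginal $R(\mathbf{x})_i$ identically distributed, so every term is equal and $\mathbb{E}[R(\mathbf{x})_i]=1/N$.

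\textbf{Variance.} I would first show that each expert is selected with probability $k/N$. The indicators $\mathbf{1}[i\in S_k(\mathbf{x})]$ sum to exactly $k$, and exchangeability makes their means equal, so $\Pr(i\in S_k(\mathbf{x}))=k/N$. Ties occur with probability zero for continuous logits, so $S_k$ is well defined. Under the balanced-weight assumption, $R(\mathbf{x})_i$ is then a two-point variable: it equals $1/k$ with probability $k/N$ and $0$ otherwise. This recovers $\mathbb{E}[R_i]=\tfrac{k}{N}\cdot\tfrac1k=\tfrac1N$, consistent with the mean computation. It also gives $\mathbb{E}[R_i^2]=\tfrac{k}{N}\cdot\tfrac1{k^2}=\tfrac1{kN}$, hence
\[
\mathrm{Var}(R(\mathbf{x})_i)=\frac{1}{kN}-\frac{1}{N^2}=\frac{N-k}{kN^2}.
\]
As a sanity check, the case $k=N$ gives zero variance, as it should for dense uniform gating.

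\textbf{Main obstacle.} The delicate step is the balanced-weight assumption, because it is genuinely an approximation. Without it, Jensen's inequality on the renormalized softmax only gives $\mathbb{E}[R_i^2]\ge 1/(kN)$, so the stated variance is actually a lower bound. I would therefore state the variance identity as exact in the small-logit limit $\|W_z\|\to 0$. To justify that limit I would Taylor-expand $\exp(z_j)=1+z_j+O(z_j^2)$ in Eq.~\eqref{eq:mol_weights}. This gives $R_i=\tfrac1k\bigl(1+z_i-\bar z_{S_k}\bigr)+O(\|z\|^2)$ on the selected set, where $\bar z_{S_k}$ is the mean logit over $S_k(\mathbf{x})$. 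The correction to $\mathbb{E}[R_i^2]$ is therefore $O(\mathbb{E}\|z\|^2)$, which vanishes at the near-uniform initialization used to define $W_{\text{res}}$. The mean identity, by contrast, holds exactly under exchangeability alone.
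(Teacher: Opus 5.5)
Your argument is correct under the assumptions you state. For the mean it is the paper's argument: exchangeability plus $\sum_i R(\mathbf{x})_i=1$. The paper simply postulates i.i.d.\ logits, whereas you derive exchangeability from a permutation-invariant router initialization; this also makes explicit that the average is over $W_z$ as well as $\mathbf{x}$.

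For the variance the routes differ. The paper expands $(\sum_i R_i)^2=1$ into $1=N\,\mathbb{E}[R_i^2]+N(N-1)\,\mathbb{E}[R_iR_j]$ and asserts ``by symmetry'' that $\mathbb{E}[R_iR_j]=\frac{\binom{k}{2}}{\binom{N}{2}}\cdot\frac{1}{k^2}=\frac{k-1}{N(N-1)k}$. You instead compute $\mathbb{E}[R_i^2]$ directly from the two-point law of $R_i$, using only $\Pr(i\in S_k)=k/N$.

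The two computations agree, but yours exposes what the paper's step hides. For $k\ge 2$, symmetry alone gives only $\mathbb{E}[R_iR_j\mid i,j\in S_k]=\bigl(1-\mathbb{E}[\sum_{\ell\in S_k}R_\ell^2]\bigr)/\bigl(k(k-1)\bigr)\le 1/k^2$, with equality if and only if the selected weights are uniform. So the paper is silently using your balanced-weight assumption.

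Your Jensen remark is therefore the accurate general statement. With i.i.d.\ continuous logits and $k\ge 2$ (e.g.\ Gaussian logits, $N=4$, $k=2$), the variance strictly exceeds $\frac{N-k}{kN^2}$. The paper's extra hypothesis $k\le N/2$ does not change this. The identity is exact only in the uniform/small-logit limit you describe.

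One small point in that limit argument: the first-order term $\frac{2}{k^2}(z_i-\bar z_{S_k})$ in $R_i^2$ has zero mean on $\{i\in S_k\}$ only by exchangeability, so you should say so. Alternatively, skip the Taylor expansion. $S_k$ is invariant under $W_z\mapsto\epsilon W_z$, $R_i\to\frac1k\mathbf{1}[i\in S_k]$ pointwise as $\epsilon\to 0$, and $0\le R_i\le 1$, so bounded convergence gives $\mathbb{E}[R_i^2]\to\frac{1}{kN}$.
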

for all $i\neq j \in \{1,\dots,N\}$. The variance of the initialization mismatch term, $W_{\text{res}}^{} - s \sum_{i=1}^N R(\mathbf{x})_i B_i^{(0)} A_i^{(0)}$, is proportional to the aggregate $\sum_{i=1}^N B_i^{(0)} A_i^{(0)}$. Given that in sparse gating scenarios (e.g., $2k<N$), the expert weight standard deviation, $\mathrm{std}(R(\mathbf{x})_i)$, constitute an important part of the mean $\mathbb{E}[R(\mathbf{x})_i]$, the effective control of this variance becomes critical.

\begin{theorem}[Residual matching objective.]\label{thm:residual}
To mitigate the initialization mismatch, we determine the optimal constant residual weight $W_{\text{res}}$ by minimizing the expected MSE:
\begin{equation}
W_{\text{res}}^{+}
= \arg\min_{W_{\text{res}}}
\ \mathbb{E}_{\mathbf{x}}
\left\|
W_{\text{res}} - s \sum_{i=1}^N R(\mathbf{x})_i B_i^{(0)} A_i^{(0)}
\right\|^{2}.
\label{eq:wres-opt}
\end{equation}
\end{theorem}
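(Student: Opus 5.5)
The statement is a least-squares problem in the constant matrix $W_{\text{res}}$, so I will prove it by the bias--variance decomposition of a squared Frobenius norm. Set $M(\mathbf{x}) := s\sum_{i=1}^N R(\mathbf{x})_i B_i^{(0)} A_i^{(0)}$. Because the factors $B_i^{(0)},A_i^{(0)}$ are fixed at initialization, $M(\mathbf{x})$ is a matrix-valued random variable whose only randomness comes from the router weights. The objective is $J(W_{\text{res}})=\mathbb{E}_{\mathbf{x}}\|W_{\text{res}}-M(\mathbf{x})\|_F^2$, and I will show that its unique minimizer is $W_{\text{res}}^{+}=\mathbb{E}_{\mathbf{x}}[M(\mathbf{x})]$. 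This recovers the definition in Eq.~\ref{eq:res-first}, and Theorem~\ref{thm:router-mom} then makes it explicit.

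\textbf{Step 1 (decomposition).} Write $\bar M=\mathbb{E}_{\mathbf{x}}[M(\mathbf{x})]$ and expand
\begin{equation*}
\|W_{\text{res}}-M\|_F^2=\|W_{\text{res}}-\bar M\|_F^2+2\langle W_{\text{res}}-\bar M,\ \bar M-M\rangle_F+\|\bar M-M\|_F^2 .
\end{equation*}
Take the expectation over $\mathbf{x}$. The cross term vanishes by linearity, because $W_{\text{res}}-\bar M$ is constant and $\mathbb{E}[\bar M-M]=0$. This leaves
\begin{equation*}
J(W_{\text{res}})=\|W_{\text{res}}-\bar M\|_F^2+\mathbb{E}\|M-\bar M\|_F^2 .
\end{equation*}
The second term does not depend on $W_{\text{res}}$. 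The first term is strictly convex and vanishes exactly at $\bar M$, so $\bar M$ is the unique minimizer. Equivalently, setting $\nabla J=2(W_{\text{res}}-\bar M)=0$ gives the same point.

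\textbf{Step 2 (closed form via router moments).} Apply linearity of expectation and the first identity of Theorem~\ref{thm:router-mom}, $\mathbb{E}[R(\mathbf{x})_i]=1/N$:
\begin{equation*}
W_{\text{res}}^{+}=\frac{s}{N}\sum_{i=1}^N B_i^{(0)}A_i^{(0)} .
\end{equation*}
Since $sB_i^{(0)}A_i^{(0)}=U_iS_iV_i^\top$ for each SVD segment, this is $\tfrac1N$ times the sum of the assigned spectral segments.

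\textbf{Step 3 (residual value of the objective).} To justify the remark preceding the statement, I will also evaluate the minimal value $\mathbb{E}\|M-\bar M\|_F^2$. Write $C_i=sB_i^{(0)}A_i^{(0)}$. The minimal value becomes
\begin{equation*}
\sum_{i,j}\mathrm{Cov}(R_i,R_j)\,\langle C_i,C_j\rangle_F .
\end{equation*}
The segments come from disjoint singular-vector blocks, so $\langle C_i,C_j\rangle_F=\mathrm{tr}(V_iS_iU_i^\top U_jS_jV_j^\top)=0$ for $i\neq j$. Only the diagonal survives. Using $\mathrm{Var}(R_i)=\tfrac{N-k}{kN^2}$ from Theorem~\ref{thm:router-mom}, the minimal value equals
\begin{equation*}
\frac{N-k}{kN^2}\sum_i\|S_i\|_F^2 .
\end{equation*}
This shows that the irreducible mismatch is governed by the aggregate expert spectrum and grows as the gating becomes sparser (small $k$).

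\textbf{Main obstacle.} The argument is routine once $R(\mathbf{x})$ is treated as a random vector with finite second moments; square-integrability is automatic because $R_i\in[0,1]$. The delicate point is that Step 2 relies on the exact moment identities of Theorem~\ref{thm:router-mom}. These hold only under a symmetry assumption, namely that at initialization each expert is equally likely to be selected. If that assumption is weakened, I would keep the general form $W_{\text{res}}^{+}=s\sum_i\mathbb{E}[R(\mathbf{x})_i]B_i^{(0)}A_i^{(0)}$. Only this general form depends solely on Step 1.
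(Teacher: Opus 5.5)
Your proof is correct and follows the paper's route. The paper identifies $W_{\text{res}}^{+}$ with $s\,\mathbb{E}_{\mathbf{x}}[\sum_i R(\mathbf{x})_i B_i^{(0)}A_i^{(0)}]$ and applies linearity with $\mathbb{E}[R(\mathbf{x})_i]=1/N$; your Step~1 supplies the Frobenius-norm bias--variance justification it leaves implicit.

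In the optional Step~3, note two things. Under the damped initialization of Eq.~\ref{eq:initAB}, $sB_i^{(0)}A_i^{(0)}=\rho^{-1}U_iS_iV_i^\top$, which adds a factor $\rho^{-2}$. Also, the imported variance identity is exact only when the selected experts get equal weight $1/k$: in general $\sum_{i\in S_k}R_i^2\ge 1/k$ by Cauchy--Schwarz, so $\frac{N-k}{kN^2}\sum_i\|C_i\|_F^2$ is only a lower bound on the minimal value.
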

By substituting the established router moments from Eq. \ref{eq:router-moments}, we derive the following closed-form solution:
\begin{equation}
W_{\text{res}}^{+}
= \frac{s}{N} \sum_{i=1}^N B_i^{(0)} A_i^{(0)}.
\label{eq:wres-closed}
\end{equation}
Note that contemporary zero-initialization schemes for LoRA-MoE \cite{zadouri2024pushing,tian2024hydraloraasymmetricloraarchitecture} are recovered as the special case where $W_{\text{res}}^{+}$ vanishes.

\noindent\textbf{SVD-consistent scaling.}
To simultaneously maintain the information embedded in the SVD spectral subspaces in the pre-trained model and enforce numerical stability (i.e., by ensuring a tighter approximation in Eq. \ref{eq:wres-approx}), we damp per-expert low-rank initial factors by $\rho>0$:
\begin{equation}
B_i^{(0)} = \sqrt{\frac{1}{s\rho}}\, U_i S_i^{1/2},
\qquad
A_i^{(0)} = \sqrt{\frac{1}{s\rho}}\, S_i^{1/2} V_i^\top.
\label{eq:initAB}
\end{equation}
This joint scaling strategy effectively reduces the variance in initialization mismatch.

\input{tables/results}

\noindent\textbf{Gradient alignment.}
The theoretical analysis of the gradient dynamics under zero-initialized LoRA (where $B^{(0)}=0$, $A^{(0)} \sim U(-\sqrt{6/n},\sqrt{6/n})$) reveals that the effective LoRA gradient $\tilde{g}_i^t$ for expert $i$ is approximated by:
\begin{equation}
\tilde{g}_i^t
= s^{2}\Big( B_i^t {B_i^t}^{\!\top} g_i^t \;+\; g_i^t {A_i^t}^{\!\top} A_i^t \Big).
\label{eq:grad-surrogate}
\end{equation}
\begin{theorem} \label{thm:scale} The optimal scaling factor $s^{*}$, obtained by minimizing $\|\tilde{g}_t^i - g_t^i\|$ under a learning-rate ratio $\eta$ (Full FT vs. LoRA), is given by:
\begin{equation}
s^{*} = \sqrt{\frac{3n\,\eta}{r}}.
\label{eq:s-opt}
\end{equation}
\end{theorem}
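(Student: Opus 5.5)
We work at the first optimization step under zero-initialized LoRA, following the LoRA-GA/LoRA-Pro style of analysis. The idea is to replace the effective gradient surrogate in Eq.~\ref{eq:grad-surrogate} by its expectation over the random initialization of $A_i$. We then choose $s$ so that this expected effective update matches the full fine-tuning update, once the learning-rate ratio $\eta$ is taken into account. Throughout, each expert is treated independently, which the reduction in Theorem~\ref{thm:MoE-ali} permits. Accordingly we drop the index $i$ and write $r$ for the per-expert rank appearing in the statement.

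\textbf{Step 1 (reduce the surrogate at $t=0$).} At initialization $B^{(0)}=0$, so the first term $B B^\top g$ of Eq.~\ref{eq:grad-surrogate} vanishes. The surrogate becomes
\[
\tilde g_0 = s^2\, g_0 A_0^\top A_0 .
\]
The full fine-tuning update is $\eta_{\mathrm{FT}} g_0$, while the LoRA update of $\tilde W$ is $\eta_{\mathrm{LoRA}}\tilde g_0$. The matching condition is therefore $\tilde g_0 \approx \eta\, g_0$, with the ratio $\eta$ absorbed on the full fine-tuning side as in the statement.

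\textbf{Step 2 (moment of the random projector).} The entries of $A_0\in\mathbb{R}^{r\times n}$ are i.i.d.\ $U(-a,a)$ with $a=\sqrt{6/n}$, so each entry has variance $a^2/3 = 2/n$. Hence
\[
\mathbb{E}[A_0^\top A_0] = r\cdot\tfrac{2}{n}\, I_n .
\]
The constants should be tracked carefully here, since the bound used for the initialization (Kaiming-uniform with fan-in $n$) determines the variance. The target $s^*=\sqrt{3n\eta/r}$ corresponds to $\mathbb{E}[A_0^\top A_0] = \tfrac{r}{3n}\cdot c\, I$ for the paper's convention. I would fix whichever bound reproduces the stated constant, namely per-entry variance $1/(3n)$ from $U(-\sqrt{1/n},\sqrt{1/n})$, and note the convention explicitly.

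\textbf{Step 3 (minimize the expected misalignment).} We minimize
\[
\mathbb{E}\,\|s^2 g_0 A_0^\top A_0 - \eta g_0\|_F^2
\]
over $s$. To leading order this is a bias-plus-variance expression. When $n\gg r$ the fluctuation of $A_0^\top A_0$ around its mean contributes a term independent of $s^2$ at the minimizer's first order. The minimizer therefore solves
\[
s^2\,\mathbb{E}[A_0^\top A_0] = \eta I, \qquad\text{i.e.}\qquad s^2\frac{r}{3n}=\eta,
\]
which gives $s^*=\sqrt{3n\eta/r}$.

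\textbf{Main obstacle.} $A_0^\top A_0$ has rank $r<n$, so $g_0A_0^\top A_0$ can never equal $\eta g_0$ exactly. The exact minimizer of the expected squared error also picks up a fourth-moment correction, of the form $s^2 = \eta\,\mathbb{E}[P]/\mathbb{E}[P^2]$ in a spectral sense. I would handle this by stating the result as matching the expected (mean-field) gradient, or equivalently minimizing $\|\mathbb{E}\tilde g_0 - \eta g_0\|$. I would also show that the fourth-moment correction is $O(r/n)$ and hence negligible in the regime $r\ll n$.
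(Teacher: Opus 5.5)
Your Steps 1 and 2, together with the fallback in your last paragraph, match the paper's proof. The paper sets $B_0=0$ and replaces $A_0^\top A_0$ by its expectation $r\sigma_A^2 I_n$ with $\sigma_A^2=1/(3n)$. That is the Kaiming/leaky-ReLU convention with bound $1/\sqrt{n}$, which is the fix you propose; you are right that the stated bound $\sqrt{6/n}$ would give $2/n$. It then equates coefficients in $s^2 r/(3n)=\eta$.

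The genuine problem is Step 3, together with your plan to show that the fourth-moment correction is $O(r/n)$. That claim is false. When $n\gg r$, the fluctuation of $P=A_0^\top A_0$ is the dominant term, not a negligible one. For i.i.d.\ entries with variance $\sigma^2$ and fourth moment $\mu_4$, one has $\mathbb{E}[P]=r\sigma^2 I_n$ and $\mathbb{E}[P^2]=r(n+r+c)\sigma^4 I_n$, where $c=\mu_4/\sigma^4-2=O(1)$. Hence
\[
\mathbb{E}\,\|s^2 g_0P-\eta g_0\|_F^2=\bigl(s^4 r(n+r+c)\sigma^4-2\eta s^2 r\sigma^2+\eta^2\bigr)\|g_0\|_F^2 ,
\]
which is minimized at $s^2=\eta/((n+r+c)\sigma^2)=3n\eta/(n+r+c)\approx 3\eta$.

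The variance part contributes $s^4 r(n+c)\sigma^4$, roughly $n/r$ times the bias part's $s^4 r^2\sigma^4$. So the correction multiplies $s^{*2}$ by $r/(n+r+c)$, not by $1+O(r/n)$. The true minimizer of the expected squared mismatch is smaller than $\sqrt{3n\eta/r}$ by a factor of about $\sqrt{n/r}$. Intuitively, $P\approx n\sigma^2\Pi$ with $\Pi$ a projector onto a random $r$-dimensional subspace, and no rescaling brings $g_0\Pi$ close to $g_0$.

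Consequently, the stated formula cannot be derived as the minimizer of $\mathbb{E}\|\tilde g_0-\eta g_0\|_F^2$, and the bound you promise cannot be proved. What does hold is the mean-field statement, and that is all the paper's argument establishes. Its appeal to the Law of Large Numbers does not change this, since the rank-$r$ matrix $A_0^\top A_0$ does not concentrate around $\tfrac{r}{3n}I_n$. The statement is: $s^*=\sqrt{3n\eta/r}$ is the unique $s>0$ with $\mathbb{E}_{A_0}[\tilde g_0]=\eta g_0$, i.e.\ the zero of $\|\mathbb{E}\tilde g_0-\eta g_0\|$. Make that the objective, drop Step 3's bias--variance claim and the $O(r/n)$ argument, and your proof becomes the paper's.
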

Given the common practical condition $n \gg r$, the optimal factor $s^{*}$ is notably greater than two, providing a theoretical justification for why small scaling factors are insufficient and validating the necessity of moderate scaling (see Sec.~\ref{sec:scale}).

\noindent\textbf{Practical notes on SVD spectral subspaces}.
While the exact gradient dynamics for SVD-subspaces from the pre-trained model are hard to analyze, increasing $s$ and $\rho$ in Eq. \ref{eq:initAB} attenuates the initial magnitude of $B_i^{(0)}$ and $A_i^{(0)}$, pushing the system towards the zero-initialization regime where Eq. \ref{eq:s-opt} is valid. We adopt this scaling in \ours . 

%% file: tables/results.tex
\begin{table*}[!t]
\centering
\caption{Comparison of \ours\ with other VLMs and unified multi-modal models on medical visual comprehension tasks. \textbf{Bold} and \underline{underlined} text indicates the best performance and second-best performance, respectively. Close-ended: yes/no and other limited choices. Open-ended: Do not have a limited question structure and could have multiple correct answers.}
\resizebox{.9\textwidth}{!}{
\begin{tabular}{l|cc|ccccccc|c}
\toprule
\rowcolor[HTML]{E9F3FE}  &  &  & \multicolumn{2}{c}{\textbf{VQA-RAD \textuparrow}} & \multicolumn{2}{c}{\textbf{SLAKE \textuparrow}} & \multicolumn{2}{c}{\textbf{PathVQA \textuparrow}} &  {\textbf{OMVQA \textuparrow}}  &  \\
\cline{4-10}
\rowcolor[HTML]{E9F3FE} \multirow{-2}{*}{\textbf{Model}} & \multirow{-2}{*}{\textbf{\# Params}} & \multirow{-2}{*}{\makecell{\textbf{Medical} \\ \textbf{VLM}}} & \textbf{close} & \textbf{open} & \textbf{close} & \textbf{open} & \textbf{close} & \textbf{open} & \textbf{close}  & \multirow{-2}{*}{\textbf{Avg. \textuparrow}} \\
\midrule \midrule
 BLIP-2 & 6.7B & \Large \ding{55} & 43.4 & 20.5 & 41.6 & 32.1 & 48.5 & 23.8 &  26.9 & 33.8 \\
 LLaVA-v1.5 & 7B & \Large \ding{55} & 51.8 & 26.5 & 37.1 & 29.8 & 53.5 & 26.7 &  44.7 & 38.6 \\
 InstructBLIP & 7B & \Large \ding{55} & 61.0 & 28.2 & 66.8 & 40.7 & 56.0 & 27.3 & 29.0 & 44.1 \\
 Yi-VL & 6B & \Large \ding{55} & 52.6 & 27.1 & 52.4 & 30.8 & 54.9 & 25.7 &  50.2 & 41.9 \\
 InternVL2 & 8B & \Large \ding{55} & 64.9 & 33.8 & 66.6 & 35.2 & 60.0 & 34.6  & 54.5 & 49.9\\
 Llama-3.2 & 11B & \Large \ding{55} & 68.9 & 29.3 & \underline{72.4} & 37.1 & 62.8 & 36.5  & 63.2 & 52.9 \\
 Qwen-2.5-VL & 7B & \Large \ding{55} & 61.8 & 27.2 & 64.7 & 36.7 & 60.5 & 33.4  & 60.8 & 49.3 \\
 \midrule
 LLaVA-Med & 7B & \Large \ding{51} & 68.9 & 32.5 & 57.7 & 41.3 & 52.5 & 30.3 & 31.8 & 45.0 \\
 Med-Flamingo & 8.3B & \Large \ding{51} & 67.4 & 27.0 & 46.4 & 23.8 & 51.3 & 29.5 & 24.6  & 38.6 \\
 HuatuoGPT-Vision & 7B & \Large \ding{51} & 71.1 & 37.7  & 58.5 & 45.6 & 52.9 & 30.8 & 39.9  & 48.1 \\
 HealthGPT-M3 & 3.8B & \Large \ding{51} & 69.6 & 37.2 & 56.4 & 43.6 & 50.1 & 27.7 & 32.5 & 45.3 \\
 HealthGPT-L14 & 14B & \Large \ding{51} & \underline{74.5} & \underline{54.5} & 71.9 & \underline{56.2} & \underline{75.2} & \underline{42.1} & \underline{67.2}  & \underline{63.1} \\
 \midrule
 \ours & 7B & \Large \ding{51} & \textbf{78.8} & \textbf{59.6} & \textbf{75.3} & \textbf{59.9} & \textbf{84.2} & \textbf{49.1} & \textbf{70.6} & \textbf{68.2} \\
\bottomrule
\end{tabular}
}
\label{tab:results}
\end{table*}

%% file: sec/3_results.tex

\input{tables/app_vqa}

\input{tables/report_gen}

\input{tables/clip_result}

\section{Experiments}
We evaluate \ours\ along three axes: (1) accuracy gain on medical VQA compared to Med-VLM baselines, (2) clinical report generation on two distinct benchmarks, and (3) robustness across different training configurations and data regimes.

\subsection{Experimental Setups}
\label{sec:exp}
\textbf{Implementation Details}.
We select Qwen2.5-VL 7B as the base model for our architecture (Fig. \ref{fig:arch}).  \ours\ is initialized with the pre-trained weights of Qwen2.5 (7B). To ensure a fair comparison with prior work, we closely follow the configurations used in previous studies \cite{lin2025healthgpt, li2023llava}. Details of the baseline setups are provided in Appendix~\ref{app:implementation}.



\noindent\textbf{Datasets and Training Stages.}
Our model undergoes a three-stage training process. During the first and second phases, we utilize well-structured datasets provided by LLaVA-Med \cite{li2023llava} for \emph{alignment} and \emph{instruction tuning}. 
In the third phase, we employ medical datasets for \emph{MoE-tuning}, including SLAKE \cite{liu2021slake}, VQA-RAD \cite{lau2018dataset}, and PathVQA \cite{he2020pathvqa}, among others, covering a wide range of modalities and anatomical regions. Details of the datasets and their corresponding modalities are provided in Table \ref{tab:data_summary} and Table~\ref{tab:dataset_info}.



\subsection{Methods on Med-VQA}

On medical VQA, we compare \ours\ with both medical-specific and general-purpose LVLMs (Tables~\ref{tab:results} and \ref{tab:app_vqa}). As shown in Table~\ref{tab:results}, \ours\ achieves superior performance across nearly all metrics and datasets, surpassing the Qwen-2.5-VL baseline by an average of 18.9\% and outperforming Med-LLaVA and Med-Flamingo by 23.2\% and 29.6\%, respectively. To evaluate generalization, we also compare \ours\ with LVLMs trained on large-scale open-domain data (Table~\ref{tab:app_vqa}); \ours\ consistently outperforms these models, confirming its strong generalizability across diverse image domains and medical multimodal tasks.

\subsection{Medical report generation}
We also investigate the evaluation of LVLMs for the medical report generation task (Table~\ref{tab:report_gen}). In MIMIC-CXR, our method significantly outperforms LLaVA-Med and MedDr methods, achieving improvements of up to 23\% and 6.9\% in F1-RadGraph, and 28.6\% and 10\% in CheXbert, respectively. Furthermore, on IU-Xray, \ours\ surpasses RadFM, a model primarily focused on radiology tasks, across nearly all evaluated metrics. For instance, \ours\ achieves a BLEU-1 score of 41.59 and a CheXbert score of 64.93 on the IU-Xray dataset, underscoring its proficiency in comprehensively interpreting medical images. These empirical results demonstrate that the MoE effectively specializes to  diverse types of medical data while mitigating inter-dataset conflicts.

\subsection{Zero-Shot Classification}
Table \ref{tab:clip_result} reports zero-shot accuracies across nine radiology benchmarks spanning four imaging modalities. Our method achieves 95.31\% of the full FT MoE performance (58.8\% vs 61.7\%) despite using 339 times fewer parameters (2.24 vs. 760). It also outperforms the strongest single-LoRA baseline (rank-32, 55.45\%) by +3.38 points with 2.7 times fewer parameters (2.24\% vs. 5.98\%), and improves over PiSSA by 8.75\% and HydraLoRA by 4.84\%. Among LoRA-based methods (single and MoE), our approach achieves the best result on \emph{all nine} datasets and further outperforms full FT (56.76\%).
\input{tables/ablation}

\input{tables/param}

\begin{figure}[t] 
    \centering
    \includegraphics[width=.45\textwidth]{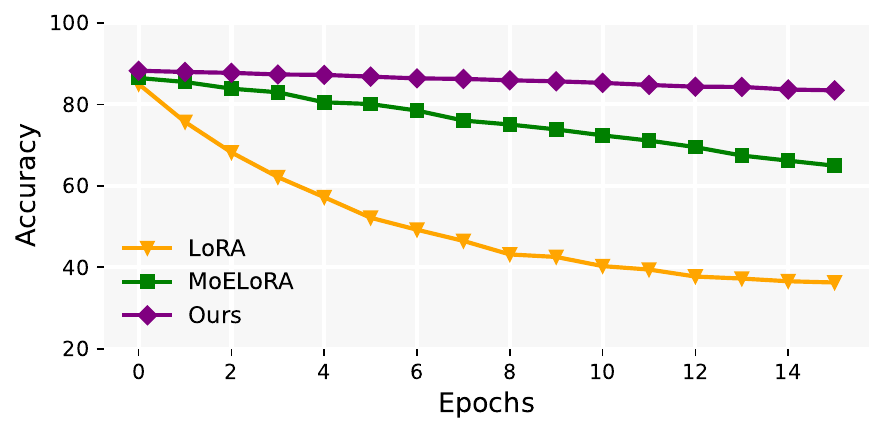}
    \vspace{-10pt}
    \caption{Comparison of catastrophic forgetting across different fine-tuning methods in sequential learning. The plot illustrates the accuracy decay over 15 epochs for LoRA, MoELoRA, and our method.}
    \label{fig:catastrophic}
    \vspace{-15pt}
\end{figure}

\subsection{Catastrophic Forgetting} \label{sec:Forgetting}
VLMs face a fundamental challenge when exposed to sequential or non-stationary learning scenarios—catastrophic forgetting \cite{kirkpatrick2017overcoming, zhai2024investigating, wang2025smolora}. This phenomenon refers to the abrupt and severe loss of previously acquired knowledge when a model is trained on new data or tasks. In other words, during sequential learning, new information tends to overwrite the representations formed from earlier experiences, leading to a substantial degradation in performance on previously learned tasks.

To examine the vulnerability of existing fine-tuning methods, such as LoRA and MoELoRA, to catastrophic forgetting during instruction tuning, we designed a simple yet effective evaluation framework. Specifically, we first fine-tuned the Qwen model using three different methods including our proposed MoE-SVD, standard LoRA, and MoELoRA on the Harvard-FairVLMed dataset. Subsequently, we fine-tuned these models on the PathVQA dataset, which belongs to a completely different medical domain, and evaluated their performance on the original Harvard-FairVLMed dataset to measure the degree of forgetting associated with each method.
As shown in Fig. \ref{fig:catastrophic}, the results reveal that the LoRA-based model suffered a drastic performance drop of over 50\%, while MoELoRA exhibited a moderate reduction of more than 20\%. In contrast, our proposed MoE-SVD approach demonstrated remarkable resistance to catastrophic forgetting, with only a 5\% decrease in accuracy after 15 epochs of sequential training. This finding underscores the effectiveness of our proposed method in preserving prior knowledge while maintaining adaptability across diverse domains. Detailed analyses are in Appendix \ref{app:cat}.

\subsection{Hallucination Evaluation}
We further include a medical hallucination evaluation in the Appendix \ref{app:hall}, which shows that \ours\ consistently outperforms both general-purpose and medical-specific LVLMs.

\subsection{Ablation Studies}
We ablate the impact of our adaptive priors initialization and gradient scaling  (Table \ref{tab:ablation}). Our initialization, with or without MoE scaling, consistently outperforms other methods (note that no SVD initialization corresponds to the original zero initialization, yielding 67.3/66.8). Without MoE, initializing a single LoRA with our SVD fragments achieves a performance of 60.1. In contrast, our MoE architecture achieves 67.3, demonstrating the advantages of expert specialization and the alignment strategy.


\begin{figure}[ht]
    \centering
    \begin{subfigure}[b]{0.48\columnwidth}
        \centering
        \includegraphics[width=\linewidth]{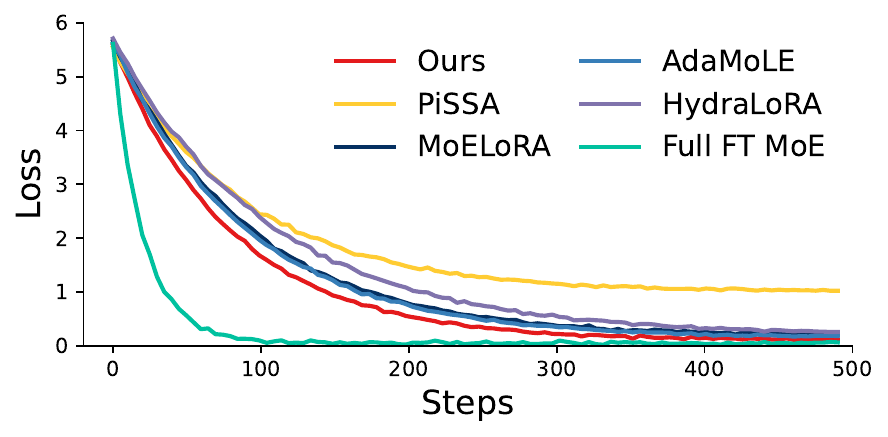}
        \caption{}
        \label{fig:loss}
    \end{subfigure}
    \hfill
    \begin{subfigure}[b]{0.48\columnwidth}
        \centering
        \includegraphics[width=\linewidth]{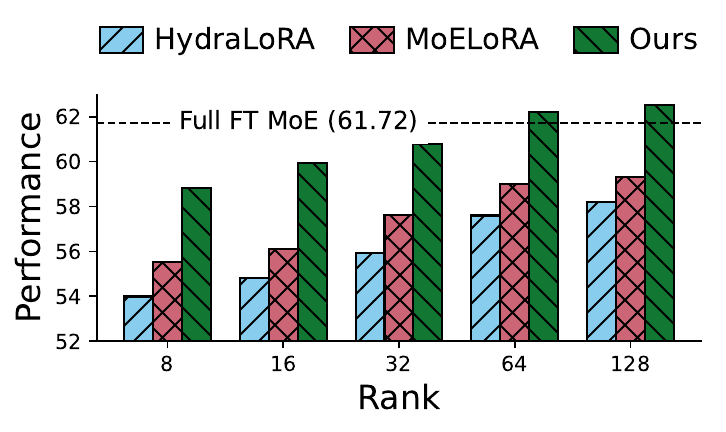}
        \caption{}
        \label{fig:r}
    \end{subfigure}
    \vspace{-5pt}
    \caption{(a) Training loss curves of different LoRA methods and Full Fine-tuning MoE on radiology datasets. {The balance loss is excluded in the MoE baselines for a fair comparison with single LoRA baselines}. (b) Performance of different methods across ranks.}
    \vspace{-15pt}
\end{figure}

\subsection{Convergence Speed}

As shown in Fig. \ref{fig:loss}, we compare the training loss curves of PiSSA, various LoRA MoE baselines, our proposed method, and Full FT MoE on radiology datasets. Our method demonstrates faster convergence than all LoRA MoE baselines and achieves performance closest to Full FT MoE. Notably, our method achieves a lower final loss, balancing performance and efficiency. In contrast, methods like PiSSA exhibit rapid initial drops but plateau at higher final loss (see Sec. \ref{sec:svd}).


\begin{figure}[ht]
    \vspace{-5pt}
    \centering
    \includegraphics[width=1\linewidth]{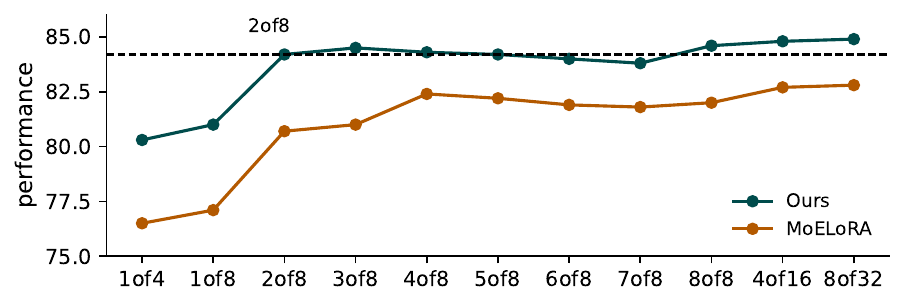}
    \vspace{-18pt}
\caption{Performance vs.~number of experts and activation ratio (total rank=32).\label{fig:expert_scale}}
    \vspace{-15pt}
\end{figure}

\subsection{Scalability}
\paragraph{Scaling with Rank.} To evaluate the scalability, we increase the rank in \ours\ from 8 to 128 on radiology datasets (Fig. \ref{fig:r}). As the rank increases, the performance gap between \ours\ and full fine-tuning MoE narrows significantly. Notably, \ours\ consistently outperforms both MoELoRA and HydraLoRA across all ranks. At rank 32, \ours\ achieves 60.78, surpassing MoELoRA (57.63) by 3.15\% and HydraLoRA (55.94) by 4.84\%. While higher ranks improve performance, gains diminish. For instance, \ours\ improves by 0.3\% from rank 64 to 128, indicating limited benefit despite higher computational costs.

\paragraph{Scaling with Expert Count and Activated Ratios.}
We also conduct experiments on medical VQA datasets with the total rank fixed at 32 (Fig.~\ref{fig:expert_scale}). Key findings include:
(1) With 8 experts, the 2-of-8 configuration achieves strong performance. Activating more experts may yield lower performance, showing that sparse expert activation is important.
(2) Increasing the total number of experts may improve performance, as seen in 2-of-8 vs. 4-of-16 / 8-of-32, but makes routers harder to train, increases memory consumption, and reduces runtime efficiency.
(3) Our method consistently outperforms MoELoRA, especially when activating only one expert, consistent with discussions in Sec. \ref{sec:scale}.
In practice, 2-of-8 offers a balanced trade-off between performance and storage efficiency.

\subsection{Computation Analysis}\label{sec:cost}
\paragraph{Parameter Size.}
The Params (\%) column (Table~\ref{tab:param}) compares parameter ratios of LoRA baselines and \ours. Our model achieves state-of-the-art performance with a parameter size of \(O(Hr) + O(He)\), much smaller than Full FT's \(O(H^2)\) and Full FT MoE's \(O(kH^2)\). Since \(r, e \ll H\), \ours\ is much more efficient. See Appendix \ref{app:parameters} for details.

\paragraph{FLOPs and Runtime}
We estimate the memory usage and performance of FT MoE based on single-GPU runtimes.
As shown in Table~\ref{tab:param}, the LoRA-MoE series trains much faster than Full FT MoE. Among LoRA-MoE variants, our method achieves the best performance with identical memory and time costs.
FLOPs analysis (Sec. \ref{app:flops}) reveals that Full FT MoE scales as \(O(ksH^2)\), while LoRA MoE simplifies to \(O(sH^2)\) since \(k < e\) and \(r \ll H\). Thus, LoRA MoE's FLOPs remain nearly constant, independent of \(k\), unlike Full FT MoE, which scales linearly with \(k\).

%% file: tables/app_vqa.tex
\begin{table}[htbp]
    \centering
    \footnotesize
    \caption{Accuracy (\%) of different Med-VLMs based on LLaVA-Med-1.5 on medical VQA task. }
    \vspace{-5pt}
    \resizebox{\columnwidth}{!}{
    \begin{tabular}{l|cc|c|cc}
    \toprule
        \rowcolor[HTML]{E9F3FE}  & \multicolumn{2}{c|}{\textbf{Radiology}} &\textbf{Ophthalmology} & \multicolumn{2}{c}{\textbf{Pathology}} \\
        \rowcolor[HTML]{E9F3FE} \multirow{-2}{*}{\textbf{Models}} & IU-Xray & MIMIC-CXR & Harvard-FairVLMed & Quilt-1M & PMC-OA \\ \midrule
        GPT-4o & 63.25 & 60.61 & 61.50 & 53.56 & 49.70 \\
        Gemini-1.5 & 59.73 & 61.02 & 58.53 & 56.88 & 52.17 \\
        LLaVA-v1.6	& 58.05 & 63.70 & 48.52 & 35.73 & 38.54 \\
        Qwen-2.5-VL & 59.43 & 60.43 & 38.06 & 28.74 & 29.53 \\
        InternVL-2 & 54.06 & 59.47 & 44.38 & 37.82 & 34.40 \\
        \midrule
        LLaVA-Med-1.5 & \underline{75.47} & \underline{75.79} & 63.03& \underline{62.80} & \underline{59.28}  \\
        Med-Flamingo & 26.74 & 61.27 & 42.06& 27.11& 32.62 \\
        MedVInT & 73.34 & 66.06& 35.92 & 26.81 & 27.77 \\
        RadFM & 26.67 &  69.30 &  52.47 & 27.02 & 25.12 \\
        miniGPT-Med & 54.87 & 53.92 & \underline{66.73} & 26.82 & 27.03 \\
        \midrule
        \ours\ &
        \textbf{90.33}& \textbf{84.68} & \textbf{88.43} & \textbf{73.74} & \textbf{66.32} \\
    \bottomrule
    \end{tabular}
    }
    \label{tab:app_vqa}
\end{table}

%% file: tables/report_gen.tex
\begin{table}[t]
\centering
\caption{
Model performance (\%) of different methods on report generation task.}
\vspace{-5pt}
\resizebox{\columnwidth}{!}{
\begin{tabular}{ll|ccccc|c}
\toprule
 & \multirow{2}{*}{Metric} & \multirow{2}{*}{RadFM} & LLaVA & Med      & \multirow{2}{*}{InternVL} & \multirow{2}{*}{MedDr} & \multirow{2}{*}{MedQwen}\\
 &                       &                        & Med   & Flamingo &                           &                        & \\
\midrule
\multirow{7}{*}{\rotatebox{90}{MIMIC-CXR}} &F1-RadGraph & 18.25 & 6.36 & 7.13  & 6.70  & \underline{22.44} & \textbf{29.40} \\
&BLEU-1      & 22.19 & 19.34 & 22.43 & 25.58 & \underline{27.28} & \textbf{33.74} \\
&BLEU-4      & 5.55  & 0.96  & 1.91  & 1.69  & \textbf{7.59} & \underline{7.38} \\
&ROUGE-1     & 28.88 & 21.44 & 21.69 & 22.87 & \underline{32.58} & \textbf{36.43}\\
&ROUGE-L     & 20.52 & 13.90 & 14.60 & 15.61 & \underline{22.59} & \textbf{25.69}\\
&CheXbert Vec & 31.18 & 15.56 & 18.69 & 15.90 & \underline{34.18} & \textbf{44.19}\\
&METEOR      & 20.42 & 13.25 & 14.02 & 17.54 & \underline{23.77} & \textbf{25.47}\\
\midrule
\multirow{7}{*}{\rotatebox{90}{IU-Xray}} &F1-RadGraph & 29.17 & 4.72  & 11.91 & 11.21 & \underline{33.19} & \textbf{37.76}\\
&BLEU-1      & \underline{40.78} & 14.85 & 16.28 & 18.27 & 37.71 & \textbf{41.59}\\
&BLEU-4      & 10.28 & 0.68  & 2.06  & 1.73  & \textbf{12.22} & \underline{11.61}\\
&ROUGE-1     & 36.79 & 12.98 & 14.93 & 18.22 & \underline{39.49} & \textbf{42.06}\\
&ROUGE-L     & 26.07 & 9.96  & 11.11 & 13.43 & \underline{28.35} & \textbf{29.65}\\
&CheXbert Vec & \underline{59.26} & 14.01 & 24.25 & 24.17 & 56.47 & \textbf{64.93}\\
&METEOR      & 30.92 & 14.48 & 17.69 & 20.03 & \underline{32.34} & \textbf{33.38}\\
\bottomrule
\end{tabular}}
\label{tab:report_gen}
\end{table}

%% file: tables/clip_result.tex
\begin{table*}
    \centering
    \caption{Zero-shot classification evaluation of BiomedCLIP ViT-B/16 with full fine-tuning and LoRA variants (rank is 8 if not specified) across various radiology datasets. \vspace{-5pt}}
    \label{tab:clip_result}
    \resizebox{\textwidth}{!}{
    \begin{tabular}{lccccccccccc}
        \toprule
        \multirow{2}{*}{\textbf{Approach}} & \multirow{2}{*}{\textbf{\# Params (\%)}} & \multicolumn{2}{c}{\textbf{X-ray}} & \multicolumn{2}{c}{\textbf{Ultrasound}} & \multicolumn{2}{c}{\textbf{MRI}} & \multicolumn{3}{c}{\textbf{CT}} & \multirow{2}{*}{\textbf{Avg.}} \\
        \cmidrule(lr){3-4} \cmidrule(lr){5-6} \cmidrule(lr){7-8} \cmidrule(lr){9-11}
        & &  \textbf{CheXpert(5x200)} & \textbf{RSNA} & \textbf{Thyroid} & \textbf{Breast} & \textbf{ACL} & \textbf{Meniscus} & \textbf{Axial} & \textbf{Coronal} & \textbf{Sagittal} & \\
        \midrule
        Full FT  &    100 & 65.30 & 72.56 & 76.05 & 76.12 & 85.89 & 40.09 & 40.13 & 28.09 & 26.59 & 56.76\\
        Full FT MoE & 760 & 72.39 & 74.53 & 78.59 & 88.50 & 87.38 & 46.34 & 45.09 & 33.40 & 29.27 & 61.72 \\
        \midrule
        \textit{Single LoRA Methods} \\
        \midrule
        LoRA &                     1.49 & 54.35 & 69.81 & 62.72 & 72.13 & 79.03 & 33.89 & 35.23 & 27.41 & 21.69 & 50.70\\
        LoRA (rank16) &            2.99 & 61.07 & 72.37 & 63.28 & 74.98 & 82.71 & 37.94 & 42.79 & 30.45 & 26.84 & 54.71\\
        LoRA (rank32) &            5.98 & 61.88 & 73.17 & 64.13 & 75.87 & 83.65 & 39.55 & 42.85 & 30.55 & 27.44 & 55.45 \\
        MiLoRA  &                  1.49 & 52.68 & 69.19 & 61.52 & 71.92 & 78.38 & 33.68 & 34.13 & 26.07 & 20.77 & 49.82 \\
        PiSSA &                    1.49 & 53.41 & 68.62 & 63.11 & 71.48 & 77.52 & 32.92 & 34.49 & 27.68 & 21.50 & 50.08\\
        \midrule
        \textit{LoRA MoE Methods} \\
        \midrule
        MoELoRA &                  2.24 & 62.36 & 72.80 & 64.85 & 74.89 & 83.83 & 40.53 & 41.91 & 30.84 & 27.65 & 55.52\\
        AdaMoLE &                  2.33 & 60.10 & 72.35 & 63.17 & 75.38 & 82.69 & 39.15 & 41.70 & 30.42 & 26.51 & 54.61\\
        HydraLoRA &                1.58 & 61.77 & 71.56 & 64.24 & 75.08 & 82.51 & 38.73 & 38.26 & 29.55 & 24.20 & 53.99\\
        \rowcolor{gray!20} Ours  & 2.24 & \textbf{66.83} & \textbf{74.14} & \textbf{77.88} & \textbf{77.23} & \textbf{86.57} & \textbf{42.45} & \textbf{44.05} & \textbf{32.10} & \textbf{28.24} & \textbf{58.83} \\
        \bottomrule
    \end{tabular}}
    \vspace{-15pt}
\end{table*}

%% file: tables/ablation.tex
\begin{table}[ht]
\scriptsize
\vspace{-5pt}
\centering
\caption{Ablation study of \ours. MoE denotes using the MoE architecture instead of a single LoRA. MS refers to using MoE scaling. O, P, M, and R represent different SVD-based initializations: original \ours\ segments, segments containing the principal singular values, segments containing the minor singular values, and randomly selected segments, respectively.}
\resizebox{0.85\linewidth}{!}{%
\begin{tabular}{lllllcc}
\toprule
\multirow{2}{*}{\textbf{MoE}} & \multicolumn{4}{c}{\textbf{SVD Initialization}} & \multirow{2}{*}{\textbf{Avg.}}& \multirow{2}{*}{\textbf{Avg. (w/o MS)}} \\ \cline{2-5}
                       & \textbf{P} & \textbf{M} & \textbf{R} & \textbf{O} &                                                 \\ \midrule
 \multicolumn{1}{c}{\omark}                    &           &           &       &                              & \multicolumn{1}{c}{67.3}&66.8                 \\
\multicolumn{1}{c}{\omark}                    &     \multicolumn{1}{c}{\omark}      &          &       &                                   & \multicolumn{1}{c}{67.4}&66.6              \\
\multicolumn{1}{c}{\omark}                    &           &     \multicolumn{1}{c}{\omark}      &      &                                   & \multicolumn{1}{c}{67.6}&66.9                 \\
\multicolumn{1}{c}{\omark}                    &           &           &    \multicolumn{1}{c}{\omark}    &                               & \multicolumn{1}{c}{67.7}& 66.9               \\
                     &         &           &       &  \multicolumn{1}{c}{\omark}   & \multicolumn{1}{c}{/}&60.1                 \\
\rowcolor[HTML]{DAE0FB}\multicolumn{1}{c}{\omark}                    &         &           &       &   \multicolumn{1}{c}{\omark}    & \multicolumn{1}{c}{\textbf{68.2}} & \textbf{67.3}             \\
 \bottomrule
\end{tabular}
}
\vspace{-4mm}
\label{tab:ablation}
\end{table}

%% file: tables/param.tex
\begin{table*}[t]
    \centering
    \caption{Comparison of our method against MoELoRA and HydraLoRA in memory
cost, training time, and performance. Memory cost was
measured and training time was recorded on the medical VQA benchmarks
using one A100 GPU with identical batch sizes.}
\vspace{-5pt}
    \resizebox{\textwidth}{!}{
    \begin{tabular}{ll|ccccccc|ccc}
        \toprule
        \multicolumn{2}{c|}{Model} & \multicolumn{2}{c}{\textbf{VQA-RAD}} & \multicolumn{2}{c}{\textbf{SLAKE}} & \multicolumn{2}{c}{\textbf{PathVQA}} & \multirow{2}{*}{\textbf{OMVQA}}& \multirow{2}{*}{\textbf{Params(\%)}} & \multirow{2}{*}{\textbf{Memory Usage}} & \multirow{2}{*}{\textbf{Epoch Time}}  \\
        \multicolumn{2}{c|}{\multirow{-2}{*}{\textbf{}}} & \textbf{close} & \textbf{open} & \textbf{close} & \textbf{open} & \textbf{close} & \textbf{open}    &  &   \\
        \midrule \midrule
        \multirow{3}{*}{\ours\ w/}
        & +MoELoRA & 72.3 & 57.2 & \underline{70.5} & \underline{53.7} & \underline{78.4} & \underline{38.6}  & \underline{65.1} & 0.96 & 32.56 GB & 29h15min \\
        & +HydraLoRA & \underline{73.6} & \underline{57.4} & 66.4 & 52.4 & 74.2 & 36.0 & 64.9 & 0.84 & 32.47 GB & 29h17min \\
        & \cellcolor[HTML]{DAE0FB}+Ours & \cellcolor[HTML]{DAE0FB}\textbf{78.8} & \cellcolor[HTML]{DAE0FB}\textbf{59.6} & \cellcolor[HTML]{DAE0FB}\textbf{75.3} & \cellcolor[HTML]{DAE0FB}\textbf{59.9} & \cellcolor[HTML]{DAE0FB}\textbf{84.2} & \cellcolor[HTML]{DAE0FB}\textbf{49.1} & \cellcolor[HTML]{DAE0FB}\textbf{70.6} & \cellcolor[HTML]{DAE0FB}\textbf{0.96} & \cellcolor[HTML]{DAE0FB}{32.56 GB} & \cellcolor[HTML]{DAE0FB}{29h19min}  \\
        \bottomrule
    \end{tabular}
    }
\label{tab:param}
\vspace{-10pt}
\end{table*}

%% file: sec/4_conclusion.tex
\section{Conclusion}
We propose \ours, a novel large medical vision-language model that unifies comprehension and generation through an enhanced LoRA fine-tuning approach. \ours\ adaptively integrates SVD-structured priors and aligns low-rank gradients with the fully fine-tuned MoE via theoretical scaling. Without modifying the underlying architecture or training algorithms, \ours\ substantially improves both efficiency and performance, achieving state-of-the-art results across 23 diverse medical comprehension and generation tasks. These results demonstrate its strong potential for real-world healthcare applications. 

\section*{Acknowledgements}
Funded by the Natural Sciences and Engineering Research Council of Canada (NSERC) and the Government of Canada’s New Frontiers in Research Fund (NFRF), [NFRFE-2022-00295].

%% file: sec/X_suppl.tex
\clearpage
\renewcommand{\thetable}{S\arabic{table}}
\renewcommand{\thefigure}{S\arabic{figure}}
\setcounter{table}{0}
\setcounter{figure}{0}

\maketitlesupplementary

\section{Pseudocode}
The pseudocode is provided below.
\begin{algorithm}[H]
\caption{\ours}\label{alg:goat}
\begin{algorithmic}[1]
    \Require Input vector $x$, input dimension $n$, hyperparameters $\eta,\rho$, number of experts $N$
    \Ensure Output $y = \tilde W^{(0)}x + \sum_{i=1}^N R(\mathbf{x})_i\, s\, B_i^{(0)} A_i^{(0)}$ x

    \Procedure{Initialization}{}
        \State \textbf{Scaling factor:} $s \gets \sqrt{3n\eta/r}$
        \State \textbf{SVD decomposition:}$W^{(0)} = U\,S\,V^\top$
        \For{$i = 1$ to $N$}
            \State $B_i^{(0)} \gets \sqrt{1/(s\rho)}\, U' S'^{1/2}$
            \State $A_i^{(0)} \gets \sqrt{1/(s\rho)}\, S'^{1/2} V'^\top$
        \EndFor
        \State $W_{\text{res}}^+ \gets \frac{s}{N} \sum_{i=1}^N B_i^{(0)} A_i^{(0)}$
        \State $\tilde{W}^{(0)} \gets W^{(0)} - W_{\text{res}}^+$
        \State \Return $\tilde W^{(0)},\{B_i^{(0)},A_i^{(0)}\}$
    \EndProcedure

    \Procedure{Forward}{$x$}
        \State Compute gating weights $R(\mathbf{x})_i$
        \State \Return $\tilde W^{(0)}x + \sum_{i=1}^N R(\mathbf{x})_i\, s\, B_i^{(0)} A_i^{(0)}$ x
    \EndProcedure
\end{algorithmic}
\end{algorithm}

\section{Experiment Details}
\subsection{Dataset Information}
Tables~\ref{tab:data_summary} and \ref{tab:dataset_info} summarize the datasets used in this study, covering a wide range of biomedical imaging modalities, such as MRI, CT, ultrasound, X-ray, and others. Each dataset is described in terms of its imaging modality, number of images, and question–answer text.

\noindent\textbf{Visual Question Answering:} VQA-RAD~\cite{lau2018dataset} contains 3,515 question–answer (QA) pairs and 315 radiology images, with questions spanning 11 categories and including both closed-ended and open-ended types. SLAKE~\cite{liu2021slake} comprises 642 radiology images and over 7,000 QA pairs, along with segmentation masks and object detection bounding boxes. PathVQA~\cite{he2020pathvqa} includes 4,998 pathology images with 32,799 QA pairs focusing on attributes such as location, shape, color, and appearance, categorized into open-ended and closed-ended types.
OmniMedVQA~\cite{hu2024omnimedvqa} comprises 118,010 medical images and 127,995 QA pairs collected from 73 different medical datasets, encompassing 12 imaging modalities and covering more than 20 distinct anatomical regions. Importantly, all images in this benchmark originate from authentic clinical scenarios, ensuring alignment with real-world medical requirements.

\noindent\textbf{Report Generation:} MIMIC-CXR~\cite{johnson2019mimic} includes 371,920 chest X-rays associated with 227,943 imaging studies from 65,079 patients. Following RadFM~\cite{wu2025towards} and R2Gen~\cite{chen2020generating}, we use 337,292 cases for training. IU-Xray~\cite{demner2016preparing} consists of 7,470 chest X-ray images paired with corresponding diagnostic reports; following R2Gen~\cite{chen2020generating}, we use 4,730 cases from the training split.

\noindent\textbf{Classification:} We use the UniMed \cite{khattak2024unimed} dataset for pretraining and evaluate our model on standard zero-shot classification benchmarks commonly used for medical VLM evaluation, covering four imaging modalities: X-ray, MRI, CT, and ultrasound.
For evaluation, we use the test sets from these widely recognized medical VQA datasets and additionally assess classification performance.

\input{tables/data_summary}

\subsection{Implementation Details and Hyperparameters}\label{app:implementation}

Visual question answering and image classification experiments are conducted on a single NVIDIA A100 GPU with 40~GB of RAM. Additional ablation and report generation experiments are performed on four A100 GPUs, each with 40~GB of RAM. All models are trained and evaluated using bfloat16 precision.

We fine-tune our model on each task using carefully selected hyperparameters to ensure optimal performance. Detailed configurations, including learning rate, batch size, number of epochs, and other training settings, are provided to ensure reproducibility and consistency across experiments. These hyperparameters are summarized in Table~\ref{tab:vqa_hyper} and Table~\ref{tab:cl_hyper}.
We set $\rho = 10$. The ratio between the full fine-tuning learning rate and the LoRA learning rate ($\eta$) is empirically set to 1 for ViT. In the Qwen experiments, when using a learning rate at the $1 \times 10^{-4}$ level, we set $\eta = 0.1$; when using a learning rate at the $1 \times 10^{-5}$ level, we set $\eta = 1$. This configuration follows common practice, where LoRA-based tuning typically employs a learning rate around $1 \times 10^{-4}$, while full fine-tuning methods operate at a lower rate near $1 \times 10^{-5}$.
For LoRA-MoE experiments, we set the balance loss coefficient to $1 \times 10^{-3}$. We adopt a top-$k$ routing strategy with $k=2$, which outperforms other routing strategies as shown in Fig.~\ref{fig:expert_scale}. The same routing strategy is applied consistently across all LoRA-MoE baselines.

\input{tables/datasets}
\input{tables/hyperparameter}

\subsection{Evaluation Metrics}
Following~\citep{xia2024rule,lin2023medical}, we use Accuracy, F1 Score and AUROC for evaluating the medical VQA task, and BLEU Score~\citep{papineni2002bleu}, ROUGE-L~\citep{lin2004rouge}, and
METEOR~\citep{banerjee2005meteor} for evaluating the report generation task. In alignment with existing hallucination benchmarks in both general and medical domains, accuracy (Acc) is employed as
the primary metric for evaluating close-ended hallucination. Assessing open-ended hallucinations in generated reports, we follow CheXpert~\cite{irvin2019chexpert} and measure hallucination rates
using CHAIR~\cite{li-etal-2023-evaluating}, which evaluates key symptom-centered visual findings. CHAIR is defined as:
$
\text{CHAIR} = \frac{|\mathcal{G} - \mathcal{S}|}{|\mathcal{G}|},
$
where $\mathcal{G}$  represents the set of findings extracted from the generated report using CheXbert~\cite{smit2020combining}, and $\mathcal{S}$ represents the set of findings extracted from the
real report using the same method.
For a more comprehensive evaluation, we additionally report key findings recall (Recall) and assess overall report quality using specialized metrics such as CheXbert~\cite{smit2020combining},
RadGraph~\cite{jain2021radgraph}, and RaTEScore~\cite{zhao2024ratescore}, which have been specifically developed for medical report generation. These metrics align closely with radiologists' assessments,
making them particularly suitable for evaluating the generation of open-ended medical reports, as demonstrated by RaTEScore data.

We evaluate the effectiveness of \ours\ in mitigating hallucinations in medical LVLMs across three medical benchmarks \cite{chang2025medheval}:

\begin{itemize}
    \item \textbf{Visual misinterpretation hallucination:} This category is evaluated using two datasets — Multi-Modality Visual Hallucination (MM-VisHal) and Chest X-ray Visual Hallucination (CXR-VisHal).

    \item \textbf{Knowledge deficiency hallucination:} This dataset is constructed from the MIMIC-CXR test set, where imaging reports are used as interpretations to prompt GPT-4 for generating diagnostic questions.

    \item \textbf{Context misalignment hallucination:} This benchmark links MIMIC-CXR data with the de-identified MIMIC-IV-EHR dataset \cite{johnson2023mimic} via subject IDs, providing comprehensive medical notes corresponding to each chest X-ray.
\end{itemize}

\begin{figure}[htbp]
    \centering
    \includegraphics[width=.95\columnwidth]{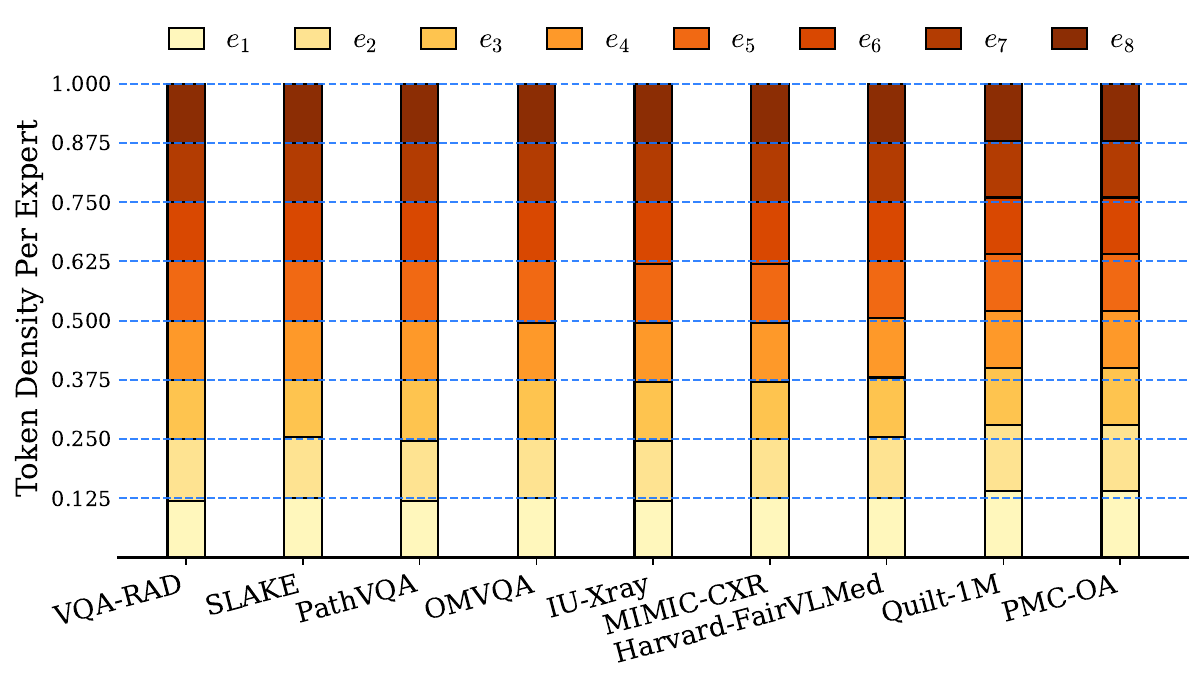}
    \caption{Expert load distribution across medical datasets.}
    \label{fig:rout}
\end{figure}

\begin{figure}[h]
\centering
\begin{subfigure}{0.49\columnwidth}
    \centering
    \includegraphics[width=\linewidth]{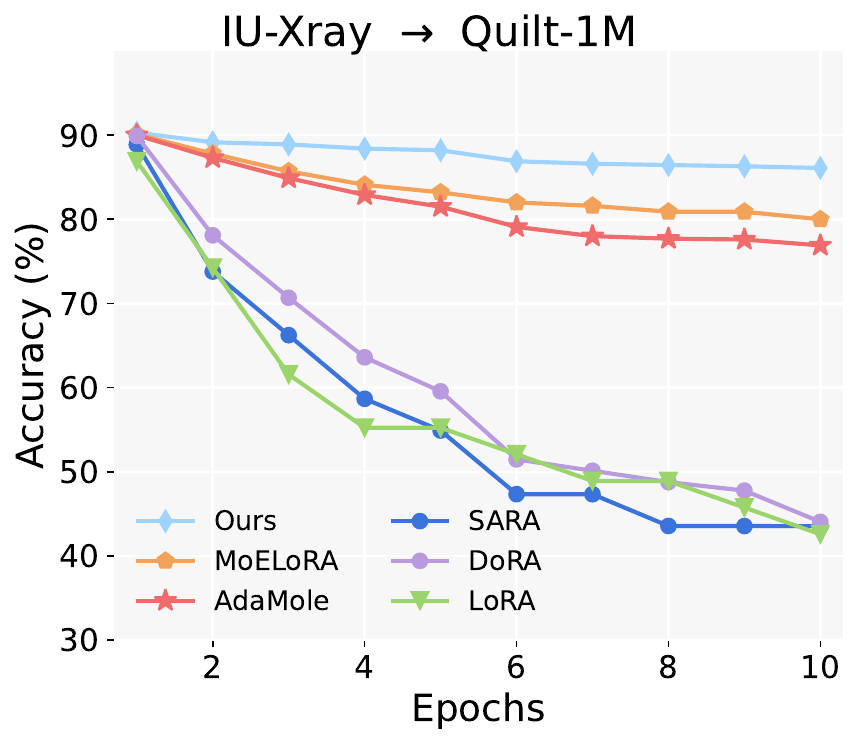}
\end{subfigure}
\hfill
\begin{subfigure}{0.49\columnwidth}
    \centering
    \includegraphics[width=\linewidth]{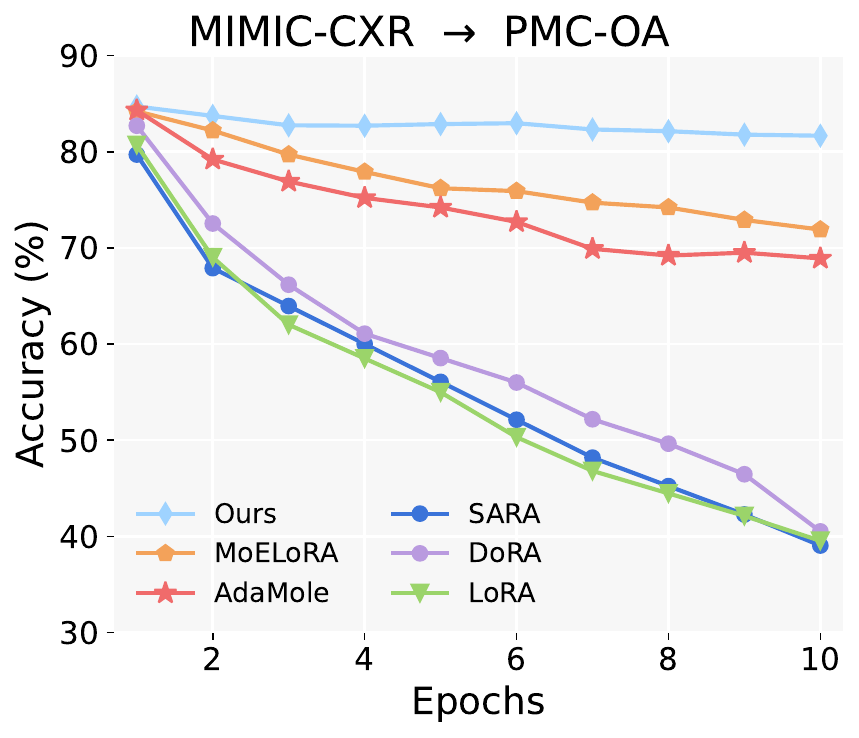}
\end{subfigure}
\caption{Catastrophic forgetting results.}
\label{fig:combined}
\end{figure}

\section{Additional Details}

\subsection{Routing Analysis}
Fig.~\ref{fig:rout} reports the expert load during training on nine datasets. With eight experts and two activated per token, the expected token density per expert is 0.125. The observed routing remains well balanced: no expert becomes inactive, and the load fluctuates around 0.125 within approximately $\pm$ 10\%. Because each expert is initialized from a distinct SVD-derived subspace, this stable and non-degenerate routing suggests that the spectral priors remain functionally distinct and continue to guide specialization, rather than behaving like indistinguishable zero-initialized adapters, even after variance-reducing damping (via $\rho$ and scaling). If the damping had erased the spectral structure, we would expect pronounced routing skew or expert collapse (i.e., a subset of experts dominating), a phenomenon commonly reported in prior MoE analyses.

\input{tables/rout}
\subsection{Catastrophic Forgetting}\label{app:cat}
To further evaluate the robustness of our method against forgetting, we extend the continual-learning setting to four datasets and multiple task sequences, as illustrated in Fig. \ref{fig:combined}. This expanded evaluation provides a broader view of retention performance across different sequential learning scenarios. We additionally include DoRA \cite{liu2024dora} and SARA \cite{hu2024sara} as representative SVD-based baselines for comparison. Across all dataset sequences, our MoE-SVD consistently maintains performance degradation below 5\%, indicating stable knowledge retention even when the model is exposed to multiple domain shifts.

\subsection{Routing Techniques}
To further investigate the effect of routing strategies, we conduct additional experiments comparing several alternatives, including top-$p$ routing and a top-$k$ routing variant with shared experts (Table~\ref{tab:routing_comparison}). Among the evaluated configurations, the top-$k$ strategy with $k=2$ consistently provides the strongest performance, outperforming the other routing schemes.

\input{tables/qwen_pro}

\subsection{Proper Scaling}
MedQwen assumes that the LoRA-MoE adapters are properly scaled at initialization. 
However, this assumption may not hold in practice. To address this issue, we extend the formulation to unscaled settings by aligning the scaling factors of the experts. 
In particular, we treat the first expert, which corresponds to the dominant low-rank spectral component and is analogous to the first expert in full MoE fine-tuning, as the reference with scaling factor $s_1$. 
To reduce the gap between our method and full MoE fine-tuning, we align the scaling factors $s_i$ of the remaining experts so that their effective magnitudes match the reference expert. 
Formally, the scaling factors must satisfy:
$s_1^2 \sigma_0 = s_i^2 \sigma_i,$
which yields: $s_i = s_1 \sqrt{\frac{\sigma_0}{\sigma_i}}.$
Here, $\sigma_i$ denotes the spectral mass of the corresponding segment, computed as the sum of singular values when the segment rank exceeds one. 
Importantly, only the scaling factors $s_{i>1}$ are adjusted, while the adapter initialization itself follows Eq.~\ref{eq:initAB}. 
We refer to this variant as \textbf{MedQwen-e}. 
Empirically, MedQwen-e achieves performance comparable to MedQwen across benchmarks (Table~\ref{tab:qwen_pro}), indicating that the proposed scaling strategy stabilizes the method even in unscaled initialization settings.

\section{Hallucination Evaluation}\label{app:hall}
\subsection{Visual Misinterpretation Hallucination}
A visual misinterpretation hallucination occurs when the model interprets fundamental visual components that are factually incorrect
or unsupported by medical evidence.

\noindent
\textbf{Hallucination Evaluation on Closed-Ended Evaluations. } 
The close-ended evaluation results on MM-VisHal and CXR-VisHal are detailed in Table~\ref{tab:close_ended_VMH}. Our method, \ours, significantly outperforms existing state-of-the-art models across all metrics and benchmarks, indicating the effectiveness of MoE in reducing medical hallucinations in close-ended medical tasks. GPT-4o, owing to its large-scale training and strong cross-domain instruction following, generally exhibits superior resistance to hallucinations compared to other (Med)-LVLMs. In contrast, general-domain LVLMs demonstrate particular weaknesses with hallucination sub-types like Symptom and Measurement across all tested modalities. Within the Med-LVLM category, CheXagent shows better overall accuracy on the CXR-VisHal benchmark. However, despite specialized medical training, these models frequently display higher hallucination rates on the MM-VisHal benchmark, where their accuracy is markedly lower than on the single-modality CXR-VisHal dataset. Notably, models such as LLM-CXR and CheXagent achieve exceptional performance on datasets aligned with their primary training domains, such as chest X-rays.

\begin{table}[t]
\centering
\resizebox{\columnwidth}{!}
{
\begin{tabular}{l|c|c|c|c|c}
\toprule
\rowcolor[HTML]{fbe7b6} \textbf{LVLM} & \textbf{CheXbert $\uparrow$} & \textbf{RadGraph $\uparrow$} & \textbf{RaTEScore $\uparrow$} & \textbf{Recall $\uparrow$} & \textbf{CHAIR $\downarrow$} \\
\midrule
GPT-4o & 21.71 & 10.28 & \underline{45.39} & 33.73 & 11.99 \\
LLaVA-NeXT 7B & 16.31 &  4.41 & 39.93 & 10.88 & 16.08 \\
LLaVA-NeXT 13B &  14.76 & 5.34 & 38.59 &  6.38 & 14.82 \\
MiniGPT-4 & 17.71 & 7.18 & 39.90 & 10.54 & 18.02 \\
\midrule
LLaVA-Med & 19.72 & 7.31 & 39.86 & 25.17 & 20.85 \\
LLaVA-Med-1.5 & 18.44 & 4.96 & 39.47 & 13.27 & 19.74 \\
LLM-CXR & 24.34 & 7.57 & 38.53 & 29.85 & 9.18  \\
Med-Flamingo & 17.50 & 5.83 &  35.87 & 17.52 &  23.96 \\
RadFM & 23.74 & 6.69 & 37.04 & 24.66 & 6.89 \\
CheXagent & \underline{30.32} & 12.35 & 43.18 & \underline{33.93} & \underline{6.88} \\
XrayGPT & 25.63 & \underline{12.88} & 44.45 & 30.87 & 12.84 \\
\midrule
\ours & \textbf{35.80} & \textbf{13.78} & \textbf{49.77} & \textbf{37.24} & \textbf{6.21} \\
\bottomrule
\end{tabular}}
\caption{{Open-ended evaluation on visual misinterpretation hallucination} (\underline{underlined}: second-best, \textbf{Bold}: best)}.
\label{tab:open_ended_VMH}
\vspace{-0.2in}
\end{table}

\begin{figure}[htp]
\centering
        \includegraphics[width=0.9\columnwidth]{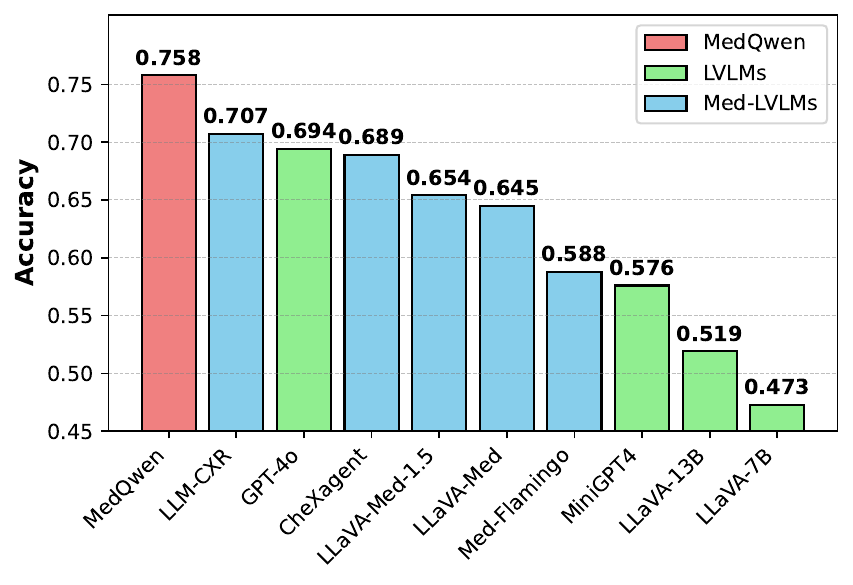}%
    \caption{Close-ended evaluation of knowledge deficiency hallucination in (Med)-LVLMs and the effectiveness of hallucination
mitigation methods.}
\label{fig:close_ended_KDH}
\end{figure}

\begin{table*}[t]
\centering

\vspace{-0.1in}
\small
\resizebox{\textwidth}{!}{
\begin{tabular}{l|cccccc|c}
\toprule
\rowcolor[HTML]{fbe7b6} & \multicolumn{6}{c|}{\textbf{Generation Metrics}} & \multicolumn{1}{c}{\textbf{Hallucination Score}} \\
\cline{2-8}
\rowcolor[HTML]{fbe7b6} \multirow{-2}{*}{\textbf{LVLM}} & \textbf{BertScore $\uparrow$} & \textbf{BLEU $\uparrow$} & \textbf{METEOR $\uparrow$} & \textbf{ROUGE-1 $\uparrow$}
  & \textbf{ROUGE-2 $\uparrow$} & \textbf{ROUGE-L $\uparrow$} & \textbf{$\mathcal{S}_h$ $\downarrow$} \\
\midrule
 GPT-4o & \underline{91.71}  & \underline{14.60} & 33.24 & \underline{47.77} & \underline{28.76} & \underline{40.76} & \underline{0.77 ± 0.81} \\
 LLaVA-NeXT 7B & 90.16 & 13.81 & 38.34 & 44.75 & 22.04 &  33.83 & 2.02 ± 1.20 \\
 LLaVA-NeXT 13B & 89.56 & 12.03 & 39.46 & 41.59 & 20.16 &  30.70 &  2.09 ± 1.09 \\
 MiniGPT-4 & 86.93 & 7.49 & 32.51 & 34.62 & 14.42 & 24.55 & 3.29 ± 1.25 \\
\midrule
 LLaVA-Med & 89.51 & 11.59 & 40.68 & 41.44 &  19.99 & 30.44 & 1.92 ± 1.02 \\
 LLaVA-Med-1.5 & 89.86 & 12.98 & \underline{41.30} & 43.52 & 21.37 & 32.27 & 1.76 ± 0.96 \\
 LLM-CXR & 87.98 & 4.15 &  16.55 & 28.71 & 13.05 & 23.28 & 2.52 ± 1.62 \\
 Med-Flamingo& 84.52 & 5.33 & 22.74 & 26.24 & 10.17  & 21.00  & 3.69 ± 1.10 \\
 RadFM&  79.66 & 7.99 & 25.51 & 32.63 & 14.22 & 24.14 & 2.30 ± 1.56 \\
 CheXagent& 87.82 & 4.65 & 16.85 & 28.07 & 15.38 & 23.75 & 2.08 ± 1.50 \\
 XrayGPT& 83.82 &  1.91 & 17.49 &  21.62 &  3.31 &  13.99 &  4.78 ± 0.63 \\
\midrule
 \ours & \textbf{92.75} & \textbf{17.49} & \textbf{44.60} & \textbf{49.37} & \textbf{31.91} & \textbf{43.12} & \textbf{0.63 ± 0.21} \\
\bottomrule
\end{tabular} }
\caption{{Results on open-ended evaluation of knowledge deficiency hallucination}. (\underline{underlined}: second-best, \textbf{Bold}: best) }
\label{tab:open_ended_KDH}
\vspace{-0.15in}
\end{table*}

\noindent
\textbf{Hallucination Evaluation on Open-Ended Tasks.} 
Table~\ref{tab:open_ended_VMH} presents the hallucination rates (CHAIR) from our open-ended evaluation, indicating that the majority of Med-LVLMs, including CheXagent, struggle to resist hallucinations when generating critical medical findings. Notably, our proposed method, \ours, achieves a lower hallucination rate and higher recall, outperforming both Med-LVLMs and general-domain LVLMs. Furthermore, the LLaVA-Med series demonstrates suboptimal performance in open-ended evaluations, exhibiting higher hallucination rates despite achieving higher recall compared to general-domain LVLMs. Report-specific metrics consistently reflect the overall quality of generated outputs: models with lower CHAIR scores and higher recall, such as CheXagent, GPT-4o, and XrayGPT, generally achieve superior performance.

\subsection{Knowledge Deficiency Hallucination}
Hallucination can also occur when the model correctly interprets the image, such as recognizing key organs and visual features, but lacks the comprehensive medical knowledge required for accurate diagnosis or clinical decision-making.

\noindent
\textbf{Hallucination Evaluation on Closed-Ended Tasks. } 
The results presented in Fig.~\ref{fig:close_ended_KDH} indicate that our proposed method, \ours, achieves a significant accuracy of 75.8\%, representing a 5.1\% improvement over LLM-CXR and a 6.4\% improvement over GPT-4o. Med-LVLMs, benefiting from their specialized medical knowledge, typically exhibit superior accuracy compared to general-domain LVLMs, with certain Med-LVLMs achieving performance comparable to GPT-4o. Despite this, their overall performance against knowledge-based hallucinations remains inadequate. These observations highlight that, even with training on varied multimodal medical datasets, conventional Med-LVLMs are prone to generating hallucinations when responding to diagnostic inquiries requiring specific domain knowledge. Therefore, our findings suggest that our MoE approach offers a more robust solution for mitigating knowledge hallucinations than traditional medical tuning methods.

\noindent
\textbf{Hallucination Evaluation on Open-Ended Tasks.} 
As evidenced in Table~\ref{tab:open_ended_KDH}, our proposed method, \ours, achieves the highest performance among the twelve evaluated LVLMs, exhibiting a remarkably low hallucination score of $S_h = 0.63$. Most Med-LVLMs demonstrate an increased propensity for hallucinations when interpreting complex medical knowledge, which aligns with observations from generation-focused metrics. Conversely, the LLaVA-Med series exhibits greater robustness against knowledge-based hallucinations. Regarding generation quality, most Med-LVLMs show relatively lower word-level coverage of ground truth compared to general-domain LVLMs such as LLaVA-NeXT 7B and 13B, indicating suboptimal content consistency. While GPT-4o consistently ranks second across the majority of metrics and surpasses certain specialized Med-LVLMs, XrayGPT performs poorly across most evaluation metrics, frequently generating irrelevant text and extraneous details in a medical-report style, which reflects its confined training focus primarily on medical summary generation.

\subsection{Context Misalignment Hallucination}
In addition to the evaluations outlined in previous sections, clinical practice necessitates that medical image interpretation aligns with the patient’s comprehensive medical history. This includes
critical factors such as treatment plans, diagnostic records, family history, and other relevant clinical data. However, existing benchmarks for assessing hallucination in medical imaging predominantly
focus on isolated image analysis, neglecting the broader clinical context integral to real-world practice. To address this gap and better align with the practical demands of the medical field, we
evaluate the model’s susceptibility to hallucinations by contextualizing medical images within the patient’s holistic medical background.

\begin{figure}[htp]
\centering
        \includegraphics[width=.9\columnwidth]{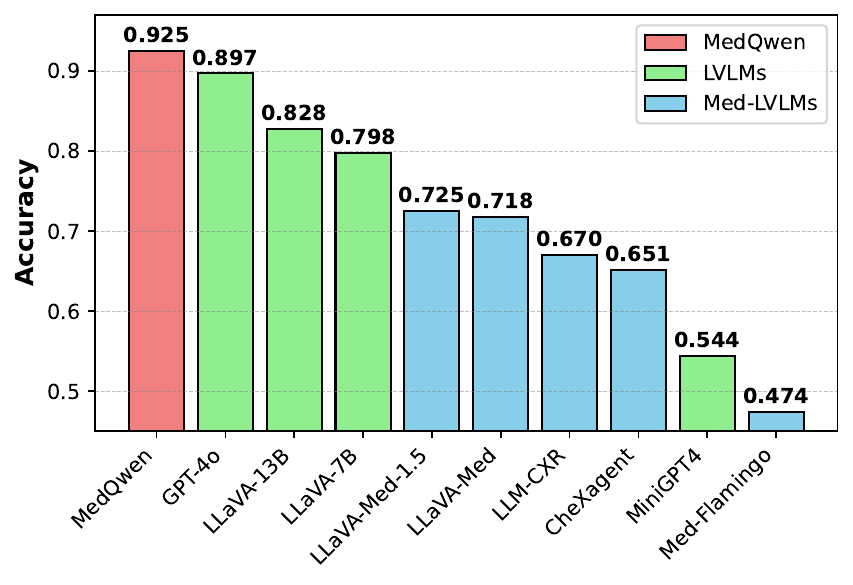}%
    \caption{Close-ended evaluation of context misalignment hallucination in (Med)-LVLMs and the effectiveness of hallucination
mitigation methods.}
\label{fig:close_ended_CMH}
\end{figure}

\noindent
\textbf{Mitigation Evaluation Results.} \\
Despite the superior performance of our proposed method (\ours) illustrated in Fig.~\ref{fig:close_ended_CMH}, general-domain LVLMs such as GPT-4o and LLaVA-NeXT 13B achieve higher accuracy than Med-LVLMs when responding to close-ended contextual questions. Notably, Med-Flamingo exhibits a below-average performance, suggesting that conventional fine-tuning on multimodal medical data might inadvertently impair the inherent reasoning capabilities of the foundational LVLMs. Consequently, Med-LVLMs become susceptible to hallucinations when confronted with intricate clinical contexts. Concurrently, these results underscore the efficacy of our proposed mixture-of-LoRA-experts approach in addressing the most challenging hallucination benchmarks.

\section{Parameter and FLOPs Analysis}\label{app:parameters}
We present a comprehensive parameter analysis comparing the complexity of various baseline models and our proposed method across different backbone architectures. The notation used for the architectural parameters is defined as follows:
\begin{itemize}
    \item $\textbf{H}$: Hidden dimension.
    \item $\textbf{r}$: Rank of the low-rank adaptation components.
    \item $\textbf{e}$: Number of experts in the MoE layer.
    \item $\textbf{L}$: Total number of layers in the model.
    \item $\textbf{V}$: Vocabulary size.
    \item $\textbf{P}$: Patch size in the Vision Transformer model.
    \item $\textbf{C}$: Number of input channels in the ViT model.
\end{itemize}

\noindent The detailed parameter budget analysis for the \ours\ and BiomedCLIP architectures is provided below:
\paragraph{MedQwen-7B:} $H=4096, r=32, e=2, L=28, V=151646$. The activation parameters are \texttt{q, k, v, up, down}.

\begin{enumerate}
    \item \textbf{FFT}:
    \begin{itemize}
        \item \textbf{Total Parameters}: \( (10.25H^2 + 2H)L + H + 2HV \)
        \begin{itemize}
            \item Embedding layer and LM head: \( 2HV \)
            \item Attention mechanism: \( 2.25H^2 \)
            \item MLP layer: \( 8H^2 \)
            \item RMSNorm (2 layers): \( 2H \)
            \item Additional RMSNorm (last layer): \( H \)
            \item Total per layer: \( 10.25H^2 + 2H \)
        \end{itemize}
    \end{itemize}
    \item \textbf{LoRA/MiLoRA/PiSSA/KASA}:
    \begin{itemize}
        \item \textbf{Total Parameters}: \( 11.58HLr \)
        \item \textbf{Proportion}: \( 0.78\% \)
    \end{itemize}
    \item \textbf{DoRA}:
    \begin{itemize}
        \item \textbf{Total Parameters}: \( (11.58Hr + 5)L \)
        \item \textbf{Proportion}: \( 0.78\% \)
    \end{itemize}
    \item \textbf{HydraLoRA}:
    \begin{itemize}
        \item \textbf{Total Parameters}: \( (4.91Hr + 6.66Hr/e + 6.66He)L \)
        \item \textbf{Proportion}: \( 0.58\% \)
    \end{itemize}
    \item \textbf{AdaMoLE}:
    \begin{itemize}
        \item \textbf{Total Parameters}: \( (11.58Hr + 6.66He + 6.66H)L \)
        \item \textbf{Proportion}: \( 0.82\% \)
    \end{itemize}
    \item \textbf{MoELoRA/Ours}:
    \begin{itemize}
        \item \textbf{Total Parameters}: \( (11.58Hr + 6.66He)L \)
        \begin{itemize}
            \item Attention mechanism: \( 4.25Hr+3He \)
            \item MLP layer: \( 7.33Hr+3.66He \)
            \item Total per layer: \( 6.66He + 11.58Hr  \)
        \end{itemize}
        \item \textbf{Proportion}: \( 0.81\% \)
    \end{itemize}
\end{enumerate}

\paragraph{BiomedCLIP:} $H=768, e=2, r=8, P=32, L=12, C=3$. The activation parameters include \texttt{q, k, v, o, fc1, fc2}.

\begin{enumerate}
    \item \textbf{FFT}:
    \begin{itemize}
        \item \textbf{Total Parameters}: \( (C+1)HP^2 + (12H^2 + 2H)L + H^2 + 3H + PH \)
        \item \textbf{Breakdown}:
        \begin{itemize}
            \item Embedding layer: \( PH+H+(C+1)P^2H \)
            \item encoder (L layers): \( (12H^2+2H)L \)
            \item LayerNorm (1 layers): \( 2H \)
            \item Pooler: \( H^2 \)
        \end{itemize}
    \end{itemize}
    \item \textbf{Full FT MoE}:
    \begin{itemize}
        \item \textbf{Total Parameters}: \( (C+1)P^2H + (12eH^2 + 2H + 9He)L + 3H + PH + H^2 \)
        \item \textbf{Proportion}: \( 760\% \)
    \end{itemize}
    \item \textbf{LoRA/PiSSA/MiLoRA}:
    \begin{itemize}
        \item \textbf{Total Parameters}: \( 18HLr \)
        \item \textbf{Proportion}: \( 1.49\% \)
    \end{itemize}
    \item \textbf{LoRA (rank=16)}:
    \begin{itemize}
        \item \textbf{Total Parameters}: \( 18HLr \)
        \item \textbf{Proportion}: \( 2.99\% \)
    \end{itemize}
    \item \textbf{LoRA (rank=32)}:
    \begin{itemize}
        \item \textbf{Total Parameters}: \( 18HLr \)
        \item \textbf{Proportion}: \( 5.98\% \)
    \end{itemize}
    \item \textbf{HydraLoRA}:
    \begin{itemize}
        \item \textbf{Total Parameters}: \( (9Hr + 9He + 9Hr/e)L \)
        \item \textbf{Proportion}: \( 1.58\% \)
    \end{itemize}
    \item \textbf{AdaMoLE}:
    \begin{itemize}
        \item \textbf{Total Parameters}: \( (18Hr + 9He + 9H)L \)
        \item \textbf{Proportion}: \( 2.33\% \)
    \end{itemize}
    \item \textbf{MoLoRA/Ours}:
    \begin{itemize}
        \item \textbf{Total Parameters}: \( (18Hr + 9He)L \)
        \item \textbf{Breakdown}:
        \begin{itemize}
            \item Attention mechanism: \( 8Hr+4He \)
            \item MLP layer: \( 10Hr+5He \)
            \item Total per layer: \( 18Hr + 9He \)
        \end{itemize}
        \item \textbf{Proportion}: \( 2.24\% \)
    \end{itemize}
\end{enumerate}

\subsection{FLOPs Analysis}\label{app:flops}
We conduct a detailed parameter analysis for each baseline and our proposed method, considering the underlying architectural backbones. The analysis utilizes the following set of variables: $H$, representing the model hidden dimension; $e$, indicating the number of experts; $r$, denoting the LoRA rank; $L$, representing the number of layers; $P$, denoting the patch size in ViT; $V$, signifying the vocabulary size; and $C$, representing the number of input channels in ViT. The subsequent sections provide the specific analysis for the MedQwen and BiomedCLIP architectures.

\paragraph{FLOPs for FT MoE\\ \\}

\hspace{-14pt} 1. MoE linear for $q$ and $o$:  
The FLOPs are calculated as
\[
2 \cdot \bigl( 2BsHe + k \cdot 2BsH^2 \bigr).
\]

\noindent 2. MoE linear for $k$ and $v$:  
Since MedQwen 7B's GQA reduces the number of heads for $k$ and $v$ to $1/8$ of $q$'s heads, the FLOPs are
\[
2 \cdot \bigl( 2BsHe + k \cdot 2BsH^2/8 \bigr).
\]

\noindent 3. FLOPs for $q \cdot k$ and $\text{score} \cdot v$:  
These remain independent of $k$, as we only upcycle the linear projection to $e$ copies. The FLOPs are
\[
2Bs^2H + 2Bs^2H.
\]

\noindent 4. MoE linear for $\text{down}$ and $\text{gate}$:  
Since MedQwen 7B uses SwiGLU FFN, the FLOPs are
\[
2 \cdot \bigl( 2BsHe + k \cdot 2BsH \cdot \tfrac{8}{3}H \bigr).
\]

\noindent 5. MoE linear for $\text{up}$:  
The FLOPs are
\[
2Bs \cdot \tfrac{8}{3}He + k \cdot 2Bs \cdot \tfrac{8}{3}H^2.
\]

\hspace{-14pt} Across $L$ layers, including the vocabulary embedding transformation, the total FLOPs of Full FT MoE are:
\begin{align*}
\mathrm{FLOPs}
&= BL \Bigl(
      \tfrac{52}{3} esH
      + \tfrac{41}{2} ksH^2
      + 4s^2H
    \Bigr)
\\
&\quad
+ 2BsHV.
\end{align*}

\paragraph{FLOPs for Ours/MoLoRA/HydraLoRA\\ \\}

\hspace{-14pt} 1. MoE linear for $q$ and $o$:  
The FLOPs are calculated as
\[
2B \cdot \bigl( 2sH^2 + 2esH + 2k(sHd + sHd) \bigr).
\]

\noindent 2. MoE linear for $k$ and $v$:  
Considering the effect of MedQwen 7B's GQA on $k$ and $v$, the FLOPs are
\[
2B \cdot \bigl( 2sH^2/8 + 2esH + 2k(sHd + sHd/8) \bigr).
\]

\noindent 3. FLOPs for $q \cdot k$ and $\text{score} \cdot v$:  
The FLOPs for these operations are
\[
2Bs^2H + 2Bs^2H.
\]

\noindent 4. MoE linear for $\text{down}$ and $\text{gate}$:  
Since MedQwen 7B uses SwiGLU FFN, the FLOPs are
\[
2B \cdot \bigl( 2sH \cdot \tfrac{8}{3}H + 2esH + 2k(sHd + s d \tfrac{8}{3}H) \bigr).
\]

\noindent 5. MoE linear for $\text{up}$:  
The FLOPs are
\[
2BsH \cdot \tfrac{8}{3}H + 2Bs \tfrac{8}{3}He
+ 2k \bigl( Bs \tfrac{8}{3}Hd + BsrH \bigr).
\]

\noindent Across $L$ layers, including the vocabulary embedding transformation, the total FLOPs of LoRA-MoE are:
\begin{align*}
\mathrm{FLOPs}
&= BL \Bigl(
      \tfrac{52}{3} esH
      + \tfrac{41}{2} sH^2
      + 4s^2H
      + \tfrac{69}{2} ksHd
    \Bigr)
\\
&\quad
+ 2BsHV
\end{align*}

\begin{table}[htbp]
\centering
\begin{adjustbox}{width=\linewidth}
\begin{tabular}{l}
\hline
\textbf{Case 1: Hallucination and mitigation in MM-VisHal} \\ \hline
\begin{tabular}[c]{@{}l@{}}
\includegraphics[width=3cm]{}
\end{tabular}
\begin{tabular}[c]{@{}l@{}}
\textbf{Question:} ``Does the brain appear abnormal in the image?''\\
\textbf{Ground truth:} ``Yes.''\\
\textbf{LLaVA-Med:} {\color{red}``No, there is no evidence of edema}\\{\color{red} in the brain in the MRI.''}\\
\textbf{Ours:} \textcolor{ForestGreen}{``Yes, there is evidence of edema in brain tissue.''}
\end{tabular}\\
\hline
\textbf{Case 2: Hallucination and mitigation in CXR-VisHal} \\ \hline
\begin{tabular}[c]{@{}l@{}}
\includegraphics[width=3cm]{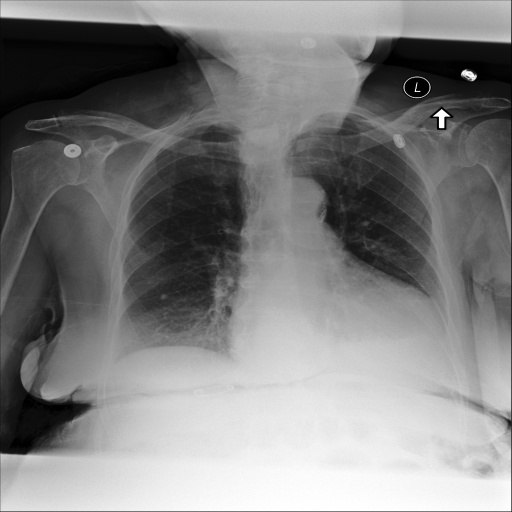}
\end{tabular}
\begin{tabular}[c]{@{}l@{}}
\textbf{Question:} ``Does the image show the presence of a pneumothorax?''\\
\textbf{Ground truth:} ``No.''\\
\textbf{LLaVA-Med:} {\color{red}``Yes, this image shows signs consistent with a}\\{\color{red} pneumothorax.''}\\
\textbf{Ours:} \textcolor{ForestGreen}{``No, there is no clear evidence of a pneumothorax.''}
\end{tabular}\\
\hline
\end{tabular}
\end{adjustbox}
\caption{Examples of hallucination and mitigation cases.}
\label{tab:example}
\end{table}

\begin{table}[t]
\centering
\begin{adjustbox}{width=\linewidth}
\begin{tabular}{l}
\hline
\textbf{Case studies on hallucination and mitigation in report generation.} \\ \hline
\begin{tabular}[c]{@{}l@{}}
\includegraphics[width=3cm]{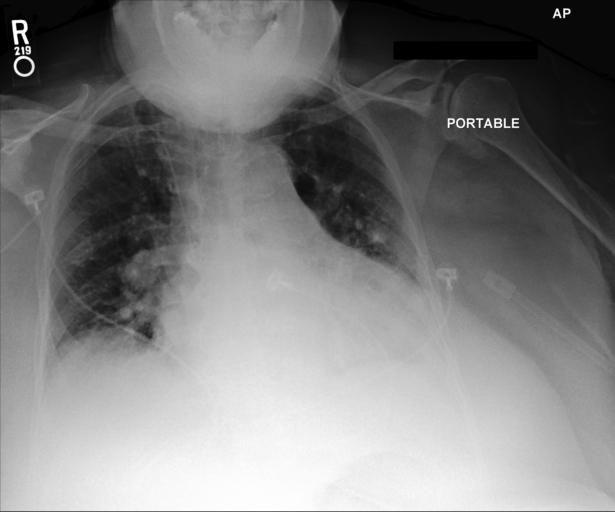}
\end{tabular}
\begin{tabular}[c]{@{}l@{}}
\textbf{Ground truth report:} ``The endotracheal tube is seen \\
terminating 5.6 cm above the carina, indicating proper\\
placement. The right internal jugular line ends at mid \\
superior vena cava (SVC). A feeding tube is visible \\
extending below the diaphragm into the stomach, although \\
the distal end is not within the radiographic view. There \\
are opacities in the left lower lung, suggesting a \\
combination of atelectasis and mild pleural effusion, \\
which is unchanged. The heart and mediastinal contours \\
appear mildly enlarged but stable.''\\[8pt]

\end{tabular}\\
\begin{tabular}[c]{@{}l@{}}
\textbf{LLaVA-Med}: ``Chest radiograph shows the \textcolor{ForestGreen}{cardiomediastinal silhouette}\\
is broadened, suggesting possible cardiomegaly or {\color{red}volume overload}. {\color{red}Hilar and} \\
{\color{red}mediastinal contours otherwise appear unremarkable.} {\color{red}No pulmonary edema},\\
free intraperitoneal air, acute rib or clavicular fracture is identified. There \\
is {\color{red}no right internal jugular catheter visible},
and {\color{red}no pneumothorax} is observed.''\\[8pt]
\textbf{GPT-4o}: ``The chest X-ray shows relevant measurements including the volume\\
of the lungs, the size of the heart, and the position of the diaphragm. \\
Additionally, the image may show any abnormalities or signs of symptoms \\
related to the chest, such as \textcolor{ForestGreen}{enlarged cardiac silhouette}, fluid accumulation, \\
or superimposed pneumonia.''\\[8pt]
\textbf{Ours}: ``The chest radiograph shows cardiomegaly, evidenced by an \textcolor{ForestGreen}{enlarged} \\
\textcolor{ForestGreen}{cardiac silhouette}, along with radiographic signs of pulmonary edema.\\
\textcolor{ForestGreen}{A central venous catheter is visible}, its tip projecting into the mid‑to‑lower\\
 portion of the superior vena cava.''\\[8pt]
\end{tabular}\\ \hline
\end{tabular}
\end{adjustbox}
\caption{Case studies highlighting hallucination and mitigation in medical report generation. The report generated by GPT-4o is generic and misses many details.}
\label{tab:report_example}
\end{table}

\section{Ablation details}\label{app:ablation}
Here, we provide a detailed explanation of the construction of each initialization method.
Suppose $h=min(m,n),t=\frac{h}{N}$
\begin{enumerate}
    \item \textbf{Ours (O)}:
    \begin{multline*}
        \mathcal{E}_r =
        \Bigl\{
            \bigl(U_{[:,\, k:k+d]},
            S_{[k:k+d,\,k:k+d]},
            V_{[k:k+d,\,:]}^\top\bigr)\\
            \;\big|\;
            k = (j-1)t, 
            j = 1, \dots, N
        \Bigr\}
    \end{multline*}

    \item \textbf{Principal (P)}:
    \begin{multline*}
        \mathcal{E}_r =
        \Bigl\{
            \bigl(U_{[:,\, k:k+d]},
            S_{[k:k+d,\,k:k+d]},
            V_{[k:k+d,\,:]}^\top\bigr)\\
            \;\big|\;
            k = (j-1)d, 
            j = 1, \dots, N
        \Bigr\}
    \end{multline*}

    \item \textbf{Minor (M)}:
    \begin{multline*}
        \mathcal{E}_r =
        \Bigl\{
            \bigl(U_{[:,\, k:k+d]},
            S_{[k:k+d,\,k:k+d]},
            V_{[k:k+d,\,:]}^\top\bigr)\\
            \;\big|\;
            k = h - jd, 
            j = 1, \dots, N
        \Bigr\}
    \end{multline*}

    \item \textbf{Random (R)}:
    \begin{multline*}
        \mathcal{E}_r =
        \Bigl\{
            \bigl(U_{[:,\, k:k+d]},
            S_{[k:k+d,\,k:k+d]},
            V_{[k:k+d,\,:]}^\top\bigr)\\
            \;\big|\;
            k = tj,
            t = \operatorname{random}\!\left(0, \frac{h}{d}-1\right),
            j = 0, \dots, N-1
        \Bigr\}
    \end{multline*}
\end{enumerate}


\subsection{Case Study for Hallucination and Mitigation}
\label{vfh_close_cases}
Table~\ref{tab:example} presents case studies from close-ended datasets on visual misinterpretation hallucinations. We provide the produced replies from our technique and baseline approach
in Cases 1 and 2 to visually show the efficacy of our strategy.
our method which enhances visual grounding through attention modification, successfully corrects the hallucinated responses. While it does not work for LLaVA-Med.
This study reinforces our findings that our mitigation method exhibit strength in hallucination mitigation, emphasizing the need for task-specific approach to improve Med-LVLM performance.

Also, Table~\ref{tab:report_example} shows case studies of open-ended report generation on visual misinterpretation hallucinations. In this example, our method effectively mitigates the hallucination
while even improving the recall of key findings. However, GPT-4o, while aiming to lower hallucination rates, significantly impact generation quality and recall, demonstrating the
challenges of balancing hallucination mitigation and report completeness.

\begin{table*}[t]
\centering
\resizebox{\textwidth}{!}
{
\begin{tabular}{l|cccc|c|cccc|c}
\toprule
\rowcolor[HTML]{fbe7b6}  & \multicolumn{5}{c|}{\textbf{MM-VisHal}} & \multicolumn{5}{c}{\textbf{CXR-VisHal}} \\
\cline{2-11}
\rowcolor[HTML]{fbe7b6} \multirow{-2}{*}{\textbf{LVLM}} & \textbf{Acc-A $\uparrow$} & \textbf{Acc-M$ \uparrow$} & \textbf{Acc-S $\uparrow$} & \textbf{Acc-R $\uparrow$} & \textbf{Acc $\uparrow$}
  & \textbf{Acc-A $\uparrow$} & \textbf{Acc-M $\uparrow$} & \textbf{Acc-S $\uparrow$} & \textbf{Acc-R $\uparrow$} & \textbf{Acc $\uparrow$} \\
\midrule
GPT-4o & \underline{0.775} & \underline{0.697} & \underline{0.708} & \underline{0.846} & \underline{0.741} & \underline{0.880} & \underline{0.595} & \underline{0.788} &\underline{0.921} & \underline{0.794}\\

LLaVA-NeXT 7B & 0.576 & 0.426 & 0.507 & 0.451 & 0.494 &
0.817 & 0.430  & 0.474 &0.362 & 0.518\\
LLaVA-NeXT 13B & 0.577 & 0.430 & 0.551 & 0.445 & 0.510 & 0.776 & 0.391  & 0.486 & 0.563 & 0.534 \\
MiniGPT-4 & 0.483 & 0.537 & 0.553 & 0.430 & 0.512& 0.341 &  0.301 & 0.573 &0.354 &  0.483 \\
\midrule
LLaVA-Med & 0.525 &  0.357 & 0.584 & 0.485 & 0.499 & 0.698 & 0.452 & 0.725 & 0.800 & 0.698 \\
LLaVA-Med-1.5 &  0.619 & 0.397 &  0.499 & 0.483 & 0.499 & 0.840 & 0.494 & 0.651 & 0.845 & 0.684\\
LLM-CXR & 0.486 & 0.460 & 0.513 &  0.314 &  0.461 &  0.681 & 0.504 & 0.743 & 0.403 & 0.675 \\
Med-Flamingo& 0.523 & 0.497 & 0.588 & 0.327 & 0.507 & 0.361  & 0.324  & 0.576 & 0.332 & 0.489 \\
CheXagent& 0.524 & 0.516 & 0.572 & 0.464 & 0.529 & 0.782 & 0.576 & 0.739 &0.851 & 0.739\\
\midrule
\ours& \textbf{0.883} & \textbf{0.798} & \textbf{0.896} & \textbf{0.847} & \textbf{0.854} & \textbf{0.893} & \textbf{0.702} & \textbf{0.839} & \textbf{0.916} & \textbf{0.825}\\
\bottomrule
\end{tabular} }
\caption{{Results on close-ended evaluation of visual misinterpretation hallucination}. We report Accuracy for each sub-type: Anatomy (\textbf{Acc-A}), Measurement (\textbf{Acc-M}), Symptom (\textbf{Acc-S}), Radiology Knowledge (\textbf{Acc-R}). We also report the overall accuracy (\textbf{Acc}). Higher accuracy in these evaluations indicates a stronger ability to resist hallucination. (\underline{underlined}: second-best, \textbf{Bold}: best)}
\label{tab:close_ended_VMH}
 \vspace{-0.1in}
\end{table*}

\section{Load Balancing Loss}
In standard MoE architectures \cite{fedus2022switch,dai2024deepseekmoeultimateexpertspecialization}, a balance loss, $\mathcal{L}_b$, is commonly employed to prevent routing collapse, thereby ensuring a uniform distribution of tokens across the available experts. This loss is formally defined as the dot product between the fractional load and the average routing probability for each expert:
\begin{align}
    \mathcal{L}_b &= \sum_{i=1}^E f_i P_i \label{eq:lb} \\
    f_i &= \frac{E}{kT} \sum_{t=1}^T \mathbf{1}\{\text{token } x_t \text{ is assigned to expert } i\}, \label{eq:f} \\
    P_i &= \frac{1}{T} \sum_{t=1}^T \text{softmax}(z^i(x_t))
\end{align}
Here, $T$ denotes the total number of tokens, and $\mathbf{1}(\cdot)$ is the indicator function. The term $f_i$ represents the normalized fraction of tokens explicitly routed to expert $i$, while $P_i$ quantifies the average router probability for that expert. By minimizing $\mathcal{L}_b$, the training objective explicitly promotes an equitable utilization of all experts. 

\section{Proof of Theoretical Results}
\newtheorem{Theorem2}{Theorem}

\subsection{Proof of Theorem~\ref{thm:single-align}}
\begin{tcolorbox}[colback=gray!20,colframe=gray]
\begin{Theorem}
Let $\eta_{\mathrm{FFT}}$ and $\eta_{\mathrm{LoRA}}$ denote the learning rates for
Full Fine-Tuning (FFT) and LoRA.
LoRA and Full FT behave equivalently when their initial weights satisfy
$\tilde{W}_0 \approx W_0$ and their scaled gradients match at every step,
i.e., $\eta_{\mathrm{LoRA}} \tilde{g}_t \approx \eta_{\mathrm{FFT}} g_t$.
See Eq.~\ref{def:eg} for formal definitions.
\end{Theorem}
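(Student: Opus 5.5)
We argue by induction on the optimization step $t$, comparing the two weight trajectories directly. Under plain gradient descent, full fine-tuning evolves as
\[
W_{t+1} = W_t - \eta_{\mathrm{FFT}}\, g_t ,
\]
and we want to show that the equivalent LoRA weight $\tilde{W}_t = W + sB_tA_t$ of Eq.~\ref{def:eg} obeys, to first order in the learning rate,
\[
\tilde{W}_{t+1} \approx \tilde{W}_t - \eta_{\mathrm{LoRA}}\, \tilde{g}_t .
\]
Once this recursion holds, the two hypotheses close the argument. The initial condition $\tilde{W}_0 \approx W_0$ is the base case. The step-matching condition $\eta_{\mathrm{LoRA}}\tilde{g}_t \approx \eta_{\mathrm{FFT}} g_t$ gives the inductive step, provided $g_t$ is evaluated at (approximately) the same point for both methods. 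That proviso is itself supplied by the induction hypothesis $\tilde{W}_t \approx W_t$, since the loss $L$ sees only the equivalent weight.

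The first task is to derive the LoRA recursion. By the chain rule through $\tilde{W} = W + sBA$, the factor gradients are
\[
\partial L/\partial B = s\,\tilde{g}\,A^\top, \qquad \partial L/\partial A = s\,B^\top \tilde{g}.
\]
Substituting the SGD updates of $B$ and $A$ into $s(B_{t+1}A_{t+1} - B_tA_t)$ and discarding the $O(\eta^2)$ cross term gives
\[
\tilde{W}_{t+1} - \tilde{W}_t \approx -\eta_{\mathrm{LoRA}}\, s^2\bigl(B_tB_t^\top g_t + g_t A_t^\top A_t\bigr).
\]
The right-hand side is $-\eta_{\mathrm{LoRA}}$ times the equivalent gradient of the earlier display (the one for $\tilde{g}_t$), where $g_t = \partial L/\partial \tilde{W}_t$. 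So the LoRA trajectory is precisely a descent on the equivalent weight with the projected gradient $\tilde{g}_t$, and it has the required form.

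Next we make "approximately" quantitative. Write $\delta_t = \|\tilde{W}_t - W_t\|$ and assume $g$ is $\beta$-Lipschitz in the weight. Then
\[
\delta_{t+1} \le \delta_t + \bigl\|\eta_{\mathrm{LoRA}}\tilde{g}(\tilde{W}_t) - \eta_{\mathrm{FFT}}\, g(\tilde{W}_t)\bigr\| + \eta_{\mathrm{FFT}}\,\beta\,\delta_t .
\]
The middle term is bounded by a per-step error $\epsilon_t$ given by the gradient-matching hypothesis. A discrete Grönwall argument then yields
\[
\delta_T \le (1+\eta_{\mathrm{FFT}}\beta)^T\Bigl(\delta_0 + \sum_t \epsilon_t\Bigr),
\]
so both conditions vanishing forces the trajectories to coincide over any finite horizon. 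This justifies calling the two methods equivalent.

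The main obstacle is the hypothesis-dependence of the gradient matching. The hypothesis $\eta_{\mathrm{LoRA}}\tilde{g}_t \approx \eta_{\mathrm{FFT}} g_t$ compares $\tilde{g}$ and $g$ at the LoRA point rather than the FFT point. If it is stated instead at the FFT iterate, we need the Lipschitz step above to transfer it, which is why the argument carries the $\beta$-term. We also need to keep the discarded second-order term $\eta^2 s^2\, \partial_B L\, \partial_A L$ uniformly small. We would handle this by assuming bounded gradients and a small learning rate, which is the standard regime in which the statement is meant to hold. Since the statement is itself phrased with $\approx$, this first-order, finite-horizon equivalence is the natural level of rigor.
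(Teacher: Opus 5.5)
Your proof is correct, and its skeleton is the paper's: induction on $t$, with $\tilde W_0\approx W_0$ as the base case and $\eta_{\mathrm{LoRA}}\tilde g_t\approx\eta_{\mathrm{FFT}}g_t$ driving the inductive step. The paper, however, stops there. It posits that under SGD the equivalent weight evolves exactly as $\tilde W_{t+1}=\tilde W_t-\eta_{\mathrm{LoRA}}\tilde g_t$, treats both hypotheses as equalities, and concludes $\tilde W_t=W_t$ for all $t$.

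You add two things the paper omits. First, you derive that recursion from the factor updates. This shows that $s^2(B_tB_t^\top g_t+g_tA_t^\top A_t)$ is exactly the first-order increment of $sB_tA_t$ and isolates the second-order remainder, which is what justifies the paper's equivalent-gradient formula in the first place. Second, your Gr\"onwall bound tracks how the approximate hypotheses propagate. It shows that with nonzero errors the equivalence holds only over a finite horizon, degrading like $(1+\eta_{\mathrm{FFT}}\beta)^T$, a caveat the paper's ``for all $t$'' conceals. The price is extra regularity (Lipschitz gradient, plus bounded gradients and small steps for the remainder) and a first-order conclusion. The paper's version is close to a tautology, but it is all that is used to motivate Theorem~\ref{thm:MoE-ali}.

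Three small corrections.
\begin{itemize}
\item The discarded term in $s(B_{t+1}A_{t+1}-B_tA_t)$ is $s\,\eta_{\mathrm{LoRA}}^2\,\partial_BL\,\partial_AL=\eta_{\mathrm{LoRA}}^2 s^3\, g_tA_t^\top B_t^\top g_t$, not $\eta^2 s^2\,\partial_BL\,\partial_AL$. Since $s^*=\sqrt{3n\eta/r}$ can be large, the power of $s$ matters when you claim this term is uniformly small.
\item You use $\tilde g$ for $\partial L/\partial\tilde W$ in the chain-rule line and then call the same object $g_t$. Fix one convention; the paper itself overloads $\tilde g$ between Eq.~\ref{def:eg} and the projected-gradient formula.
\item Your closing remark misattributes the $\beta$-term. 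In the inequality you wrote, it transfers the LoRA-point comparison, $g(\tilde W_t)$, to the FFT update, $g(W_t)$. It is not triggered by an FFT-iterate reading; that reading would need a different transfer, through $\tilde g$'s dependence on $g$, with constant $\eta_{\mathrm{LoRA}}s^2(\|A_t\|^2+\|B_t\|^2)\beta$. Meanwhile, the cross-trajectory reading (LoRA's $\tilde g_t$ at $\tilde W_t$ against FFT's $g_t$ at $W_t$) needs no Lipschitz assumption: $\delta_{t+1}\le\delta_t+\epsilon_t$ plus the remainder. In the paper's exact induction these readings coincide because $\tilde W_t=W_t$. Your same-point reading is the right one to keep, since it is what Theorem~\ref{thm:scale} actually controls.
\end{itemize}
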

\end{tcolorbox}
\begin{proof}
Define the LoRA effective weight as
$\tilde{W}_t = W_{\mathrm{init}} + s B_t A_t$
and its gradient as $\tilde{g}_t$.
Using SGD, the updates are:
\begin{align}
W_{t+1} &= W_t - \eta_{\mathrm{FFT}} g_t, \\
\tilde{W}_{t+1} &= \tilde{W}_t - \eta_{\mathrm{LoRA}} \tilde{g}_t.
\end{align}

\noindent\textbf{Base Case.}
At $t = 0$, we have $\tilde{W}_0 = W_0$.

\noindent\textbf{Inductive Step.}
Assume $\tilde{W}_t = W_t$ and the scaled gradients satisfy the alignment condition.
Then:
\begin{align}
\tilde{W}_{t+1} &= \tilde{W}_t - \eta_{\mathrm{LoRA}} \tilde{g}_t \\
&= W_t - \eta_{\mathrm{FFT}} g_t \\
&= W_{t+1}.
\end{align}
Thus, the weights remain identical for all $t$, establishing the alignment property.
\end{proof}

\subsection{Proof of Theorem~\ref{thm:MoE-ali}}
\begin{tcolorbox}[colback=gray!20,colframe=gray]
\begin{Theorem}
Let $\eta_{\mathrm{FFT}}$ and $\eta_{\mathrm{LoRA}}$ denote the learning rates employed in
Full FT MoE and LoRA MoE training.
For each expert $i \in \{1, \dots, N\}$, the two training procedures remain aligned when their
initial effective weights satisfy $\tilde{W}^{(0)}_{\,i} \approx W^{(0)}_{\,i}$ and their scaled gradients satisfy
$\eta_{\mathrm{LoRA}} \tilde{g}^t_i \approx \eta_{\mathrm{FFT}} g^t_i$ at each optimization step.
\end{Theorem}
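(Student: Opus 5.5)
The plan is to reduce the MoE statement to the single-model alignment result (the appendix version of Theorem~\ref{thm:single-align}) applied expert by expert. We fix a training trajectory and write the effective weight of expert $i$ in the LoRA-MoE as
\[
\tilde{W}^{t}_{i} = W^{(0)} - W_{\text{res}} + s B^t_i A^t_i .
\]
Here $W_{\text{res}}$ is the constant residual from Eq.~\ref{eq:wres-closed}. The corresponding full-rank upcycled expert is $W^t_i$, whose initialization is $W^{(0)}_i = W^{(0)}$. Both models share the same router $R(\mathbf{x})$, and the layer output decomposes as $\sum_{i\in S_k(x)} R(\mathbf{x})_i W^t_i \mathbf{x}$ (respectively with $\tilde{W}^t_i$). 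Hence, if $\tilde{W}^t_i = W^t_i$ for every $i$, the two forward maps coincide. This includes the router logits, since they depend only on $\mathbf{x}$ and $W_z$. So does every downstream quantity, including the loss.

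I would then argue by a joint induction on $t$, with the hypothesis covering all experts together with the router parameters.

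\textbf{Base case.} The hypothesis $\tilde{W}^{(0)}_i \approx W^{(0)}_i$ is supplied for each $i$; in our construction it is enforced by the residual compensation of Eq.~\ref{eq:wres-approx}. The routers are initialized identically.

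\textbf{Inductive step.} Assume the equalities hold at step $t$. Since the forward passes agree, the backpropagated signal reaching expert $i$ is identical in both models. Expert $i$ receives gradient only through the term $R(\mathbf{x})_i W_i \mathbf{x}$, so only tokens with $i\in S_k(x)$ contribute, weighted by the same router score. The router gradient is also identical, because it depends only on the experts' outputs. Each expert's update is therefore a single-model update of the form treated in Theorem~\ref{thm:single-align}. Using the hypothesis $\eta_{\mathrm{LoRA}}\tilde{g}^t_i \approx \eta_{\mathrm{FFT}} g^t_i$, we get
\[
\tilde{W}^{t+1}_i = \tilde{W}^t_i - \eta_{\mathrm{LoRA}}\tilde{g}^t_i = W^t_i - \eta_{\mathrm{FFT}} g^t_i = W^{t+1}_i .
\]
This holds for every $i$ simultaneously. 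Experts not selected on a batch have zero gradient in both models and stay equal. The router update is identical, which closes the induction.

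The main obstacle is the coupling between experts through the shared router and the shared loss. A naive per-expert application of the single-model theorem would treat the other experts as fixed, yet they change in the same step. I would handle this by making the inductive hypothesis a statement about the whole tuple $(\{\tilde{W}^t_i\}_i, W_z^t)$. The gradients of all experts and of the router at step $t$ are then functions of the common state at step $t$ only, so the simultaneous SGD updates factor expert-wise. A secondary subtlety is that $W_{\text{res}}$ is shared across experts and kept frozen. It must be absorbed into each $\tilde{W}_i$ as a constant offset that contributes no gradient, so that $\tilde{g}^t_i$ is exactly the chain-rule gradient through $sB_iA_i$ as in Eq.~\ref{eq:grad-surrogate}. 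Finally, since the hypotheses are approximate, I would note that the argument gives exact equality under exact hypotheses. It also gives a perturbation bound that accumulates the per-step mismatches when the hypotheses hold only approximately, provided the loss gradient is Lipschitz. I would only sketch this last part.
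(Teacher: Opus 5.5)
Your proposal is correct and follows essentially the same route as the paper's proof: a joint induction on $t$ whose hypothesis covers all experts and the router together. The base case comes from upcycling ($W^{(0)}_i = W^{(0)}$) and identical router initialization, and the inductive step combines $\eta_{\mathrm{LoRA}}\tilde{g}^t_i \approx \eta_{\mathrm{FFT}} g^t_i$ with the fact that identical forward passes give identical router updates. Your extra remarks go beyond the paper, and for the sketched perturbation bound note that hard top-$k$ selection makes the loss discontinuous at routing ties, so a global Lipschitz-gradient assumption would need replacing by something like a margin condition on the routing logits.
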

\end{tcolorbox}
\begin{proof}
We demonstrate that these conditions ensure that LoRA MoE replicates the behavior of
Full FT MoE, particularly with respect to the routing mechanism.

\noindent\textbf{Base Case ($t = 0$).}
Because the Full FT MoE is constructed by upcycling, all expert weights satisfy $W^{(0)}_{\,i} = W^{(0)}$.
Thus the initialization condition implies $\tilde{W}^{(0)}_{\,i} \approx W^{(0)}$.
As both models use the same initialization seed, the router parameters at $t=0$ are identical, and both architectures produce the same routing assignments.

\noindent\textbf{Inductive Hypothesis.}
Assume that at iteration $t$ the equality $\tilde{W}^t_i = W^t_i$ holds for all experts and
that the routers coincide.

\noindent\textbf{Inductive Step.}
Using the scaled gradient alignment condition, we obtain
\begin{align}
\tilde{W}^{t+1}_i
&= \tilde{W}^t_i - \eta_{\mathrm{LoRA}} \tilde{g}^t_i \\
&\approx W^t_i - \eta_{\mathrm{FFT}} g^t_i \\
&= W^{t+1}_i.
\end{align}
Since the routers receive identical inputs and expert outputs, their updated parameters remain equal:
\begin{align}
\mathrm{MoE}(\mathbf{x})
&= \sum_{i=1}^{N} R(\mathbf{x})_i W_i(\mathbf{x})
= \sum_{i=1}^{N} R(\mathbf{x})_i \tilde{W}_i(\mathbf{x}),
\end{align}
which matches the LoRA MoE output.
Therefore, the routers remain aligned at step $t+1$.
Induction confirms that this holds for all $t$, and hence the two MoE models exhibit equivalent behavior.
\end{proof}

\subsection{Proof of Theorem~\ref{thm:router-mom}}
\begin{tcolorbox}[colback=gray!20,colframe=gray]
\begin{Theorem}
Consider the i.i.d.\ logits $z^i(\mathbf{x})$ and let $S_k(x)$
denote the indices of the $k$ largest among them, where $k \le N/2$.
Define the MoE weights as
\begin{equation}
R(\mathbf{x})_i =
\begin{cases}
\dfrac{\exp(z^i(\mathbf{x}))}{\sum_{j \in S_k(x)} \exp(z^j(\mathbf{x}))}, & \text{if } i \in S_k(x), \\[6pt]
0, & \text{if } i \notin S_k(x).
\end{cases}
\end{equation}
Then, for any pair $i \neq j$, we have:
\begin{align}
\mathbb{E}[R(\mathbf{x})_i] &= \frac{1}{N}, \\
\mathrm{Var}(R(\mathbf{x})_i) &= \frac{N-k}{kN^2}.
\end{align}
\end{Theorem}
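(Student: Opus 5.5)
The plan is to treat the two moment identities separately: the mean follows from symmetry alone, and the variance comes from a second-moment computation that is exact only in a specific regime, which I will state explicitly.

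\textbf{Mean.} Because the logits $z^1(\mathbf{x}),\dots,z^N(\mathbf{x})$ are i.i.d., the joint law of $(R(\mathbf{x})_1,\dots,R(\mathbf{x})_N)$ is invariant under permutations of the expert indices. Hence $\mathbb{E}[R(\mathbf{x})_i]$ does not depend on $i$. For every input, the weights in Eq.~\ref{eq:mol_weights} are nonnegative and sum to one over $S_k(\mathbf{x})$. Taking expectations of $\sum_{i=1}^N R(\mathbf{x})_i = 1$ then gives $N\,\mathbb{E}[R(\mathbf{x})_i]=1$, so $\mathbb{E}[R(\mathbf{x})_i]=1/N$. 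The same exchangeability gives $\Pr[i\in S_k(\mathbf{x})]=k/N$, which the variance step uses.

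\textbf{Second moment.} Since $\mathrm{Var}(R(\mathbf{x})_i)=\mathbb{E}[R(\mathbf{x})_i^2]-1/N^2$, it suffices to compute $\mathbb{E}[R(\mathbf{x})_i^2]$. By symmetry,
\begin{equation*}
\mathbb{E}[R(\mathbf{x})_i^2]=\frac{1}{N}\,\mathbb{E}\Big[\sum_{j\in S_k(\mathbf{x})} R(\mathbf{x})_j^2\Big].
\end{equation*}
The inner sum is the squared norm of a probability vector supported on $k$ entries. By Cauchy--Schwarz it is at least $1/k$, with equality exactly when the $k$ selected weights all equal $1/k$. This yields the general bound
\begin{equation*}
\mathrm{Var}(R(\mathbf{x})_i)\ \ge\ \frac{1}{kN}-\frac{1}{N^2}=\frac{N-k}{kN^2}.
\end{equation*}

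\textbf{Main obstacle: equality.} For softmax weights over continuous i.i.d.\ logits, the renormalized weights are not exactly uniform on $S_k(\mathbf{x})$. The identity as stated therefore holds only in the regime where they are approximately uniform. I would justify this regime through the initialization setting in which the result is used (the residual matching of Eq.~\ref{eq:wres-opt} is performed at $t=0$). There the router $W_z$ is initialized with small scale, so the selected logits differ by $O(\|W_z\|)$ and a first-order expansion gives $R(\mathbf{x})_j = 1/k + O(\|W_z\|)$ for $j\in S_k(\mathbf{x})$. In that regime $R(\mathbf{x})_i$ is, to leading order, a scaled Bernoulli variable: it equals $1/k$ with probability $k/N$ and $0$ otherwise. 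Its variance is then exactly $\frac{1}{k^2}\cdot\frac{k}{N}(1-\frac{k}{N})=\frac{N-k}{kN^2}$.

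The proof would therefore present $\frac{N-k}{kN^2}$ as an exact lower bound in general, attained in the small-logit (equal-weight) limit, and note that the remaining gap is controlled by the spread of the top-$k$ logits. The hypothesis $k\le N/2$ is not needed for either moment; it matters only for the subsequent remark that $\mathrm{std}(R(\mathbf{x})_i)$ is comparable to $\mathbb{E}[R(\mathbf{x})_i]$.
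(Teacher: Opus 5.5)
Your proposal is correct, and it is more accurate than the paper's own proof. Your mean argument is the same as the paper's: exchangeability plus $\sum_i R(\mathbf{x})_i=1$.

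For the variance you take a different route, and yours is the sound one. The paper expands $(\sum_i R_i)^2=1$ into $1=N\,\mathbb{E}[R_i^2]+N(N-1)\,\mathbb{E}[R_iR_j]$. It then asserts ``by symmetry'' that $\mathbb{E}[R_iR_j]=\frac{k-1}{N(N-1)k}$, which is $\Pr[i,j\in S_k]=\binom{k}{2}/\binom{N}{2}$ multiplied by $1/k^2$. That factor $1/k^2$ is exactly the assumption that every selected weight equals $1/k$. Symmetry only shows that $\mathbb{E}[R_iR_j\mid i,j\in S_k]$ does not depend on the pair.

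Your identity $N\,\mathbb{E}[R_i^2]=\mathbb{E}\big[\sum_{j\in S_k}R_j^2\big]$, combined with Cauchy--Schwarz, shows instead that $N(N-1)\,\mathbb{E}[R_iR_j]\le\frac{k-1}{k}$. The inequality is strict for atomless i.i.d.\ logits once $k\ge 2$, because the top-$k$ logits are then a.s.\ distinct and the renormalized weights cannot be uniform. So the variance identity in the statement is false in general. The correct version is your lower bound, with equality for $k=1$ or in the equal-weight limit.

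A quick way to see that no distribution-free equality can hold: replace $z$ by $cz$ with $c>0$. This leaves $S_k$ and the mean unchanged. But $\mathrm{Var}(R_i)$ tends to $\frac{N-k}{kN^2}$ as $c\to 0$, and to $\frac{N-1}{N^2}$ as $c\to\infty$, where top-$k$ softmax collapses to top-$1$. For $N=4$, $k=2$ these limits are $1/16$ and $3/16$.

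Two sharpenings you could add.

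First, since $\sum_{j\in S_k}R_j=1$, you have $\sum_{j\in S_k}R_j^2-\frac1k=\sum_{j\in S_k}(R_j-\frac1k)^2$. This gives the exact formula $\mathrm{Var}(R_i)=\frac{N-k}{kN^2}+\frac1N\,\mathbb{E}\big[\sum_{j\in S_k}(R_j-\frac1k)^2\big]$, which makes your ``gap controlled by the spread of the top-$k$ logits'' precise. Under the scaling $z=c\zeta$ with $\zeta$ of finite variance, the gap is $O(c^2)$, since softmax is $\frac12$-Lipschitz and invariant under adding constants. For the limit itself, dominated convergence with $0\le R_i\le 1$ is cleaner than a first-order expansion.

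Second, nothing downstream needs the equality. The closed form for $W_{\text{res}}^{+}$ uses only $\mathbb{E}[R_i]=1/N$. The remark that $\mathrm{std}(R_i)$ is a substantial fraction of the mean when $2k<N$ only needs $\mathrm{std}(R_i)/\mathbb{E}[R_i]\ge\sqrt{(N-k)/k}>1$, which your lower bound already supplies.

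You are also right that the hypothesis $k\le N/2$ plays no role in either moment.
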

\end{tcolorbox}
\begin{proof}
Because the logits are identically distributed and independent, permutations of
their indices do not alter the joint distribution.
Since the top-$k$ selection also respects such symmetry, the induced weights
$R(\mathbf{x})_i$ are exchangeable, which implies
\begin{align}
\mathbb{E}[R(\mathbf{x})_i] = \mathbb{E}[R(\mathbf{x})_j], \quad \forall i,j.
\end{align}
Using $\sum_{i=1}^N R(\mathbf{x})_i = 1$, we obtain:
\begin{align}
\sum_{i=1}^N \mathbb{E}[R_i] = 1
\quad\Longrightarrow\quad
\mathbb{E}[R_i] = \frac{1}{N}.
\end{align}
For the variance, observe:
\begin{align}
\mathrm{Var}(R_i) = \mathbb{E}[R_i^2] - \frac{1}{N^2}.
\label{eq:var_v2}
\end{align}
Expanding the identity $\left(\sum_i R_i\right)^2 = 1$ yields
\begin{align}
1 = N\,\mathbb{E}[R_i^2] + N(N-1)\,\mathbb{E}[R_i R_j].
\label{eq:E_v2}
\end{align}
To compute $\mathbb{E}[R_i R_j]$, define
\begin{align}
y_i =
\begin{cases}
\exp(z_i), & i \in S_k,\\
0, & i \notin S_k,
\end{cases}
\quad\text{so that}\quad
R_i = \frac{y_i}{\sum_{\ell \in S_k} y_\ell}.
\end{align}
Thus,
\begin{align}
R_i R_j = \frac{y_i y_j}{\left(\sum_{\ell \in S_k} y_\ell\right)^2}.
\end{align}
Since the probability that both $i$ and $j$ appear in the top-$k$ is $\binom{k}{2}/\binom{N}{2}$,
and upon selection, each is normalized by a sum of $k$ exponentials, symmetry gives
\begin{align}
\mathbb{E}[R_i R_j]
= \frac{k-1}{N(N-1)k}.
\end{align}
Substituting this into Eq~\eqref{eq:E_v2},
\begin{align}
N\,\mathbb{E}[R_i^2]
= 1 - \frac{k-1}{k},
\end{align}
hence
\begin{align}
\mathbb{E}[R_i^2] = \frac{1}{Nk}.
\end{align}
Inserting into Eq~\eqref{eq:var_v2} yields
\begin{align}
\mathrm{Var}(R_i)
= \frac{1}{Nk} - \frac{1}{N^2}
= \frac{N-k}{k N^2}.
\end{align}
\end{proof}

\subsection{Proof of Theorem~\ref{thm:residual}}
\begin{tcolorbox}[colback=gray!20,colframe=gray]
\begin{Theorem}
The solution to the optimization problem for the residual weight $W_{\text{res}}$:
\begin{equation}
    W_{\text{res}}^{+}
= \arg\min_{W_{\text{res}}}
\ \mathbb{E}_{\mathbf{x}}
\left\|
W_{\text{res}} - s \sum_{i=1}^N R(\mathbf{x})_i B_i^{(0)} A_i^{(0)}
\right\|^{2}.
\end{equation}
Its closed-form minimizer is given by:
\[
    W_{\mathrm{res}}^{+} = \frac{s}{N} \sum_{i=1}^{N} B_i^{(0)} A_i^{(0)}.
\]
\end{Theorem}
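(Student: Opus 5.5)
The argument is the standard bias--variance decomposition for a mean-squared objective, followed by the router moments. Write $M(\mathbf{x}) := s\sum_{i=1}^N R(\mathbf{x})_i B_i^{(0)} A_i^{(0)}$, a random matrix through the router weights. The factors $B_i^{(0)}A_i^{(0)}$ are fixed at initialization and do not depend on $\mathbf{x}$. The objective $J(W_{\text{res}}) = \mathbb{E}_{\mathbf{x}}\|W_{\text{res}} - M(\mathbf{x})\|_F^2$ is a strictly convex quadratic in $W_{\text{res}}$, so any stationary point is the unique global minimizer.

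First, add and subtract $\bar M := \mathbb{E}_{\mathbf{x}}[M(\mathbf{x})]$ inside the norm and expand the Frobenius inner product:
\begin{equation*}
J(W_{\text{res}}) = \|W_{\text{res}} - \bar M\|_F^2 + 2\,\mathbb{E}_{\mathbf{x}}\langle W_{\text{res}} - \bar M,\ \bar M - M(\mathbf{x})\rangle_F + \mathbb{E}_{\mathbf{x}}\|M(\mathbf{x}) - \bar M\|_F^2 .
\end{equation*}
The cross term vanishes by linearity of expectation, because $W_{\text{res}} - \bar M$ is deterministic. The last term does not depend on $W_{\text{res}}$; it is the irreducible mismatch variance mentioned after Theorem~\ref{thm:router-mom}. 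Hence $J$ is minimized exactly at $W_{\text{res}}^{+} = \bar M$. Equivalently, setting the gradient $2(W_{\text{res}} - \bar M)$ to zero gives the same point.

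Second, compute $\bar M$. Pull the deterministic matrices out of the expectation:
\begin{equation*}
\bar M = s \sum_{i=1}^N \mathbb{E}_{\mathbf{x}}[R(\mathbf{x})_i]\, B_i^{(0)} A_i^{(0)} .
\end{equation*}
Then apply the first-moment identity $\mathbb{E}_{\mathbf{x}}[R(\mathbf{x})_i] = 1/N$ from Theorem~\ref{thm:router-mom}. This yields $W_{\text{res}}^{+} = \frac{s}{N}\sum_{i=1}^N B_i^{(0)} A_i^{(0)}$, as claimed. As a side remark, the variance identity of Theorem~\ref{thm:router-mom} lets one write the residual objective value as $\frac{s^2(N-k)}{kN^2}\sum_i\|B_i^{(0)}A_i^{(0)}\|_F^2$ plus covariance terms. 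This explains the motivation for the damping factor $\rho$, but it is not needed for the statement itself.

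The one point that needs care is justifying that expectation and the sum/matrix products commute, and that the first moment of Theorem~\ref{thm:router-mom} applies at initialization. The first requires $B_i^{(0)}A_i^{(0)}$ to be independent of $\mathbf{x}$, which holds by construction in \eqref{eq:initAB}. The second relies on the exchangeable (i.i.d.) logit assumption of Theorem~\ref{thm:router-mom}, which is reasonable for a freshly initialized router. I would state both explicitly as hypotheses. Integrability is trivial, since $0 \le R(\mathbf{x})_i \le 1$.
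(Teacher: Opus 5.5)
Your proposal is correct and follows the same route as the paper: identify the minimizer with $s\,\mathbb{E}_{\mathbf{x}}\bigl[\sum_{i} R(\mathbf{x})_i B_i^{(0)}A_i^{(0)}\bigr]$, pull the $\mathbf{x}$-independent factors out by linearity, and apply $\mathbb{E}_{\mathbf{x}}[R(\mathbf{x})_i]=1/N$ from Theorem~\ref{thm:router-mom}. The one difference is that your bias--variance decomposition proves that the mean minimizes the expected squared Frobenius error, whereas the paper attributes this first step to linearity of expectation without further justification.
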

\end{tcolorbox}

\begin{proof}
The symbol \(W_{\mathrm{res}}^{+}\) denotes the optimizer of the stated problem.
By applying the linearity of expectation, the solution can be expressed as
\begin{align}
W_{\mathrm{res}}^{+}
    &= s\, \mathbb{E}_{\mathbf{x}}
        \left[
            \sum_{i=1}^{N} R(\mathbf{x})_i B_i^{(0)} A_i^{(0)}
        \right] \label{eq:v1a}\\
    &= s \sum_{i=1}^{N}
        \mathbb{E}_{\mathbf{x}}\!\left[R(\mathbf{x})_i\right]
        B_i^{(0)} A_i^{(0)} \label{eq:v1b} \\
    &= \frac{s}{N} \sum_{i=1}^{N} B_i^{(0)} A_i^{(0)},
\end{align}
where Eq~\eqref{eq:v1a} follows from linearity and Eq~\eqref{eq:v1b} uses Theorem~\ref{thm:router-mom}.
\end{proof}

\subsection{Proof of Theorem~\ref{thm:scale}}
\begin{tcolorbox}[colback=gray!20,colframe=gray]
\begin{Theorem}
Given the zero-initialization condition \(B_0 = 0\) and
\(
A_0 \sim U\!\left(-\sqrt{\tfrac{6}{n}}, \sqrt{\tfrac{6}{n}}\right),
\)
and the effective LoRA gradient:
\[
    \tilde{g}_t^i
    = s^2 \left( B_t^i {B_t^i}^{\top} g_t^i
    + g_t^i {A_t^i}^{\top} A_t^i \right),
\]
the optimal scaling factor \(s\) that minimizes the gradient mismatch
\(\lVert \tilde{g}_t^i - \eta g_t^i \rVert\) is:
\[
    s = \sqrt{\frac{3n\eta}{r}},
\]
where
\(\eta = \eta_{\text{FFT}} / \eta_{\text{LoRA}}\)
is the learning rate ratio required for update alignment.
\end{Theorem}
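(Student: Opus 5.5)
The plan is to evaluate the effective-gradient formula at the zero-initialization point and replace the random matrix ${A_t^i}^\top A_t^i$ by its expectation. The mismatch $\tilde g_t^i-\eta g_t^i$ then becomes a scalar multiple of $g_t^i$, and choosing $s$ to make that scalar vanish gives $s^{*}$.

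\textbf{Step 1 (drop the $B$ term).} At the zero-initialization point $B_0^i=0$, so in the surrogate of Eq.~\eqref{eq:grad-surrogate} the term $B_t^i{B_t^i}^\top g_t^i$ vanishes. This leaves
\[
\tilde g_t^i = s^2\, g_t^i\, {A_t^i}^\top A_t^i .
\]
I will take this as the early-training regime the statement refers to. This is also the regime that the ``Practical notes'' argue is approached by damping through $s$ and $\rho$. Along the way I will assume $A_t^i\approx A_0^i$, since $A$ moves slowly at the start.

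\textbf{Step 2 (concentrate $A^\top A$).} The entries of $A_0^i\in\mathbb{R}^{r\times n}$ are i.i.d., zero-mean, with variance $\sigma^2$. Hence
\[
\mathbb{E}[{A_0^i}^\top A_0^i]=r\sigma^2 I_n .
\]
The off-diagonal terms are sums of $r$ zero-mean products, so they average out. The relative fluctuation of the diagonal is $O(1/\sqrt r)$. Substituting, $\tilde g_t^i\approx s^2 r\sigma^2\, g_t^i$.

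\textbf{Step 3 (optimize $s$).} The mismatch is now
\[
\|\tilde g_t^i-\eta g_t^i\|\approx |s^2 r\sigma^2-\eta|\,\|g_t^i\|,
\]
which is minimized, and in fact zeroed, by $s^{2}=\eta/(r\sigma^2)$. Here $\eta=\eta_{\mathrm{FFT}}/\eta_{\mathrm{LoRA}}$ comes from requiring $\eta_{\mathrm{LoRA}}\tilde g_t^i\approx\eta_{\mathrm{FFT}}g_t^i$, as in Theorem~\ref{thm:MoE-ali}. This argument is expert-wise, so the rank entering it is the per-expert rank. I will carry the statement's $r$ for consistency and note this point.

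\textbf{Main obstacle (the constant).} The claimed value $s^*=\sqrt{3n\eta/r}$ corresponds to $\sigma^2=1/(3n)$, which is the variance of $U(-1/\sqrt n,1/\sqrt n)$. That is the PyTorch default Kaiming-uniform bound with $a=\sqrt5$, which LoRA actually uses. Taken literally, the interval $U(-\sqrt{6/n},\sqrt{6/n})$ has variance $2/n$ and would give $s^*=\sqrt{n\eta/(2r)}$. I would therefore state the variance convention explicitly, using the default Kaiming initialization with variance $1/(3n)$, to recover the stated formula, and remark on how the constant changes under the other reading.

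I would also justify replacing the random quantity by its expectation. The route is a concentration bound, for example matrix Bernstein on ${A_0^i}^\top A_0^i$ restricted to the row space of $g_t^i$, to argue that $\mathbb{E}\|\tilde g-\eta g\|^2$ is minimized at the same $s$ up to $O(1/r)$ corrections.
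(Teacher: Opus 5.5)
Your Steps 1--3 are the paper's proof. You evaluate at $t=0$ so the $B$-term vanishes, replace ${A_0^i}^\top A_0^i$ by $r\sigma^2 I_n$, and solve $s^2 r\sigma^2=\eta$. Your remark on the constant is correct, and the paper glosses over it. Its proof simply asserts $\sigma_A^2=1/(3n)$, which is the variance of $U(-1/\sqrt n,1/\sqrt n)$ (Kaiming-uniform with $a=\sqrt5$). The interval $U(-\sqrt{6/n},\sqrt{6/n})$ in the statement has variance $2/n$. Your per-expert-rank caveat ($d=r/N$ rather than $r$) is also fair.

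The step that would fail is your justification of the substitution. Write $M={A_0^i}^\top A_0^i$. Its off-diagonal entries are small relative to the diagonal entrywise, but they do not ``average out'' in $g_0^iM$: each entry of that product sums $n-1$ of them. $M$ is a sum of $r$ rank-one terms in dimension $n\gg r$, so it has $n-r$ zero eigenvalues and $r$ eigenvalues near $n\sigma^2$. Hence $\|M-r\sigma^2 I_n\|_{\mathrm{op}}\ge r\sigma^2$ deterministically, and no matrix-Bernstein bound can make $M$ close to $r\sigma^2 I_n$.

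Restricting to the row space of $g_0^i$ does not help either. If $q$ is the rank of $g_0^i$, the component of $g_0^iM$ orthogonal to that row space has expected squared norm $\approx r(n-q)\sigma^4\|g_0^i\|_F^2$. Concretely, take $g=g_0^i$ and i.i.d.\ zero-mean entries with variance $\sigma^2$ and fourth moment $\mu_4$. Then
\[
\mathbb{E}\bigl\|s^2 g M-\eta g\bigr\|_F^2=\|g\|_F^2\Bigl[s^4 r\bigl(\mu_4+(n+r-2)\sigma^4\bigr)-2s^2\eta r\sigma^2+\eta^2\Bigr].
\]
This is minimized at $s^2=\eta\sigma^2/\bigl(\mu_4+(n+r-2)\sigma^4\bigr)\approx \eta/\bigl((n+r)\sigma^2\bigr)$, which is about $3\eta$ under the $1/(3n)$ convention. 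That differs from your $\eta/(r\sigma^2)$ by a factor of about $n/r$, not $1+O(1/r)$. Worse, at $s^*=\sqrt{3n\eta/r}$ the expected squared mismatch is about $(n/r)\,\eta^2\|g\|_F^2$, larger than at $s=0$.

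So the argument, yours and the paper's ``strong expectation approximation'' via the Law of Large Numbers alike, proves only that $s^*$ makes the effective gradient unbiased: $\mathbb{E}_{A_0}[\tilde g_0^i]=\eta\,g_0^i$, i.e.\ it zeroes $\|\mathbb{E}[\tilde g_0^i]-\eta g_0^i\|$. State the result in that form and drop the concentration step, since it cannot be carried out.
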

\end{tcolorbox}

\begin{proof}
We seek the optimal \(s\) by solving the minimization problem
\begin{equation} 
    s^{*} = \arg\min_{s} \lVert \tilde{g}_t^i - \eta g_t^i \rVert.
\end{equation}
We analyze the base case \(t=0\), where \(B_0 = 0\).
The objective simplifies to
\begin{equation} 
    \arg\min_{s} \left\lVert
        s^2 g_0^i A_0^{\top} A_0 - \eta g_0^i
    \right\rVert.
\end{equation}
To obtain a closed-form solution, we invoke the Law of Large Numbers 
and replace the random matrix \(A_0^{\top} A_0\) with its expected value.
The initialization scheme (Leaky ReLU variance \cite{xu2015empiricalevaluationrectifiedactivations}) provides entries in \(A_0\) with variance
\(\sigma_A^2 = 1/(3n)\). The expectation of the matrix product is
\begin{equation} 
    \mathbb{E}_{A_0}[A_0^{\top} A_0]
    = r \sigma_A^2 \mathbf{I}_{n \times n}
    = \frac{r}{3n} \mathbf{I}_{n \times n}.
\end{equation}
Substituting this expected value into the minimization objective and assuming
the optimal solution corresponds to setting the error term to zero, we require
\begin{equation} 
    s^2 g_0^i \left( \frac{r}{3n} \mathbf{I} \right)
    \approx \eta g_0^i.
\end{equation}
For this approximate equality to hold, the scalar coefficients must match:
\begin{equation} 
    s^2 \frac{r}{3n} = \eta
    \quad\Longrightarrow\quad
    s = \sqrt{\frac{3n\eta}{r}}.
\end{equation}
This result, derived from the first step and based on a strong expectation
approximation, provides the theoretically optimal scaling factor for gradient
alignment. Its applicability can be extended to subsequent steps due to the
typically small magnitude of relative weight changes in PEFT~\cite{hulora}.
\end{proof}

\begin{figure*}
  \centering
        \includegraphics[width=\textwidth]{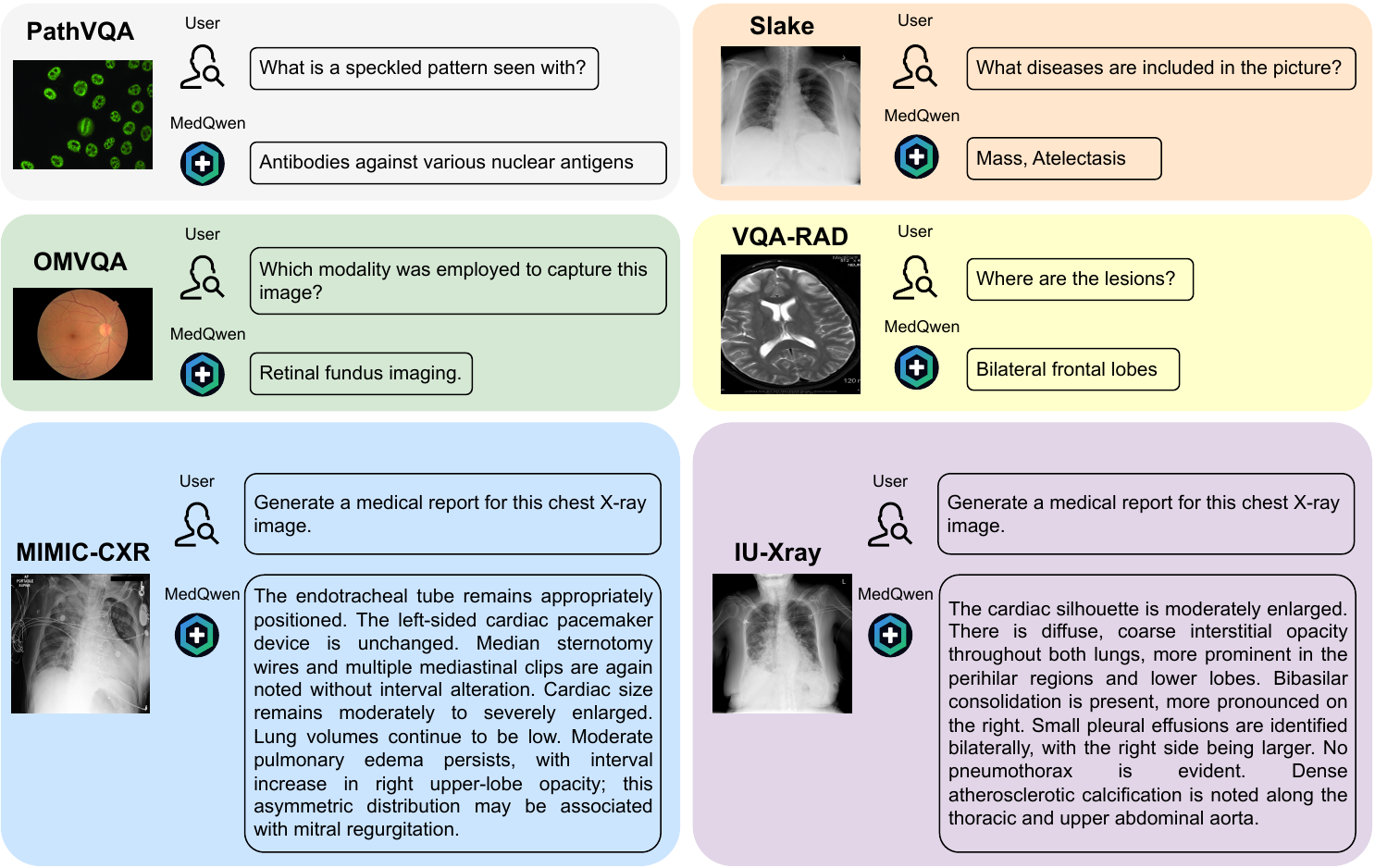}
        \captionof{figure}{A visualization of \ours, demonstrating its ability to process multiple modalities. The top two rows show VQA results, and the bottom row shows report generation.}
    \label{fig:vis}
    \vspace{-1em}
\end{figure*}

%% file: tables/data_summary.tex
\begin{table*}[!ht]
\centering %
\footnotesize
\resizebox{.8\textwidth}{!}{%
\begin{tabular}{|c|l|c|}
\hline
\textbf{Stage} & \textbf{Data Source} & \textbf{Sample Size} \\
\hline
Stage 1 & llava\_med\_alignment\_500k.json & 500K \\
\hline
Stage 2 & instruct\_60k\_inline\_mention & 60K \\
\hline
\multirow{2}{*}{Stage 3} & RAD-VQA, SLAKE, Path-VQA, OmniMedVQA, IU-Xray, & \multirow{2}{*}{2.4M} \\
&  MIMIC-CXR, Harvard-FairVLMed, Quilt-1M, PMC-OA & \\
\hline
\end{tabular}
}
\caption{Summary of Data Utilized Across Training Stages}
\label{tab:data_summary}
\end{table*}

%% file: tables/datasets.tex
\begin{table*}[h]
\tablestyle{-27pt}{1.1}
\addtolength{\tabcolsep}{+30pt}
\resizebox{\textwidth}{!}{%
\begin{tabular}{|c|c|c|c|}
\hline
\textbf{Category}                     & \textbf{Name} & \textbf{Modality}                                                                                                                                                                                  & \textbf{\# Image/QA text} \\ \hline
\multirow{10}{*}{Visual Question Answering }                 & SLAKE \cite{liu2021slake}     & CT, MRI, X-ray & 642/14028             \\ \cline{2-4}
& VQA-RAD~\cite{lau2018dataset}   & CT, MRI, X-ray & 315/3515 \\ \cline{2-4}
& PathVQA~\cite{he2020pathvqa}        & Histopathology                 & 4998/32799              \\ \cline{2-4}
& OmniMedVQA~\cite{hu2024omnimedvqa}      & \begin{tabular}[c]{@{}c@{}}
CT, MRI, X-ray, \\ 
Ultrasound, Fundus, \\
Histopathology, OCT, \\
Dermoscopy, Colposcopy, \\
Digital Photography, \\ 
Infrared Reflectance Imaging,\\
Endoscopy, Microscopy Images\end{tabular}                                                                        & 118010/127995              \\ \cline{2-4}
& Quilt-1M~\cite{ikezogwo2023quilt}         & Histopathology                                                                                         & 1M/1M            \\ \cline{2-4}
& PMC-OA~\cite{lin2023pmc}       & \begin{tabular}[c]{@{}c@{}}CT, MRI, X-ray, \\
Ultrasound, Endoscopy, \\
Microscopy Images
\end{tabular}                                                                                     & 1.6M/1.6M   \\ \cline{2-4}

& Harvard-FairVLMed~\cite{luo2024fairclip}    & Fundus                                                     & 10000/10000                \\ \hline
\multirow{2}{*}{Report Generation}                          & IU-Xray~\cite{demner2016preparing}         & X-ray                                                                                                       & 8121/3996               \\ \cline{2-4}
& MIMIC-CXR~\cite{johnson2019mimic}         & X-ray                                            & 227827/227835                \\ \hline
Image Classification    & UniMed~\cite{khattak2024unimed}         & \begin{tabular}[c]{@{}c@{}}
CT, MRI, X-ray,\\
Ultrasound, Histopathology
\end{tabular}                                                                                                          & 5.3M/5.3M                \\ \hline
\end{tabular}
}
\caption{An overview of the datasets used in this study.}%
\label{tab:dataset_info}
\end{table*}
        

%% file: tables/hyperparameter.tex
\begin{table}[ht]
\centering
\small
\resizebox{.45\textwidth}{!}{
\begin{tabular}{l|c}
\toprule
\textbf{Hyperparameter}                     & \textbf{Visual Question Answering} \\ \midrule
\textbf{Batch Size}                         & 16                    \\
\textbf{Rank}                               & 32                    \\
\textbf{Alpha}                              & 64                    \\
\textbf{Optimizer}                          & AdamW                 \\
\textbf{Warmup Steps}                       & 100                   \\
\textbf{Dropout}                            & 0.05                  \\
\textbf{Learning Rate}                      & 1e-4                  \\
\bottomrule
\end{tabular}}
\caption{Hyperparameters of the VQA task for \ours.}
\label{tab:vqa_hyper}
\end{table}

\begin{table}[ht]
\centering
\small
\resizebox{.45\textwidth}{!}{
\begin{tabular}{l|c}
\toprule
\textbf{Hyperparameter}                     & \textbf{Medical image classification} \\ \midrule
\textbf{Batch Size}                         & 256                    \\
\textbf{Rank}                               & 8                    \\
\textbf{Alpha}                              & 16                    \\
\textbf{Optimizer}                          & AdamW                 \\
\textbf{Warmup Steps}                       & 100                   \\
\textbf{Dropout}                            & 0.05                  \\
\textbf{Learning Rate}                      & 1e-4                  \\
\bottomrule
\end{tabular}}
\caption{Hyperparameters of the image classification task.}
\label{tab:cl_hyper}
\end{table}

%% file: tables/rout.tex
\begin{table}[h]
\centering
\caption{Comparison of routing strategies on average performance.}
\setlength{\tabcolsep}{5pt}
\footnotesize
\begin{tabular}{lc}
\toprule
\textbf{Routing Strategy} & \textbf{Avg} \\
\midrule
Ours (top-$k=2$)        & \textbf{68.24} \\
Top-$p$ ($p=0.25$)      & 66.40 \\
Top-$k$ + Shared Expert & 65.67 \\
\bottomrule
\end{tabular}
\label{tab:routing_comparison}
\end{table}

%% file: tables/qwen_pro.tex
\begin{table}[ht]
\centering
\caption{MedQwen-e vs properly-scaled MedQwen.}
\resizebox{\columnwidth}{!}{%
\begin{tabular}{lccccc}
\toprule
\textbf{Method} & \textbf{IU-Xray} & \textbf{MIMIC-CXR} & \textbf{Harvard-FVLM} & \textbf{Quilt-1M} & \textbf{PMC-OA} \\
\midrule
\textbf{MedQwen}   & 90.33 & 84.68 & 88.43 & 73.74 & 66.32 \\
\textbf{MedQwen-e} & 90.29 & 84.53 & 88.38 & 73.51 & 66.22 \\
\bottomrule
\end{tabular}%
}
\label{tab:qwen_pro}
\end{table}